\documentclass{article}

\usepackage{microtype}
\usepackage{graphicx}
\usepackage{subcaption}
\usepackage{booktabs} 
\usepackage{bm}
\usepackage{hyperref}
\usepackage{multirow}

\usepackage[accepted]{icml2026}

\usepackage{amsmath}
\usepackage{amssymb}
\usepackage{mathtools}
\usepackage{amsthm}

\usepackage[capitalize,noabbrev]{cleveref}

\usepackage{booktabs}  
\usepackage{amssymb}   
\usepackage{pifont}    
\usepackage{xcolor}    

\definecolor{checkgreen}{rgb}{0.13, 0.6, 0.13}
\definecolor{crossred}{rgb}{0.8, 0.1, 0.1}

\newcommand{\cmark}{\textcolor{checkgreen}{\checkmark}}
\newcommand{\xmark}{\textcolor{crossred}{\ding{55}}}

\theoremstyle{plain}
\newtheorem{theorem}{Theorem}[section]
\newtheorem{proposition}[theorem]{Proposition}

\theoremstyle{definition}

\theoremstyle{remark}



\usepackage{algorithm}
\usepackage{algpseudocode}
\algrenewcommand\alglinenumber[1]{\scriptsize #1:}

\usepackage[textsize=tiny]{todonotes}

\icmltitlerunning{Beyond Continuity: Simulation-free Reconstruction of Discrete Branching Dynamics from Single-cell Snapshots}

\begin{document}

\twocolumn[
  \icmltitle{Beyond Continuity: Simulation-free Reconstruction of Discrete Branching Dynamics from Single-cell Snapshots}



  \icmlsetsymbol{equal}{*}

  \begin{icmlauthorlist}
    \icmlauthor{Junda Ying}{bicmr}
    \icmlauthor{Yuxuan Wang}{data}
    \icmlauthor{Bowen Yang}{sms}
    \icmlauthor{Peijie Zhou}{cqb,cmlr,bigdata,ai4s}
    \icmlauthor{Lei Zhang}{bicmr,sms,cqb,cmlr}
  \end{icmlauthorlist}

  \icmlaffiliation{bicmr}{Beijing International Center for Mathematical Research, Peking University}
  \icmlaffiliation{sms}{School of Mathematical Sciences, Peking University}
  \icmlaffiliation{data}{Center for Data Science, Peking University}
  \icmlaffiliation{cqb}{Center for Quantitative Biology, Peking University}
  \icmlaffiliation{cmlr}{Center for Machine Learning Research, Peking University}
  \icmlaffiliation{bigdata}{National Engineering Laboratory for Big Data Analysis and Applications, Beijing}
  \icmlaffiliation{ai4s}{AI for Science Institute, Beijing}

  \icmlcorrespondingauthor{Lei Zhang}{zhangl@math.pku.edu.cn}
  \icmlcorrespondingauthor{Peijie Zhou}{pjzhou@pku.edu.cn}

  \icmlkeywords{Machine Learning, ICML}

  \vskip 0.3in
]



\printAffiliationsAndNotice{}  

\begin{abstract}
Inferring cellular trajectories from destructive snapshots is complicated by the challenges of stochasticity and non-conservative mass dynamics such as cell proliferation and apoptosis. Existing unbalanced Optimal Transport (OT) methods treat mass as a continuous fluid, performing inference at the population level. However, this macroscopic view often fails to capture the discrete, jump-like nature of birth-death events at single-cell resolution, which is essential for understanding lineage branching and fate decisions.  We present \textbf{Unbalanced Schrödinger Bridge (USB)}, a simulation-free framework for learning underlying dynamics that effectively integrates both stochastic and unbalanced effects which also models the discrete, jump-like birth–death dynamics at single-cell resolution. Theoretically, USB provides a tractable solution to the Branching Schrödinger Bridge (BSB) problem, offering a rigorous microscopic interpretation where individual cells undergo both Brownian motion and discrete birth-death jumps. Technically, the method implements an efficient solver by introducing a simulation-free training objective that effectively scales to high-dimensional omics data. Empirically, we demonstrate on both simulated and real-world datasets that USB not only achieves trajectory reconstruction performance better than or comparable to deterministic baselines but also uniquely enables realistic discrete simulation of birth-death dynamics at single-cell resolution.
\end{abstract}

\section{Introduction}
Recovering continuous underlying dynamics from snapshot observations is a critical challenge in single-cell biology. Due to the destructive nature of single-cell sequencing protocols, methods must infer temporal evolution without access to longitudinal trajectories of individual cells. While Optimal Transport (OT) \citep{kantorovich1942translocation,benamou2000computational} has become a standard framework for this task, traditional OT seeks deterministic ODE flows between balanced probability distributions. Consequently, it fails to capture the inherent stochasticity and unbalanced nature of cellular processes driven by high biological noise and cell proliferation/apoptosis.

The Schrödinger Bridge (SB) problem \cite{schrodinger1932sur,SBsurvey} is employed to model this stochasticity. It models cellular dynamics as SDEs connecting two balanced probability distributions, rather than ODEs, thereby explicitly introducing stochasticity. Parallel to the need for stochastic modeling, addressing varying cell numbers is essential \cite{TIGON}. Both standard OT and SB assume mass conservation between time points, a condition rarely met in proliferating biological systems. To address this issue, researchers have proposed various extensions of OT for two unbalanced measures \cite{UOT,wang2025joint,wfr_fm}. While effective at handling unbalanced population between snapshots, these standard OT-based methods often lack the capability to naturally incorporate stochasticity.

To simultaneously model stochasticity and unbalanced effects, approaches based on dynamic Regularized Unbalanced Optimal Transport (RUOT) \citep{DeepRUOT} or Schrödinger Bridge with coffin states \citep{UDSB} have been developed. While theoretically capable of modeling both aspects, these methods rely on computationally costly NeuralODE \citep{NeuralODE} or Iterative Proportional Fitting (IPF), rendering them computationally challenging for large data.

To scale dynamical modeling to large data, recent advances have successfully introduced simulation-free paradigms--flow matching \citep{cfm_lipman}--for solving standard OT problems, along with their stochastic or unbalanced extensions \citep{cfm_tong,tong2023simulationfree,wfr_fm}. These methods have rendered dynamical modeling efficient and scalable. However, there currently exists no unified framework that is simultaneously stochastic, unbalanced, and simulation-free.

Furthermore, we highlight a critical limitation in current methods modeling unbalanced mass: they uniformly treat mass as a continuously varying quantity. While natural within frameworks like flow matching or NeuralODE, this assumption overlooks a fundamental characteristic of cellular proliferation and apoptosis: cell numbers are discrete values that change through jumps.

To address these challenges, we present Unbalanced Schrödinger Bridge (USB), a simulation-free framework for learning underlying dynamics that accounts for both stochastic and unbalanced effects, and explicitly permits discrete birth-death dynamic simulations. Our contributions are summarized as follows:

\begin{itemize}
\item We propose USB, a novel framework that unifies the stochasticity with the unbalance through the lens of Branching Schrödinger Bridge problem (BSB).
\item We address the computational bottleneck of stochastic unbalanced modeling by developing a general simulation-free unbalanced score matching framework.
\item We demonstrate that USB consistently recovers ground-truth dynamics in complex landscapes and provides discrete, single-cell resolution birth-death simulations of proliferation and apoptosis.
\end{itemize}

\section{Related Works}
\textbf{SB and Unbalanced Extensions.} 
Many numerical algorithms have been developed to solve Schrödinger Bridge \citep{schrodinger1932sur} problem (SB) \citep{bortoli2021diffusion,shi2024diffusion,bunne2023schrodinger,bunne2023learning,kim2024discrete,wang2021deep,tong2023simulationfree,peluchetti2024bm2,somnath2023aligned,gushchin2024lightopt,garg2024soft,debortoli2024sbf,shen2024multimarginal,noble2023tree}. Common paradigms include converting SB into a regularized optimal transport (ROT) problem \citep{follmer1988random,SBsurvey,pavon2021datadriven,tong2023simulationfree}, or stochastic optimal control (SOC) problem \citep{chen2016relation,chen2021likelihood,liu2024generalized,tang2025branched}. Recently, unbalanced extensions based on coffin state \citep{chen2025USB,UDSB} or branching process \citep{BSB} have also been proposed, but \textbf{efficient simulation-free solvers are still lacking}. USB is a \textbf{simulation-free} solver based on the latter.

\textbf{Flow Matching and Score Matching.} 
Flow matching \citep{cfm_lipman,liu2022flow,pooladian2023multisample,albergo2022building} is a simulation-free generative framework for learning deterministic ODE flows, and it can be coupled with optimal transport (OT) for efficiently learning dynamics \citep{cfm_tong,klein2024genot,rohbeck2025modeling}. To accommodate more complex dynamics, prior works have introduced score-based stochastic extensions \citep{sohl-dickstein2015deep,song2019generative,song2020improved,song2021sde,ho2020ddpm,winkler2023sb,dhariwal2021beatgans,tong2023simulationfree,tang2025branched,lee2025mmsfm}, unbalanced extensions handling unbalanced marginals \citep{UOT,cao2025taming,corso2025composing,wang2025joint,wfr_fm}, and other generalizations \citep{kapusniak2024metric,zhang2024trajectory,meta_flow_matching,petrovic2025curly}. However, existing matching-type methods \textbf{do not jointly model stochasticity and unbalanced marginals}. USB introduces an \textbf{unbalanced score matching framework} that bridges this gap.

\textbf{Single-cell Trajectory Inference in Unbalance and Stochastic Setting.}
Single-cell trajectory inference in the multi-time points setting has been tackled with NeuralODEs \citep{trajectorynet,mioflow,scnode,TIGON,gu2025partially,choi2024scalable} and optimal transport (OT) \citep{waddingot,moscot,halmos2025dest,banerjee2024efficient}, together with their stochastic \citep{action_matching,tong_action,albergo2023stochastic,Linwei2024governing,maddu2024inferring,PRESCIENT,GeofftrajectoryMFLP,gwot,shi2024diffusion,dyn_sb_koshizuka2023neural,bunne2023schrodinger,chen2021likelihood,jiang2024physics,DeepRUOT,sun2025variational} and unbalanced variants \citep{action_matching,tong_action,UOT,wang2025joint,wfr_fm}. Some also used branching SDEs \citep{GeofftrajectoryVentre}, but require lineage trees information. Flow matching can be applied to these approaches to improve scalability and stability \citep{cfm_tong,rohbeck2025modeling,klein2024genot,tong2023simulationfree,lee2025mmsfm,kapusniak2024metric,meta_flow_matching}. However, the field still lacks \textbf{a simulation-free matching framework that jointly models stochasticity, unbalance and the discrete birth–death dynamics which need no priors.}

\section{Preliminaries}
\label{sec:Preliminaries}
\textbf{Setup.} Inspired by single-cell dynamics inference, consider two unnormalized measures $\mu_0(\bm{x})$, $\mu_1(\bm{x})$, also denoted as $\mu_0, \mu_1$, at time $t=0$ and $t=1$ respectively defined over $\mathcal{X} \subseteq \mathbb{R}^d$. Let $\mathcal{M}_+(\mathcal{X})$ represent the set of all absolutely continuous finite measures supported on $\mathcal{X}$. The total mass of $\mu_0$ and $\mu_1$ may be different. We aim to learn the \textbf{most likely stochastic process bridging $\mu_0$ and $\mu_1$}. 

To characterize changes in mass, two modeling paradigms are commonly used:
\begin{itemize}
    \item The first employs weighted particles, assigning each particle a continuously varying mass weight and describing the temporal evolution of the measure with a Fokker–Planck equation that includes a source term; we refer to this as a \textbf{continuous measure flow}.
    \item The second employs \textbf{branching processes}, using particle division and death to simulate jump-like changes in mass, with the particle count modeling discrete mass. For single-cell birth–death dynamics, it offers superior biological interpretability.
\end{itemize}

\textbf{Continuous Measure Flows.} Taking both stochastic and unbalanced effect into account, consider a time-dependent measure path $\rho : \mathbb{R}^d \times[0,1] \to \mathbb{R}_+$, a time dependent vector field $\bm{u} :  \mathbb{R}^d \times[0,1] \to \mathbb{R}^d$, and a time dependent growth rate $g : \mathbb{R}^d \times[0,1] \to \mathbb{R}$. A measure can be represented by a population of weighted particles \(\{(\bm{x}_i,m_i)\}\). \(\bm{x}\) stands for the position, and \(m\) stands for the weight or say mass. Starting from the initial measure \(\rho_0\), the measure path is generated by a SDE 

\begin{equation}
\label{SDE}
\left \{
\begin{aligned}
&\mathrm{d}\bm{x}_t = \bm{u}_t(\bm{x}_t)\mathrm{d}t+\nu\mathrm{d}\bm{W}_t\\
&\mathrm{d}\operatorname{ln}m_t=g_t(\bm{x}_t)
\end{aligned}
\right .
\end{equation}

where \(\bm{W}_t\) is the standard Brownian motion in \(\mathbb{R}^d\), and \(\nu \in \mathbb{R}\) is the diffusion parameter. The measure path satisfies the Fokker-Planck equation with source term
\begin{equation}
\label{marginal FPK equation with source term}
\partial_t\rho_t(\bm{x})+\nabla_{\bm{x}} \cdot (\bm{u}_t(\bm{x})\rho_t(\bm{x})) = \frac{\nu^2}{2}\Delta_{\bm{x}}\rho_t(\bm{x})+ g_t(\bm{x})\rho_t(\bm{x}) 
\end{equation}

\textbf{Branching Brownian Motion.} Branching Brownian motion  (BBM) is a prototypical example of branching processes. It can be described by a doublet \((\nu,\bm{q})\) where \(\nu\in \mathbb{R}\) is the \textbf{diffusion parameter}, \(\bm{q}\in\mathcal{M}_{+}(\mathbb{N})\) is an unnormalized measure supported on natural numbers with finite total measure called \textbf{branching mechanism}. The total measure of branching mechanism \(\lambda = \sum_{k\in\mathbb{N}}\bm{q}_k\) is called \textbf{branching rate}, and the normalized branching mechanism \(\bm{p}=\bm{q}/\lambda\) is called \textbf{offspring distribution}. Under BBM, a particle is equipped with an exponential clock of rate \(\lambda\). It evolves according to Brownian motion with diffusion parameter \(\nu\) until time \(\tau\sim Exp(\lambda)\). At time \(\tau\), the particle branches into \(k\sim \bm{p}\) particles. (\(k=0\) means that the particle vanishes). After branching, the \(k\) new particles undergo BBM independently. Restricting particles to split into two or die makes BBM well aligned with the microscopic dynamics of cell division and apoptosis. Its allowance for mass jumps also makes it a natural choice for single-cell modeling.

\subsection{Dynamic OT via Flow Matching} 
Without stochasticity and unbalanced mass, dynamic OT provides a principled framework for interpolating between two probability measures by \(\mathcal{W}_2\) geodesic \(\{\rho_t\}_{t\in[0,1]}\)\citep{benamou2000computational}. Particles are transported by the ODE flow generated by \(\bm{u}_t\) without growth. The equation (\ref{marginal FPK equation with source term}) reduces to continuity equation \(\partial_t\rho_t+\nabla_{\bm{x}} \cdot (\bm{u}_t\rho_t) = 0\). By introducing flow matching \citep{cfm_lipman}, recent work \citep{cfm_tong} solves the dynamic OT in a simulation-free manner. It parameterized a neural network \(\bm{v}_{\theta}(t,\bm{x})\) to approximate the velocity field by minimizing a regression loss
\[
    \mathcal{L}_{\mathrm{FM}}(\theta) = \mathbb{E}_{t, \bm{x} \sim \rho_t} \big[ \Vert \bm{v}_\theta(t,\bm{x}) - \bm{u}_t(\bm{x}) \Vert_2^2 \big].
\]
One key observation is that though the marginal probability path \(\{\rho_t\}\) is intractable, one can introduce tractable conditional paths \(\rho_t(\cdot|\bm{z})\) and conditional velocity field \(\bm{u}_t(\cdot|\bm{z})\) w.r.t some conditioning variable \(\bm{z}\). The minimization problem is equivalent to minimizing a conditional version of the loss
\[
    \mathcal{L}_{\mathrm{CFM}}(\theta) = \mathbb{E}_{t, \bm{z}, \bm{x} \sim \rho_t(\cdot|\bm{z})} \big[ \Vert\bm{v}_\theta(t,\bm{x}) - \bm{u}_t(\bm{x}|\bm{z}) \Vert_2^2 \big].
\]
One can choose \(\bm{z}=(\bm{x}_0,\bm{x}_1)\) drawn from the static OT \citep{kantorovich1942translocation} coupling \(\gamma(\bm{x}_0,\bm{x}_1)\) between \(\mu_0\) and \(\mu_1\), and use conditional Gaussian path \(\mathcal{N}(t\bm{x}_1+(1-t)\bm{x}_0,\nu^2\mathrm{I})\) for determining \(\bm{u}_t(\cdot|\bm{z})\). The resulting flow recovers the dynamic OT flow. 

We point out that the algorithm consists of two important part -- \textbf{the coupling} and \textbf{the conditional path}. One can design specific coupling and conditional path to approximate other flows instead of dynamic OT flow.

\subsection{Unbalanced Effect Model with WFR Metric}
To interpolate unbalanced source and target, previous works \citep{chizat2018interpolating,chizat2018unbalanced,liero2018optimal} defined WFR metric as 
\begin{equation}
\label{Dynamic WFR}
\begin{aligned}
&\text{WFR}_{\delta}^2(\mu_0,\mu_1) =\\
&\inf_{\rho,g,\bm{u}} \int_0^1\int_{\mathcal{X}}  \, \frac{1}{2}(\Vert \bm{u}(\bm{x},t)\Vert_2^2+\delta^2 \Vert g(\bm{x},t)\Vert_2^2)\rho_t(\bm{x})\mathrm{d} \bm{x}\mathrm{d}t \\
&\text{s.t.} \ \ \ \ \ \ \partial_t\rho+\nabla_{\bm{x}}\cdot(\rho \bm{u})=\rho g,\ \rho_0=\mu_0,\ \rho_1=\mu_1,
\end{aligned}
\end{equation}
This minimization problem is called dynamic WFR. Similar to dynamic OT, it also has a static form
\begin{equation}
\label{static WFR}
\begin{aligned}
\text{WFR}_{\delta}^2(\mu_0,\mu_1)= 2\delta^2\{\inf_{(\gamma_0,\gamma_1)} \int_{\mathcal{X}^2} \Big(\gamma_0(\bm{x},\bm{y})+\gamma_1(\bm{x},\bm{y})\\
-2\sqrt{\gamma_0(\bm{x},\bm{y})\gamma_1(\bm{x},\bm{y})}\operatorname{\overline{cos}}(\frac{\Vert\bm{x}-\bm{y}\Vert_2}{2\delta})\Big) \mathrm{d} \bm{x}\mathrm{d} \bm{y}\}
\end{aligned}
\end{equation}
where \((\gamma_0, \gamma_1) \in \left(\mathcal{M}_+(\mathcal{X}^2)\right)^2 \) satisfying marginal constraints \(\int_\mathcal{X}\gamma_0(\bm{x},\bm{y})\mathrm{d} \bm{y} = \mu_0(\bm{x}), \int_\mathcal{X}\gamma_1(\bm{x},\bm{y})\mathrm{d} \bm{x} = \mu_1(\bm{y})\) is called the \textbf{semi-coupling}. It is an unbalanced extension of the OT coupling. For fixed pair \((\bm{x},\bm{y})\), $\gamma_0 (\bm{x},\bm{y})$ represents the mass at the initial time sent from $\bm{x}$, while $\gamma_1(\bm{x},\bm{y})$ represents the corresponding mass at the final time received by $\bm{y}$. \(\operatorname{\overline{cos}}(x)=\cos(\min\{x,\frac{\pi}{2}\})\).

Based on these results, \citep{wfr_fm} designed a flow matching scheme to solve the problem above. They use static WFR coupling as the coupling and travelling Dirac as the conditional path. The resulting flow recovers the WFR geodesic between the source and target.

\subsection{Stochastic Effect Model with Schrodinger Bridge.}
To allow stochastic dynamics bridging two probability measures, \citep{schrodinger1932sur} proposed the Schrödinger bridge problem. It asks to find a most likely stochastic process \(\mathbb{P}^{\star}\) bridging normalized \(\mu_0\) and \(\mu_1\) w.r.t a reference stochastic process \(\mathbb{Q}\).
\begin{equation}
\label{eq:SB}
\mathbb{P}^{\star} = \underset{\mathbb{P}:p_0=\mu_0,p_1=\mu_1}{\arg \min} \text{KL}(\mathbb{P} \| \mathbb{Q})  
\end{equation}
where \(\mathbb{P}\) is a stochastic process with marginals denoted as \(p_t\). A usual choice for \(\mathbb{Q}\) is \(\nu\mathbb{W}\), i.e. the standard Brownian motion with diffusion parameter \(\nu\). The resulting Schrödinger bridge is known as diffusion Schrödinger bridge (DSB) \cite{bortoli2021diffusion,bunne2023schrodinger,shi2024diffusion}.  

Following \citep{follmer1988random,SBsurvey}, one can disintegrate the KL-divergence into two part.
\begin{equation}
\label{eq:KL disintegration}
\text{KL}(\mathbb{P} \| \mathbb{Q}) = \text{KL}(\mathbb{P}_{01} \| \mathbb{Q}_{01})+\int\text{KL}(\mathbb{P}^{\bm{x}\bm{y}} \| \mathbb{Q}^{\bm{x}\bm{y}})\mathbb{P}_{01}(\mathrm{d}\bm{x},\mathrm{d}\bm{y})
\end{equation}
where \(\mathbb{P}_{01}\), \(\mathbb{Q}_{01}\) are the marginal of \(\mathbb{P}\), \(\mathbb{Q}\) at time \(0,1\), and \(\mathbb{P}^{\bm{x}\bm{y}}\), \(\mathbb{Q}^{\bm{x}\bm{y}}\) are the conditional process of \(\mathbb{P}\), \(\mathbb{Q}\) given start point \(\bm{x}\) and end point \(\bm{y}\). The second term measured the difference of conditional path. It can be minimized to 0 by choosing \(\mathbb{P}^{\bm{x}\bm{y}} = \mathbb{Q}^{\bm{x}\bm{y}}\). Thus, it is sufficient to only minimize the first term 
\begin{equation}
\label{eq:static SB}
\mathbb{P}^{\star} = \underset{p_0=\mu_0,p_1=\mu_1,}{\arg \min} \text{KL}(\mathbb{P}_{01} \| \mathbb{Q}_{01})  
\end{equation}
It is also known as the static Schrödinger bridge. When \(\mathbb{Q}\) is \(\nu\mathbb{W}\) with the same source measure \(\mu_0\), it reduces to a regularized OT (ROT) problem. Utilizing these results, \citep{tong2023simulationfree} proposed SF\textsuperscript{2}M, a simulation-free framework for solving diffusion Schrödinger bridge. It uses ROT coupling as the coupling, and uses \((\nu\mathbb{W})^{\bm{x}\bm{y}}\) (which is a Brownian bridge) as conditional path to recover the Schrödinger bridge between \(\mu_0\) and \(\mu_1\).

\subsection{Unbalanced Schrodinger Bridge}
To take both stochastic effect and unbalanced effect into account, \citep{BSB} proposed an unbalanced formulation of Schrödinger bridge. They \textbf{replace the reference process with branching Brownian motion (BBM)} to allow growth. The BSB problem is then defined as
\begin{equation}
\label{eq:USB}
\mathbb{P}^{\star} = \underset{\mathbb{P}:p_0=\mu_0,p_1=\mu_1}{\arg \min} \text{KL}(\mathbb{P} \| \mathbb{Q})  
\end{equation}
where \(\mu_0\) and \(\mu_1\) are unnormalized measures, and \(\mathbb{Q}\) is a BBM.

\textbf{The evolution equation of BBM.} According to \citep{BSB}, the evolution equation of BBM is 
\begin{equation}
\label{BBM eqn}
\partial_t\rho_t=\frac{\nu^2}{2}\Delta_{\bm{x}}\rho_t+r\rho_t 
\end{equation}
where \(r=\sum_{k\neq1}(k-1)\bm{q}_k\). In a weak sense, it is equivalent to evolve according to Brownian motion with diffusion parameter \(\nu\) in position, and to grow exponentially in mass.

\textbf{The relation to RUOT.} The main result of \citep{BSB} is that the BSB problem (\ref{eq:USB}) is ill-posed, and the RUOT problem
\begin{equation}
\label{RUOT}
\begin{aligned}
&\text{RUOT}(\mu_0,\mu_1) =\\
&\inf_{\rho,g,\bm{u}} \int_0^1\int_{\mathcal{X}}  \, \Big(\frac{1}{2}\Vert \bm{u}(\bm{x},t)\Vert_2^2+\Psi(g)\Big)\rho_t(\bm{x})\mathrm{d} \bm{x}\mathrm{d}t \\
&\text{s.t.} \partial_t\rho+\nabla_{\bm{x}}\cdot(\rho \bm{u})=\frac{\nu^2}{2}\Delta_{\bm{x}}\rho+\rho g,\ \rho_0=\mu_0,\ \rho_1=\mu_1,
\end{aligned}
\end{equation}
is a relaxation of it, i.e. the dual of (\ref{eq:USB}) and (\ref{RUOT}) happens to be the same. The growth penalization \(\Psi\) has a complicated form dependent on \(\nu\) and \(\bm{q}\). When \(\bm{p}_0=\bm{p}_2=\frac{1}{2}\) and \(\bm{p}_k=0,k\neq0,2\), i.e. particles split into two or die with no preference, \(\Psi\) is determined by its Legendre transform
\begin{equation}
\label{growth penalty}
\Psi_{\nu,\lambda}^{*}(g)=\nu^2\lambda(\operatorname{cosh}(\frac{g}{\nu^2})-1)
\end{equation}
These results connect continuous measure flow with branching processes, allowing us to realize BSB in continuous measure flow framework.

\section{Simulation-free Training of USB Problem}
In this section, we first establish a general framework for learning unbalanced stochastic dynamics (\ref{sec:USM},\ref{sec:marginalization},\ref{sec:CUSM},\ref{sec:framework}), and then focus on the specific case of USB (\ref{sec:PB-bridge},\ref{sec:coupling}). Note that USB is not an exact BSB solver, but an approximation. Our main goal is to leverage it to capture complex single-cell dynamics instead of solving it. 
\subsection{Unbalanced Score Matching Loss Design}
\label{sec:USM}
Given a vector field $\bm{u}_t$ and a rate function $g_t$ that generate the measure path $\rho_t$ by (\ref{marginal FPK equation with source term}), we can view the Fokker-Planck equation with source term as a continuity equation with source term
\begin{equation}
\label{marginal probability flux}
\partial_t\rho_t(\bm{x})+\nabla_{\bm{x}} \cdot (\bm{u}^{\circ}_t(\bm{x})\rho_t(\bm{x}))= g_t(\bm{x})\rho_t(\bm{x}) 
\end{equation}
where \(\bm{u}^{\circ}_t(\bm{x})=\bm{u}_t(\bm{x}) - \frac{\nu^2}{2}\nabla_{\bm{x}}\operatorname{ln}\rho_t(\bm{x})\). It is called the drift of probability flow ODE \citep{tong2023simulationfree} under deterministic settings. The term \(\bm{s}_t(x) = \nabla_{\bm{x}}\operatorname{ln}\rho_t(\bm{x})\) is known as the score function. The equation (\ref{marginal probability flux}) can be generated by a measure flow ODE instead of SDE
\begin{equation}
\label{ODE}
\left \{
\begin{aligned}
&\mathrm{d}\bm{x}_t = \bm{u}^{\circ}_t(\bm{x}_t)\mathrm{d}t\\
&\mathrm{d}\operatorname{ln}m_t=g_t(\bm{x}_t)\mathrm{d}t
\end{aligned}
\right .
\end{equation}
and the original SDE (\ref{SDE}) generating (\ref{marginal FPK equation with source term}) can be viewed as
\begin{equation}
\label{SDE2}
\left \{
\begin{aligned}
&\mathrm{d}\bm{x}_t = (\bm{u}^{\circ}_t(\bm{x}_t)+
\frac{\nu^2}{2}\bm{s}_t(\bm{x}_t))\mathrm{d}t+\nu\mathrm{d}\bm{W}_t\\
&\mathrm{d}\operatorname{ln}m_t=g_t(\bm{x}_t)\mathrm{d}t
\end{aligned}
\right .
\end{equation}
Thus, to recover the unbalanced stochastic dynamics (\ref{SDE}), it is sufficient to learn a time-dependent \textbf{vector field} $\bm{\bm{v}_{\theta}}(\bm{x},t)$, a time-dependent \textbf{growth rate} $g_{\bm{\theta}}(\bm{x},t)$, and a time-dependent \textbf{score function} $\bm{s}_{\bm{\theta}}(\bm{x},t)$ parametrized by neural networks, with loss function specified by minimizing the \textbf{intractable} \textbf{unbalanced score matching objective (USM)}
\begin{equation}
\begin{aligned}
&\mathcal{L}_{\text{USM}}(\bm{\theta})=\\
&\int_{0}^{1}\int_\mathcal{X} (\left\| \bm{\bm{v}_{\theta}}(\bm{x},t) - \bm{u}^{\circ}_t(\bm{x}) \right\|_2^2+\left\| g_{\bm{\theta}}(\bm{x},t) - g_t(\bm{x}) \right\|_2^2\\
&+\lambda^2(t)\left\| \bm{s}_{\bm{\theta}}(\bm{x},t) - \bm{s}_t(\bm{x}) \right\|_2^2)\rho_t(\bm{x})\mathrm{d} \bm{x}\mathrm{d}t
\label{eq:USM}
\end{aligned}
\end{equation}
We utilized the weight for score loss \(\lambda(t)\) adopted from \citep{tong2023simulationfree} for numerical stability. The details are left to Appendix \ref{supp:score weight}.

\subsection{Conditional Path Construction}
\label{sec:marginalization}
\textbf{Conditional measure path.} Following the approach of defining a conditional measure path in analogy to unbalanced flow matching \citep{wfr_fm}, we define a conditional measure path w.r.t the  condition variable $\bm{z}$ such that 
\(
\rho_t(\bm{x}\vert \bm{z}) = m_t(\bm{z})\tilde{\rho}_t(\bm{x}\vert \bm{z})
\)
where the time-dependent conditional measure $\rho_t(\bm{x}\vert \bm{z})$ is decoupled into a time dependent mass $m_t(\bm{z})$ and a time-dependent conditional probability density $\tilde{\rho}_t(\bm{x}\vert \bm{z})$. The conditional velocity field, growth rate satisfy the conditional Fokker-Planck equation with source term
\begin{equation}
\begin{aligned}
\label{conditional FPK equation with source term}
\partial_t\rho_t(\bm{x}\vert \bm{z})+\nabla_{\bm{x}}\cdot(\bm{u}_t(\bm{x}\vert \bm{z})\rho_t(\bm{x}\vert \bm{z}))=\\
\frac{\nu^2}{2}\Delta_{\bm{x}}\rho_t(\bm{x}|\bm{z})+\rho_t(\bm{x}\vert \bm{z})g_t(\bm{x}\vert \bm{z})
\end{aligned}
\end{equation}
and we define the conditional score function as \(\bm{s}_t(\bm{x}|\bm{z})=\nabla_{\bm{x}}\operatorname{ln}\rho_t(\bm{x}|\bm{z})\).

\textbf{Marginal measure path.} Assuming $\bm{z} \sim q(\bm{z})$, we define the marginal measure path from the conditional measure path
\begin{equation}
\label{marginal p}
\rho_t(\bm{x})=\int\rho_t(\bm{x}\vert \bm{z})q(\bm{z})\mathrm{d} \bm{z}\\
\end{equation}
as well as the marginal vector field and growth rate
\begin{equation}
\left \{
\begin{aligned}
\label{marginal}
&\bm{u}_t(\bm{x})=\int \bm{u}_t(\bm{x}\vert \bm{z})\tfrac{\rho_t(\bm{x}\vert \bm{z})q(\bm{z})}{\rho_t(\bm{x})}\,\mathrm{d}\bm{z}\\
&g_t(\bm{x})=\int g_t(\bm{x}\vert \bm{z})\tfrac{\rho_t(\bm{x}\vert \bm{z})q(\bm{z})}{\rho_t(\bm{x})}\,\mathrm{d}\bm{z}
\end{aligned}
\right .
\end{equation}
The following marginalization theorem connects the conditionals and marginals. The proof is left to Appendix \ref{pf:marginalization}.
\begin{theorem}
\label{thm:marginalization}
The marginal vector field and rate function (\ref{marginal}) generates the marginal measure path (\ref{marginal p}) for any $q(\bm{z})$ independent of $\bm{x}$ and $t$. The score function and $\bm{u}^{\circ}$ also satisfies the marginalization relation $\bm{s}_t(\bm{x})=\int \bm{s}_t(\bm{x}\vert \bm{z})\tfrac{\rho_t(\bm{x}\vert \bm{z})q(\bm{z})}{\rho_t(\bm{x})}\,\mathrm{d}\bm{z}, \bm{u}^{\circ}_t(\bm{x})=\int \bm{u}^{\circ}_t(\bm{x}\vert \bm{z})\tfrac{\rho_t(\bm{x}\vert \bm{z})q(\bm{z})}{\rho_t(\bm{x})}\,\mathrm{d}\bm{z}$.
\end{theorem}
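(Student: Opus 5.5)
The argument rests on the linearity of the Fokker--Planck equation with source term, as in the marginalization results of \citep{cfm_lipman,tong2023simulationfree,wfr_fm}. I assume enough regularity to exchange differentiation in $t$ and $\bm{x}$ with integration over $\bm{z}$: smoothness of $\rho_t(\cdot|\bm{z})$, $\bm{u}_t(\cdot|\bm{z})$, $g_t(\cdot|\bm{z})$, together with integrable dominating bounds. I also assume $\rho_t(\bm{x})>0$ wherever the weights $\rho_t(\bm{x}|\bm{z})q(\bm{z})/\rho_t(\bm{x})$ are used.

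\textbf{Step 1: the marginal fields generate the marginal path.} Differentiate (\ref{marginal p}) in $t$ under the integral. Substitute the conditional equation (\ref{conditional FPK equation with source term}) for each $\bm{z}$. Since $q$ depends on neither $\bm{x}$ nor $t$, the divergence and the Laplacian commute with the $\bm{z}$-integral, which gives
\[
\partial_t\rho_t(\bm{x})=-\nabla_{\bm{x}}\cdot\Big(\int \bm{u}_t(\bm{x}|\bm{z})\rho_t(\bm{x}|\bm{z})q(\bm{z})\mathrm{d}\bm{z}\Big)+\frac{\nu^2}{2}\Delta_{\bm{x}}\rho_t(\bm{x})+\int g_t(\bm{x}|\bm{z})\rho_t(\bm{x}|\bm{z})q(\bm{z})\mathrm{d}\bm{z}.
\]
By definition (\ref{marginal}), the first integral equals $\bm{u}_t(\bm{x})\rho_t(\bm{x})$ and the last equals $g_t(\bm{x})\rho_t(\bm{x})$. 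This is exactly (\ref{marginal FPK equation with source term}). The mass decomposition $\rho_t(\bm{x}|\bm{z})=m_t(\bm{z})\tilde\rho_t(\bm{x}|\bm{z})$ plays no role here; only the unnormalized conditional measure enters.

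\textbf{Step 2: the score.} Write $\nabla_{\bm{x}}\rho_t(\bm{x}|\bm{z})=\bm{s}_t(\bm{x}|\bm{z})\rho_t(\bm{x}|\bm{z})$. Then
\[
\bm{s}_t(\bm{x})=\frac{\nabla_{\bm{x}}\rho_t(\bm{x})}{\rho_t(\bm{x})}=\int \bm{s}_t(\bm{x}|\bm{z})\frac{\rho_t(\bm{x}|\bm{z})q(\bm{z})}{\rho_t(\bm{x})}\mathrm{d}\bm{z},
\]
again by moving the gradient inside the integral.

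\textbf{Step 3: the probability-flow drift.} Define the conditional drift $\bm{u}^{\circ}_t(\bm{x}|\bm{z})=\bm{u}_t(\bm{x}|\bm{z})-\frac{\nu^2}{2}\bm{s}_t(\bm{x}|\bm{z})$. The weights $\rho_t(\bm{x}|\bm{z})q(\bm{z})/\rho_t(\bm{x})$ are the same for every quantity, so weighted averaging is linear. Averaging this identity with those weights and using Step 2 and (\ref{marginal}) gives $\bm{u}_t(\bm{x})-\frac{\nu^2}{2}\bm{s}_t(\bm{x})=\bm{u}^{\circ}_t(\bm{x})$, which is the claimed formula. A more intrinsic route checks that the averaged drift satisfies (\ref{marginal probability flux}) directly, but linearity already suffices.

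The computation is routine. The only real obstacle is justifying the interchange of $\partial_t,\nabla_{\bm{x}},\Delta_{\bm{x}}$ with $\int\cdot\,q(\bm{z})\mathrm{d}\bm{z}$, and the division by $\rho_t(\bm{x})$. I would state these as standing regularity and positivity assumptions, for instance Gaussian-type conditional paths with bounded mass $m_t(\bm{z})$, rather than prove them in generality.
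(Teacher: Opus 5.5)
Your proposal is correct and follows essentially the same route as the paper. You differentiate the marginal under the integral, substitute the conditional Fokker--Planck equation, and pull the divergence and Laplacian outside. You then get the score by moving the gradient inside the integral, and $\bm{u}^{\circ}$ by linearity of the common weighted average. The paper leaves the regularity and positivity assumptions implicit where you state them explicitly, but the argument is identical.
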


\textbf{Conditional Gaussian measure path.} As a convenient instance of conditional path, conditional Gaussian mesure path (CGMP) is introduced
\begin{equation}
\label{CGMP}
\tilde{\rho}_t(\bm{x}\vert \bm{z})=\mathcal{N}(\bm{x}\vert \bm{\eta}_t(\bm{z}),\sigma^2_t(\bm{z})\mathrm{I})
\end{equation}
The conditional vector field, rate function, and score function of CGMP are easy to compute. (Details in Appendix \ref{pf:CGMP})
\begin{proposition}
\label{prop:CGMP}
For CGMP (\ref{CGMP}), 
\begin{equation}
\label{CGMP conditional}
\left \{
\begin{aligned}
&\bm{u}^{\circ}_t(\bm{x}\vert \bm{z}) = \frac{\sigma_t'(\bm{z})}{\sigma_t(\bm{z})} \big(\bm{x} - \bm{\eta}_t(\bm{z})\big) + \bm{\eta}_t'(\bm{z})\\
&g_t(\bm{x}\vert \bm{z}) = \partial_t \ln m_t(\bm{z})\\
&\bm{s}_t(\bm{x}\vert \bm{z})= -\frac{\bm{x} - \bm{\eta}_t(\bm{z})}{\sigma_t^2(\bm{z})} 
\end{aligned}
\right .
\end{equation}
\end{proposition}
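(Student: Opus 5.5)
The plan is to verify each formula in Proposition \ref{prop:CGMP} by direct computation, using the decomposition $\rho_t(\bm{x}\vert\bm{z}) = m_t(\bm{z})\tilde\rho_t(\bm{x}\vert\bm{z})$. Throughout, $\bm{z}$ is fixed, and $\bm{\eta}_t,\sigma_t,m_t$ are treated as differentiable functions of $t$.

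\textbf{Score.} Take the logarithm of the conditional measure:
\[
\ln\rho_t(\bm{x}\vert\bm{z}) = \ln m_t(\bm{z}) - \frac{d}{2}\ln(2\pi\sigma_t^2) - \frac{\Vert\bm{x}-\bm{\eta}_t\Vert_2^2}{2\sigma_t^2}.
\]
The mass factor does not depend on $\bm{x}$, so taking $\nabla_{\bm{x}}$ gives $\bm{s}_t(\bm{x}\vert\bm{z}) = -(\bm{x}-\bm{\eta}_t)/\sigma_t^2$ immediately.

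\textbf{Probability-flow drift and growth rate.} The conditional analogue of (\ref{marginal probability flux}) is the continuity equation with source term
\[
\partial_t\rho_t(\bm{x}\vert\bm{z}) + \nabla_{\bm{x}}\cdot\big(\bm{u}^{\circ}_t(\bm{x}\vert\bm{z})\rho_t(\bm{x}\vert\bm{z})\big) = g_t(\bm{x}\vert\bm{z})\rho_t(\bm{x}\vert\bm{z}),
\]
where $\bm{u}^{\circ}_t(\cdot\vert\bm{z}) = \bm{u}_t(\cdot\vert\bm{z}) - \frac{\nu^2}{2}\bm{s}_t(\cdot\vert\bm{z})$. This equation is obtained from (\ref{conditional FPK equation with source term}) by writing $\frac{\nu^2}{2}\Delta\rho = \nabla\cdot(\frac{\nu^2}{2}\rho\nabla\ln\rho)$.

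Substituting $\rho = m\tilde\rho$ gives
\[
\partial_t\rho = (\partial_t m)\tilde\rho + m\,\partial_t\tilde\rho .
\]
I would then choose $g_t(\bm{x}\vert\bm{z}) = \partial_t\ln m_t(\bm{z})$, which is constant in $\bm{x}$. With this choice the term $(\partial_t m)\tilde\rho$ cancels exactly against the source term $g\,m\tilde\rho$. After dividing by $m$, the equation reduces to the standard conservative continuity equation
\[
\partial_t\tilde\rho + \nabla\cdot(\bm{u}^{\circ}\tilde\rho) = 0
\]
for the normalized Gaussian $\tilde\rho_t$.

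For this equation I would invoke the known flow-matching result for Gaussian paths \citep{cfm_lipman,cfm_tong}. The affine flow $\psi_t(\bm{x}_0) = \bm{\eta}_t + \sigma_t\bm{x}_0$ pushes $\mathcal{N}(0,\mathrm{I})$ to $\tilde\rho_t$. Its velocity, written in Eulerian form, is $\frac{\sigma_t'}{\sigma_t}(\bm{x}-\bm{\eta}_t) + \bm{\eta}_t'$, and pushforward under a flow always satisfies the continuity equation. This identifies $\bm{u}^{\circ}_t(\bm{x}\vert\bm{z})$ as claimed.

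Alternatively, the continuity equation can be checked directly. Compute $\partial_t\ln\tilde\rho$ and $\nabla\cdot\bm{u}^{\circ} = d\sigma_t'/\sigma_t$, and also $\bm{u}^{\circ}\cdot\nabla\ln\tilde\rho$. The terms $-d\sigma'/\sigma$, $\sigma'\Vert\bm{x}-\bm{\eta}\Vert^2/\sigma^3$, and $\bm{\eta}'\cdot(\bm{x}-\bm{\eta})/\sigma^2$ then cancel.

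\textbf{Expected obstacle.} The main subtlety is uniqueness rather than computation. The conditional pair $(\bm{u}^\circ,g)$ generating a given path is not unique: one could trade growth for transport. The proposition should therefore be read as exhibiting \emph{a} valid choice, namely the one with spatially constant growth, which is the natural choice consistent with \citep{wfr_fm}. I would state this explicitly, and note that the original SDE drift is then recovered as $\bm{u}_t(\bm{x}\vert\bm{z}) = \bm{u}^\circ_t(\bm{x}\vert\bm{z}) + \frac{\nu^2}{2}\bm{s}_t(\bm{x}\vert\bm{z})$.
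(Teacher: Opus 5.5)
Your proposal is correct and follows essentially the same route as the paper: write $\rho_t(\bm{x}\vert\bm{z}) = m_t(\bm{z})\tilde{\rho}_t(\bm{x}\vert\bm{z})$, choose $g_t(\bm{x}\vert\bm{z}) = \partial_t \ln m_t(\bm{z})$ so the mass derivative cancels the source term, absorb the Laplacian into $\bm{u}^{\circ} = \bm{u} - \frac{\nu^2}{2}\bm{s}$ to get a conservative continuity equation for the Gaussian $\tilde{\rho}_t$, and read off $\bm{u}^{\circ}$ from the standard conditional flow matching Gaussian-path result, with the score following from $\nabla_{\bm{x}} \ln m_t(\bm{z}) = 0$. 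The differences are minor: you absorb the diffusion term before dividing by $m_t$ rather than after, you add a direct cancellation check for the continuity equation, and you state explicitly that $(\bm{u}^{\circ}, g)$ is one valid but non-unique choice, which the paper makes only implicitly.
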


\subsection{Conditional Loss Design}
\label{sec:CUSM}
We can regress the conditionals which are \textbf{tractable} by minimizing the \textbf{conditional unbalanced score matching objective (CUSM)}
\begin{equation}
\begin{aligned}
&\mathcal{L}_{\text{CUSM}}(\bm{\theta})=\\
&\mathbb{E}_{t\sim \mathcal{U}[0,1], \bm{z}\sim q(\bm{z}), \bm{x}\sim \tilde{\rho}_t(\bm{x}\vert \bm{z})}m_t(\bm{z})(\left\| \bm{\bm{v}_{\theta}}(\bm{x},t) - \bm{u}^{\circ}_t(\bm{x}\vert\bm{z}) \right\|_2^2\\
&+\left\| g_{\bm{\theta}}(\bm{x},t)- g_t(\bm{x}\vert\bm{z}) \right\|_2^2
+\lambda^2(t)\left\| \bm{s}_{\bm{\theta}}(\bm{x},t) - \bm{s}_t(\bm{x}\vert\bm{z}) \right\|_2^2)
\label{eq:CUSM}
\end{aligned}
\end{equation}
Here we weight the regression loss by mass \(m_t(\bm{z})\) to deal with unbalanced mass. In a deterministic setting, the loss reduces to the \textbf{CUFM} loss proposed by \citep{wfr_fm}. In a balanced setting where \(m_t(\bm{z})\equiv1,g_t(\bm{x}|\bm{z})\equiv0\), (\ref{eq:CUSM}) naturally reduces to the standard conditional score matching loss \citep{tong2023simulationfree}.

The following theorem recovers the classical results of matching algorithms that one can minimize the intractable marginal objective (\ref{eq:USM}) by minimizing the tractable conditional objective (\ref{eq:CUSM}). The proof is left to Appendix \ref{pf:equal gradient}.
\begin{theorem}
\label{thm:equal gradient}
If $\rho_t(\bm{x}) > 0$ for all $\bm{x} \in \mathcal{X}$ and $t \in [0,1]$, and $q(\bm{z})$ is independent of $\bm{x}$ and $t$, then $\mathcal{L}_{{\rm USM}} (\bm{\theta})=\mathcal{L}_{{\rm CUSM}}(\bm{\theta})+C$, where $C$ is independent of $\bm{\theta}$. Thus they have identical gradients w.r.t $\bm{\theta}$, i.e.
\[
\nabla_{\bm{\theta}} \mathcal{L}_{{\rm USM}}(\bm{\theta}) = \nabla_{\bm{\theta}} \mathcal{L}_{{\rm CUSM}}(\bm{\theta}).
\]
\end{theorem}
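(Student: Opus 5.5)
The plan is the standard flow-matching argument: expand each squared norm and show that the $\bm{\theta}$-dependent parts of $\mathcal{L}_{\rm USM}$ and $\mathcal{L}_{\rm CUSM}$ coincide term by term.

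\textbf{Rewriting the expectations.} First I rewrite $\mathcal{L}_{\rm CUSM}$ as an integral against the unnormalized conditional measure. Since $\rho_t(\bm{x}|\bm{z}) = m_t(\bm{z})\tilde\rho_t(\bm{x}|\bm{z})$, the mass weighting gives
\[
\mathbb{E}_{t,\bm{z},\bm{x}\sim\tilde\rho_t(\cdot|\bm{z})}\big[m_t(\bm{z})F(\bm{x},t,\bm{z})\big]=\int_0^1\!\!\int\!\!\int F(\bm{x},t,\bm{z})\,\rho_t(\bm{x}|\bm{z})q(\bm{z})\,\mathrm{d}\bm{x}\,\mathrm{d}\bm{z}\,\mathrm{d}t .
\]
This holds for any integrand $F$. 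It is exactly the reason the mass weight appears in (\ref{eq:CUSM}).

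\textbf{Expanding the three squared norms.} Next I expand each of the three squared norms in both losses. Write $\|a-b\|^2=\|a\|^2-2\langle a,b\rangle+\|b\|^2$, with $a$ the network output and $b$ the target.
\begin{itemize}
\item The pure network terms $\|\bm{v}_{\bm\theta}\|^2$, $\|g_{\bm\theta}\|^2$ and $\lambda^2(t)\|\bm{s}_{\bm\theta}\|^2$ agree in the two losses. This follows because integrating $\rho_t(\bm{x}|\bm{z})q(\bm{z})$ over $\bm{z}$ gives $\rho_t(\bm{x})$ by (\ref{marginal p}), and $q$ does not depend on $\bm{x},t$, which justifies Fubini.
\item The cross terms agree by the marginalization identities. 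Theorem \ref{thm:marginalization} gives
\[
\rho_t(\bm{x})\,\bm{u}^\circ_t(\bm{x})=\int \bm{u}^\circ_t(\bm{x}|\bm{z})\,\rho_t(\bm{x}|\bm{z})q(\bm{z})\,\mathrm{d}\bm{z},
\]
and likewise for $\bm{s}_t$ and $g_t$ (for $g_t$ this is the definition (\ref{marginal})). Hence
\[
\int\langle \bm{v}_{\bm\theta},\bm{u}^\circ_t\rangle\rho_t\,\mathrm{d}\bm{x}=\int\!\!\int\langle \bm{v}_{\bm\theta},\bm{u}^\circ_t(\cdot|\bm{z})\rangle\rho_t(\bm{x}|\bm{z})q(\bm{z})\,\mathrm{d}\bm{z}\,\mathrm{d}\bm{x},
\]
with the analogous identities for $g$ and $\bm{s}$. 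The score weight $\lambda^2(t)$ depends only on $t$, so it passes through unchanged.
\item The remaining terms are $\int\|\bm{u}^\circ_t\|^2\rho_t$ versus $\int\!\!\int\|\bm{u}^\circ_t(\cdot|\bm{z})\|^2\rho_t(\cdot|\bm{z})q$, and the same pairs for $g$ and $\bm{s}$. None of them involves $\bm\theta$. Their difference defines the constant $C$, so $\mathcal{L}_{\rm USM}=\mathcal{L}_{\rm CUSM}+C$.
\end{itemize}
Differentiating then gives equal gradients.

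\textbf{Where care is needed.} The hypothesis $\rho_t>0$ is what makes the marginal quantities $\bm{u}_t,g_t,\bm{s}_t$ in (\ref{marginal}) well defined, since they divide by $\rho_t(\bm{x})$. The main obstacle is justifying the interchanges of integrals and the finiteness of each expanded term. I would assume that the conditional quantities are square-integrable against $\rho_t(\cdot|\bm{z})q(\bm{z})$, for instance as in the CGMP case of Proposition \ref{prop:CGMP} with bounded mass $m_t$ and $\sigma_t$ bounded away from $0$. Jensen's inequality then gives finiteness of the marginal squared terms, so $C$ is finite.
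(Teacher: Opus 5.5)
Your proposal is correct and is essentially the paper's proof. You expand the three squares, use $\int\rho_t(\bm{x}|\bm{z})q(\bm{z})\,\mathrm{d}\bm{z}=\rho_t(\bm{x})$ for the pure network terms and Theorem~\ref{thm:marginalization} for the cross terms, then complete the square against $m_t(\bm{z})\tilde\rho_t(\bm{x}|\bm{z})q(\bm{z})$ and absorb the $\bm{\theta}$-independent target terms into $C$; your integrability caveat is more careful than the paper, which assumes finiteness tacitly, but note that USB's own Brownian-bridge path has $\sigma_t=\nu\sqrt{t(1-t)}\to 0$ at the endpoints, so $\mathbb{E}\|\bm{u}^{\circ}_t(\bm{x}|\bm{z})\|_2^2$ grows like $1/(t(1-t))$, is not integrable in $t$, and there only the gradient identity (differentiating under the integral) survives rather than a finite $C$.
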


\subsection{A General Framework for Learning Unbalanced Stochastic Dynamics}
\label{sec:framework}
In sections above, we have established a general framework for learning unbalanced stochastic dynamics. Once \(q(\bm{z})\) and the conditionals \(\bm{u}^{\circ}_t(\bm{x}|\bm{z}),g_t(\bm{x}|\bm{z}),\bm{s}_t(\bm{x}|\bm{z})\) are specified, we can regress the marginals \(\bm{u}^{\circ}_t(\bm{x}),g_t(\bm{x}),\bm{s}_t(\bm{x})\) by minimizing (\ref{eq:CUSM}). A common choice for \(\bm{z}\) is \(\bm{z}=(\bm{x}_0,\bm{x}_1)\) i.e. the pair of start point and end point from the source and target respectively. In this case, \(q(\bm{z})\) stands for some coupling between the source and target, while the conditionals stand for the conditional path between two Diracs \(m_0\delta_{\bm{x}_0},m_1\delta_{\bm{x}_1}\). \(m_0,m_1\) are determined by the semi-coupling, which reduces to one coupling in balanced cases, resulting in \(m_0=m_1=1\). We list the semi-coupling/coupling and conditional path used by some previous methods, as well as USB in Table \ref{tab:coupling and conditionals}. In the following part of this section, we will focus on the conditional path and semi-coupling of USB.

\begin{table} 
\centering
\caption{Examples of coupling and conditional path construction} 
\label{tab:coupling and conditionals} 
\vspace{2mm} 
\resizebox{0.5\textwidth}{!}{
\begin{tabular}{@{}lcccc@{}} 
\toprule
Algorithm & coupling & \(\bm{x}_t\) & \(m_t\) \\ 
\midrule
OT-CFM & OT coupling \(\pi\) & \(t\bm{x}_1+(1-t)\bm{x}_0\) & 1 \\
SF\textsuperscript{2}M & ROT coupling \(\pi_{2\nu^2}\) & \(\mathcal{N}(t\bm{x}_1+(1-t)\bm{x}_0,\nu^2t(1-t)\mathrm{I})\) & 1 \\
WFR-FM & WFR semi-coupling \((\gamma_0,\gamma_1)\) & \(\int_0^t\frac{\bm{\omega}\mathrm{d}s}{As^2+Bs+C}\) & \(At^2+Bt+C\) \\
\textbf{USB} & RUOT semi-coupling \((\gamma_0,\gamma_1)\) & \(\mathcal{N}(t\bm{x}_1+(1-t)\bm{x}_0,\nu^2t(1-t)\mathrm{I})\) & \(m_0^{1-t}m_1^t\) \\
\bottomrule
\end{tabular} 
}
\end{table}

\subsection{Conditional Path Construction for USB}
\label{sec:PB-bridge}

We disintegrate the KL-divergence (\ref{eq:USB})
\begin{equation}
\label{eq:USB KL disintegration}
\text{KL}(\mathbb{P} \| \mathbb{Q}) = \text{KL}(\mathbb{P}_{01} \| \mathbb{Q}_{01})+\int\text{KL}(\mathbb{P}^{\bm{x}\bm{y}} \| \mathbb{Q}^{\bm{x}\bm{y}})\mathbb{P}_{01}(\mathrm{d}\bm{x},\mathrm{d}\bm{y})
\end{equation}
where \(\mathbb{Q}\) is now a BBM. The second term can be minimized to 0 by choosing \(\mathbb{P}^{\bm{x}\bm{y}}=\mathbb{Q}^{\bm{x}\bm{y}}\). Note that it is hard to condition on a BBM in strong sense due to its branching nature. But, the conditional measure path can be constructed based on the evolution equation of BBM (\ref{BBM eqn}). The solution of (\ref{BBM eqn}) is a weighted Gaussian 
\({\rho}_t(\bm{x})=e^{rt}\mathcal{N}(\bm{x}\vert \bm{0},\nu^2\mathrm{I})\) where the position \(\bm{x}\) follows a Brownian motion with diffusion parameter \(\nu\), and the mass varies linearly in log-scale. Intuitively, since BBM branching events follow an exponential distribution and result in a multiplication of the particle count, the particle number can be viewed approximately as a Poisson process in log-scale. The log-linear mass can be viewed as a limit of Poisson process in log-scale with infinitesimal increment. Thus, given a pair of Dirac \((m_0\delta_{\bm{x}_0},m_1\delta_{\bm{x}_1})\) as source and target, we bridge them by a \textbf{Poisson-Brownian bridge}, i.e. we bridge \((\bm{x}_0,\bm{x}_1)\) with Brownian bridge, and bridge the mass with linear interpolation in log-scale, which is the limit of Poission bridge \citep{PoissonBridge} with infinitesimal increment. The details are presented in Appendix \ref{supp:PB-bridge}.
\begin{equation}\
\label{PB-bridge}
\left \{
\begin{aligned}
&\mathrm{d}\bm{x}_t=\frac{\bm{x}_1-\bm{x}_t}{1-t}\mathrm{d}t+\nu\mathrm{d}\bm{W}_t\\
&\mathrm{d}\operatorname{ln}m_t=(\operatorname{ln}m_1-\operatorname{ln}m_0)\mathrm{d}t
\end{aligned}
\right .
\end{equation}
As a CGMP \(\rho_t(\bm{x}|\bm{z})=m_t(\bm{z})\mathcal{N}(\bm{\eta}_t(\bm{z}),\nu^2t(1-t)\mathrm{I})\) where \(m_t(\bm{z})=m_0(\bm{z})^{1-t}m_1(\bm{z})^t\), \(\bm{\eta}_t(\bm{z})=(1-t)\bm{x}_0+t\bm{x}_1\), \(\bm{z}=(\bm{x}_0,\bm{x}_1)\), the conditionals of Poisson-Brownian bridge is easy to obtain
\begin{equation}
\label{PB-bridge conditional}
\left \{
\begin{aligned}
&\bm{u}^{\circ}_t(\bm{x}\vert \bm{z}) = \frac{1-2t}{t(1-t)} \big(\bm{x} - ((1-t)\bm{x}_0+t\bm{x}_1)\big) + (\bm{x}_1-\bm{x}_0)\\
&g_t(\bm{x}\vert \bm{z}) = \ln m_1(\bm{x}_0,\bm{x}_1)-\ln m_0(\bm{x}_0,\bm{x}_1)\\
&\bm{s}_t(\bm{x}\vert \bm{z})= \frac{(1-t)\bm{x}_0+t\bm{x}_1-\bm{x}}{\nu^2t(1-t)} 
\end{aligned}
\right .
\end{equation}

\subsection{Static Coupling Construction for USB}
\label{sec:coupling}
The minimization of the first term of (\ref{eq:USB KL disintegration}) results in a static USB semi-coupling \((\gamma_0,\gamma_1)\). Though hard to obtain, note that RUOT is a relaxation of USB \citep{BSB}. Thus, we use the semi-coupling induced by the corresponding RUOT problem (\ref{RUOT}) to approximate the static USB semi-coupling. Here we restrict the branching mechanism of the referencing BBM to \(\bm{p}_0=\bm{p}_2=\frac{1}{2},\bm{p}_k=0,k\neq0,2\) to imitate the real cell division and apoptosis. 

In practice, it is also hard to calculate the semi-coupling for general RUOT problem due to the complexity of the growth penalty \(\Psi\). For efficiency, we approximate the RUOT problem with a WFR problem (\ref{Dynamic WFR}), which is easy to solve, by expanding \(\Psi\) to second order. More computational details are presented in Appendix \ref{supp:taylor expansion}. We also discuss the main difficulty in Appendix \ref{supp:travelling dirac}.

\section{USB Workflow for Trajectory Inference}
\textbf{Multi-time points USB.} Given samples from unbalanced measures $\mu_i$ at $K+1$ discrete time points, $t = t_0, t_{1}, \ldots, t_{K}$, multi-time points USB tends to find a most likely evolution bridging these marginal measures. 
\begin{equation}
\label{eq:MM-USB}
\mathbb{P}^{\star} = \underset{\mathbb{P}:p_i=\mu_i,i=0,1,\cdots,K+1}{\arg \min} \text{KL}(\mathbb{P} \| \mathbb{Q})  
\end{equation}
where \(\mathbb{Q}\) is a BBM. The KL-disintegration yields
\begin{equation}
\label{eq:MM-KL disintegration}
\begin{aligned}
&\text{KL}(\mathbb{P} \| \mathbb{Q}) = \text{KL}(\mathbb{P}_{\{0,1,\cdots,K\}} \| \mathbb{Q}_{\{0,1,\cdots,K\}})+\\
&\int\text{KL}(\mathbb{P}^{\bm{x}_0\cdots\bm{x}_{K}} \| \mathbb{Q}^{\bm{x}_0\cdots\bm{x}_{K}})\mathbb{P}_{\{0,1,\cdots,K\}}(\mathrm{d}\bm{x}_0,\cdots\mathrm{d}\bm{x}_{K})
\end{aligned}
\end{equation}
Similar to the two-time points case, the second term can be minimized to 0 by choosing \(\mathbb{P}^{\bm{x}_0\cdots\bm{x}_{K}} =\mathbb{Q}^{\bm{x}_0\cdots\bm{x}_{K}}\). Due to Markov property, conditioning on multiple time points is equivalent to conditioning on consecutive pairs and stitching the corresponding conditional measure paths together. 

The minimization of the first term results in \(K\) pairs of semi-couplings. These semi-couplings can be approximated by solving a multi-marginal RUOT problem 
\begin{equation}
\label{MM-RUOT}
\begin{aligned}
&\text{RUOT}(\mu_0,\cdots,\mu_{K}) =\\
&\inf_{\rho,g,\bm{u}} \frac{t_K-t_0}{2}\int_{t_0}^{t_K}\int_{\mathcal{X}}  \, \Big(\frac{1}{2}\Vert \bm{u}(\bm{x},t)\Vert_2^2+\Psi(g)\Big)\rho_t(\bm{x})\mathrm{d} \bm{x}\mathrm{d}t \\
&\text{s.t.} \partial_t\rho+\nabla_{\bm{x}}\cdot(\rho \bm{u})=\frac{\nu^2}{2}\Delta_{\bm{x}}\rho+\rho g,\ \rho_i=\mu_i,i=0,\cdots,K
\end{aligned}
\end{equation}
Due to Markov property again, the dynamics between different consecutive time pairs are independent. According to \citep{wfr_fm}, under this independence assumption, the multi-time problem reduces to solving RUOT between successive time points. The proof is left to Appendix \ref{pf:mm-ruot}.
\begin{proposition}
\label{prop:mm-ruot}
The solution to the multi-time RUOT problem (\ref{MM-RUOT}) is equivalent to the concatenation of the solutions of successive time points.
\end{proposition}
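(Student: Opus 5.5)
The plan is to show that the multi-time problem (\ref{MM-RUOT}) splits into independent two-time problems, using the fact that the constraints only couple the pieces at the interpolation times $t_1,\dots,t_{K-1}$. Any admissible triple $(\rho,\bm{u},g)$ on $[t_0,t_K]$ restricts, for each $i=0,\dots,K-1$, to an admissible triple $(\rho^{(i)},\bm{u}^{(i)},g^{(i)})$ on $[t_i,t_{i+1}]$. Each restriction satisfies the same Fokker--Planck equation with source term (\ref{marginal FPK equation with source term}) and has endpoint constraints $\rho_{t_i}=\mu_i$, $\rho_{t_{i+1}}=\mu_{i+1}$. Since the integrand $\big(\tfrac12\Vert\bm{u}\Vert_2^2+\Psi(g)\big)\rho_t$ is local in time, the time integral splits additively. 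The objective then equals $\frac{t_K-t_0}{2}\sum_i J_i(\rho^{(i)},\bm{u}^{(i)},g^{(i)})$, where $J_i$ is the two-time action on $[t_i,t_{i+1}]$. Taking the infimum gives
\[
\text{RUOT}(\mu_0,\dots,\mu_K)\ \ge\ \tfrac{t_K-t_0}{2}\sum_{i=0}^{K-1}\inf J_i .
\]

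For the reverse inequality I concatenate. I pick (near-)optimal solutions of each two-time RUOT problem between $\mu_i$ and $\mu_{i+1}$ on $[t_i,t_{i+1}]$, and glue them into a single triple on $[t_0,t_K]$. I then need to check that the glued $\rho$ is a weak solution of the constrained equation on the whole interval. For a test function $\varphi\in C_c^\infty(\mathcal{X}\times(t_0,t_K))$, I split the weak formulation over the subintervals. Integrating by parts in time on each piece leaves boundary terms $\int\varphi(\cdot,t_i)\,\mathrm{d}\rho_{t_i}$. Because both adjacent pieces take the value $\mu_i$ at $t_i$, these terms cancel telescopically, so the concatenation is admissible. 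Its cost is the sum of the pieces' costs, which gives equality. The same argument shows that minimizers correspond exactly to concatenations of the pieces' minimizers.

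Two further points need care. First, I will check the normalization. If the paper's two-time RUOT is posed on $[0,1]$, I rescale time on $[t_i,t_{i+1}]$ by $s=(t-t_i)/(t_{i+1}-t_i)$. This rescales $\bm{u}$ and $g$ by $(t_{i+1}-t_i)$ and the diffusion coefficient by the same factor. Consequently, up to the prefactor (adopted, as in \citep{wfr_fm}), each subproblem is a RUOT problem between $\mu_i$ and $\mu_{i+1}$ with appropriately rescaled $\nu$ and $\Psi$. Since $\Psi$ is not quadratic in general, the rescaling does not simply factor out a constant. I will therefore state the equivalence at the level of the time-interval problems on $[t_i,t_{i+1}]$ rather than claim identical rescaled problems. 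Second, the \textbf{main obstacle} is the gluing step: I must verify continuity in time of the weak solution at $t_i$ (narrow continuity of $t\mapsto\rho_t$), so that the boundary terms are well defined. I will get this from finiteness of the action, which, given the usual RUOT regularity assumptions (finite energy implies weak continuity of the curve), follows from the standard estimate for continuity equations with source.
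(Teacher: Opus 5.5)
Your proposal is correct and follows essentially the same route as the paper: concatenating the two-time optimizers gives an admissible triple whose cost is the sum of the interval costs, and additivity of the time integral over the disjoint intervals $[t_i,t_{i+1}]$ shows that no admissible triple can do better. The paper phrases this second step as a proof by contradiction, while you restrict to each interval and take the infimum, which is the same argument. Your version is more careful than the paper's in three places: you use near-optimal pieces instead of assuming minimizers exist, you verify the gluing in the weak formulation, and you pose the subproblems on $[t_i,t_{i+1}]$ rather than as rescaled copies of the $[0,1]$ problem (which matches how the paper's proof actually defines them). These are refinements rather than a different approach.
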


\textbf{Training workflow.} In practice, the neural networks share the parameters across different pairs of time points. For convenience, we choose the condition variable \(\bm{z}=(\bm{x}_0.\bm{x}_1)\sim\gamma_0(\bm{x}_0.\bm{x}_1)\), and set \(m_0(\bm{z})=1\), \(m_1(\bm{z})=\frac{\gamma_1(\bm{x}_0.\bm{x}_1)}{\gamma_0(\bm{x}_0.\bm{x}_1)}\) for each two-time points cases. We described the pseudocode of USB training workflow at multiple time points in \ref{alg:USB-training}.

\textbf{Inference schemes.}
We provide two inference modes for USB, namely, \textbf{Continuous Inference} and \textbf{Branching Inference}. For \textbf{Continuous Inference}, USB simulates the trajectory by following SDE
\begin{equation}
\label{continuous inference.}
\left \{
\begin{aligned}
&\mathrm{d}\bm{x}_t = (\bm{v}_{\theta}(\bm{x}_t,t)+
\frac{\nu^2}{2}\bm{s}_{\bm{\theta}}(\bm{x}_t,t))\mathrm{d}t+\nu\mathrm{d}\bm{W}_t\\
&\mathrm{d}\operatorname{ln}m_t=g_{\theta}(\bm{x}_t,t)\mathrm{d}t
\end{aligned}
\right .
\end{equation}
The continuous mode aims to match the source and target, and recover the USB dynamics on population-level, while the \textbf{Branching Inference} aims to simulate the discrete single-cell birth-death dynamics at single-cell resolution. We start from a cell at \(\bm{x}_0\) whose position evolves according to the upper SDE of (\ref{continuous inference.}). We also simulates a non-homogeneous branching process with rate \(|g_{\theta}(\bm{x}_t,t)|\) to decide when the cell undergoes branching. At the  branching event, the cell divides into two (\(g_{\theta}>0\)) or dies (\(g_{\theta}<0\)). After branching, all living cells follow the simulation process above independently. More details are presented in Appendix \ref{supp:branching inference}. Pseudocodes are presented in Appendix \ref{Appendix:algorithm}.

\textbf{A full model for single-cell dynamics.} We discussed several trajectory inference algorithms related to USB in Appendix \ref{Appendix:D}. Among algorithms based on Fokker-Planck-like equations (\ref{marginal FPK equation with source term}), USB models both the stochastic effect \(\frac{\nu^2}{2}\Delta_{\bm{x}}\rho_t(\bm{x})\) and the unbalanced effect \(g_t(\bm{x})\rho_t(\bm{x})\) in a simulation-free manner, which also allows discrete simulation for birth-death dynamics (Table \ref{tab:comparison}).
\begin{table} 
\centering
\caption{Comparison of trajectory inference algorithms based on Fokker-Planck-like equation(\ref{marginal FPK equation with source term}).} 
\label{tab:comparison} 
\vspace{2mm} 
\resizebox{0.5\textwidth}{!}{
\begin{tabular}{@{}lcccc@{}} 
\toprule
Algorithm & unbalanced & stochastic & Simulation-free & discrete birth-death dynamics \\ 
\midrule
OT-CFM                 & \xmark & \xmark & \cmark & \xmark \\ 
SF\textsuperscript{2}M & \xmark & \cmark & \cmark & \xmark \\ 
WFR-FM                 & \cmark & \xmark & \cmark & \xmark \\ 
DeepRUOT               & \cmark & \cmark & \xmark & \xmark \\ 
BranchSBM                & \xmark & \cmark & \xmark & \xmark \\ 
\textbf{USB}           & \cmark & \cmark & \cmark & \cmark \\ 
\bottomrule
\end{tabular} 
}
\end{table}

\begin{algorithm}[h!]
\caption{USB training workflow}
\label{alg:USB-training}
\begin{algorithmic}[1]
\Require Sample-able distributions $\mu_{t_0}, \mu_{t_1}, \ldots, \mu_{t_K}$, diffusion parameter $\nu$, branching rate $\lambda$, training batch size $b$, vector net $\bm{v_{\theta}}(\bm{x},t)$, growth rate net $g_{\bm{\theta}}(\bm{x},t)$, score net $\bm{s}_{\bm{\theta}}(\bm{x},t)$.
\For{$k = 0 \to K-1$}
    \State $(\gamma_0^{(k)},\gamma_1^{(k)})$ $\gets$ coupling of $\text{RUOT}_{\nu,\lambda}(\mu_k,\mu_{k+1})$ with penalty \(\Psi_{\nu,\lambda}(g)\) determined by (\ref{growth penalty})
\EndFor
\While{Training}
    \For{$k = 0 \to K-1$}
        \State $(\bm{x}_{t_k}, \bm{x}_{t_{k+1}}) \sim \gamma_0^{(k)}$
        \State $t \sim \mathcal{U}(0,1)$, $t^{(k)} \gets t_k + (t_{k+1}-t_k)t$
        \State $\bm{\eta}_{t^{(k)}} \gets \bm{x}_{t_k} +t(\bm{x}_1-\bm{x}_{t_k})$
        \State $\bm{x}^{(k)} \sim \mathcal{N}(\bm{\eta}_{t^{(k)}}, \nu^2t(1-t) \mathbf{I})$
        \State $\bm{u}^{(k)}\gets\frac{1-2t}{t(1-t)} (\bm{x}^{(k)}-\bm{\eta}_{t^{(k)}}) + (\bm{x}_{t_{k+1}}-\bm{x}_{t_k})$
        \State $g^{(k)} \gets  \ln \frac{m_{t_{k+1}}(\bm{x}_{t_k},\bm{x}_{t_{k+1}})}{m_{t_k}(\bm{x}_{t_k},\bm{x}_{t_{k+1}})}/(t_{k+1}-t_k)$
        \State $s^{(k)} \gets\frac{\bm{\eta}_{t^{(k)}}-\bm{x}^{(k)}}{\nu^2t(1-t)}$
        \State $m^{(k)} \gets m_{t}(\bm{x}_{t_k},\bm{x}_{t_{k+1}}) / \gamma_0^{(k)}(\bm{x}_{t_k},\bm{x}_{t_{k+1}})$ (Defined in \ref{PB-bridge} and \ref{PB-bridge conditional})
    \EndFor
    \State Concatenate $\{\bm{x}^{(i)}, t^{(i)}, \bm{u}^{(i)}, g^{(i)}, s^{(i)}, m^{(i)}\}_{i=1}^K$ into batch tensors $\{\bm{x}^{\text{c}}, t^{\text{c}}, \bm{u}^{\text{c}}, g^{\text{c}}, s^{\text{c}}, m^{\text{c}}\}$ 
    \State $\mathcal{L}_{\text{CUSM}}(\bm{\theta}) \gets (\left\| \bm{v_{\theta}}(\bm{x}^{\text{c}}, t^{\text{c}}) - \bm{u}^{\text{c}} \right\|_2^2+\left\| g_{\bm{\theta}}(\bm{x}^{\text{c}}, t^{\text{c}}) - g^{\text{c}}\right\|_2^2
    +\lambda^2(t)\left\| \bm{s}_{\bm{\theta}}(\bm{x}^{\text{c}}, t^{\text{c}}) - s^{\text{c}}\right\|_2^2)m^{\text{c}}$
    \State $\bm{\theta} \gets \mathrm{Update}(\bm{\theta}, \nabla_{\bm{\theta}} \mathcal{L}_{\text{CUSM}}(\bm{\theta}))$
\EndWhile
\State \Return $\bm{v_{\theta}}$, $g_{\bm{\theta}}$, and $\bm{s}_{\bm{\theta}}$
\end{algorithmic}
\end{algorithm}

\section{Experiment Results}
\textbf{USB accurately bridges \(\mu_{t_0},\cdots,\mu_{t_K}\).}
We evaluate how accurate can USB match the measure at observed time points on three synthetic datasets: Simulation Gene (Simulation), Dyngen and the 1000D Gaussian Mixtures (Gaussian). The distribution-matching accuracy is measured by the \textbf{1-Wasserstein distance ($\mathcal{W}_1$)} between the normalized true measure and the normalized USB simulated measure, and the mass-matching accuracy is measured by \textbf{Relative Mass Error (RME)}, the relative error of predicted total mass (Experiment details in Appendix \ref{supp:evaluation metrics}). USB demonstrates superior performance across all three datasets, achieving best accuracy on distribution-matching task, and best or second-best accuracy on mass-matching task, performing on par with the top-performing baseline, while consistently and significantly outperforming its balanced counterpart, SF\textsuperscript{2}M \citep{tong2023simulationfree} (Table \ref{tab:E1}). We also evaluate USB on several real datasets in Appendix \ref{Appendix:C}.

\begin{table}
\centering
\caption{Mean $\mathcal{W}_1$ and RME (only for unbalanced methods) on synthetic datasets. For the methods that exhibit randomness in inference, we report the mean value and standard deviation over 5 runs. Best results are in bold, and the second best are underlined.}
\label{tab:E1}
\resizebox{0.5\textwidth}{!}{
\begin{tabular}{lcccccc}
\toprule
\textbf{Method} & \multicolumn{2}{c}{Simulation (2D)} & \multicolumn{2}{c}{Dyngen (5D)} & \multicolumn{2}{c}{Gaussian (1000D)} \\
\cmidrule(lr){2-3} \cmidrule(lr){4-5} \cmidrule(lr){6-7}
& $\mathcal{W}_1$ ($\downarrow$) & RME ($\downarrow$) & $\mathcal{W}_1$ ($\downarrow$) & RME ($\downarrow$) & $\mathcal{W}_1$ ($\downarrow$) & RME ($\downarrow$) \\
\midrule
MMFM  & 0.298 & ---   & 1.371 & ---   & 2.833 & ---   \\
Metric FM  & 0.311 & ---   & 1.767 & ---   & 3.794 & ---   \\
SF2M  & 0.224\tiny$\pm$0.007 & ---   & 1.277\tiny$\pm$0.017 & ---   & 3.543\tiny$\pm$0.002 & ---   \\
MIOFlow  & 0.148 & --- & 0.965 & --- & 2.858 & --- \\
BranchSBM  & 0.474 & --- & 1.415 & --- & 3.438 & --- \\
TIGON  & 0.045 & 0.014 & 0.512 & 0.047 & 2.263   & 0.127  \\
DeepRUOT & \underline{0.043}\tiny$\pm$0.002 & 0.017\tiny$\pm$0.001 & 0.623\tiny$\pm$0.032 & 0.065\tiny$\pm$0.011 & 3.785\tiny$\pm$0.009 & 0.303\tiny$\pm$0.070   \\
Var-RUOT  & 0.079\tiny$\pm$0.003 & 0.008\tiny$\pm$0.002 & 0.522\tiny$\pm$0.008 & 0.177\tiny$\pm$0.007 & 2.813\tiny$\pm$0.004  & 0.041\tiny$\pm$0.006  \\
UOT-FM  & 0.093 & 0.010 & 1.204 & 0.097 & 2.771  & \underline{0.033}   \\
VGFM  & 0.046 & 0.006 & 0.598 & 0.037 & 3.010  & 0.037 \\
WFR-FM & \textbf{0.019} & \textbf{0.001} & \underline{0.135} & \textbf{0.005} & \underline{2.233}  & 0.044 \\
\textbf{USB} & \textbf{0.019\tiny$\pm0.000$} & \underline{0.002\tiny$\pm$0.000} & \textbf{0.131\tiny$\pm$0.001} & \underline{0.007\tiny$\pm$0.000} & \textbf{2.136\tiny$\pm$0.002}  & \textbf{0.004\tiny$\pm$0.004} \\
\bottomrule
\end{tabular}
}
\end{table}

\textbf{USB accurately interpolates the unobserved time points.} We evaluate how well can USB recover the true underlying dynamics by hold-one-out experiments. For a datasets with \(K+1\) time points \(t_0,t_1,\cdots,t_K\), holding out one time point \(t_i, 1\le i\le K-1\), USB is trained on the remaining \(K\) time points, and the \(\mathcal{W}_1\) is evaluated at the holding out time point \(t_i\). The mean performance on all holding out time points are presented in Table \ref{tab:E2}. Across EMT \citep{cook2020context}, EB \citep{moon2019visualizing}, CITE-seq \citep{lance2022multimodal}, and mouse hematopoiesis data (Mouse) \citep{weinreb2020lineage}, USB achieves best interpolation accuracy on EMT and CITE, while performs comparable with top baselines on others. This demonstrates that USB successfully captures the underlying cellular dynamics.


\begin{table}
\centering
\caption{Mean $\mathcal{W}_1$ over held-out time points on EMT, EB, CITE and Mouse datasets. For the methods that exhibit randomness in inference, we report the mean value and standard deviation over 5 runs. Best results are in bold, and the second best are underlined.}
\label{tab:E2}
\resizebox{0.5\textwidth}{!}{
\begin{tabular}{lcccc}
\toprule
\textbf{Method} & EMT (10D) & EB (50D) & CITE (50D) & Mouse (50D) \\
\midrule
MMFM & 0.323 & 11.213 & 38.521 & 8.263 \\
Metric FM & 0.314 & 10.726 & 37.342 & 7.753 \\
SF2M & 0.308\tiny$\pm$0.001 & 10.986\tiny$\pm$0.006 & 38.333\tiny$\pm$0.002 & 8.646\tiny$\pm$0.004 \\
MIOFlow & 0.325 & 10.960 & 39.574 & 7.779 \\
BranchSBM & 0.369 & 11.988 & 39.125 & 8.586 \\
TIGON & 0.360 & 11.080 & 38.159 & 6.868 \\
DeepRUOT & 0.323\tiny$\pm$0.002 & \textbf{10.075}\tiny$\pm$0.004 & 37.892\tiny$\pm$0.002 & \underline{6.847\tiny$\pm$0.003} \\
Var-RUOT & 0.320\tiny$\pm$0.003 & 11.035\tiny$\pm$0.017  &  38.393\tiny$\pm$0.029 & 8.672\tiny$\pm$0.040 \\
UOT-FM & 0.322 & 11.344 & 38.649 & 9.332 \\
VGFM  & \underline{0.301} & 10.370 & 37.386 & 8.496 \\
WFR-FM  & \textbf{0.298} & \underline{10.157} & \underline{37.221} & \textbf{6.586} \\
\textbf{USB}  & \textbf{0.298\tiny$\pm$0.0001} & 10.177\tiny$\pm$0.0001& \textbf{37.087\tiny$\pm$0.0001} & 6.988\tiny$\pm$0.0001 \\
\bottomrule
\end{tabular}
}
\end{table}

\textbf{USB recovers the underlying birth-death dynamics.} To evaluate how well can USB recover the underling birth-death dynamics, we calculated the Pearson correlation ( \(g_{corr}\)) between the true growth rate \(g=\frac{X^2_2}{2(1+X_2^2)}\) and the predicted growth rate on the Simulation Gene dataset. Averaging on 5 runs, USB gets a mean Pearson correlation of 0.9739, showing high consistency to the ground truth. The growth rate plot on the 1000D Gaussian dataset also shows that USB recovers a more plausible growth rate than WFR, which is designed for capturing unbalanced effect (Figure \ref{fig:gaussian}). In the 1000D Gaussian settings, the upper cluster expands from 100 to 1,000 cells without displacement, whereas the lower cluster maintains its total counts (400), bifurcating into two 200-cell clusters. Consequently, the ground-truth growth rate is high in the upper cluster and zero across the lower clusters; USB recovers this behavior more faithfully than WFR. The difference is because of the different underlying dynamic assumptions of the two algorithms. More detailed discussion is presented in Appendix \ref{relation:WFR-FM}.

\begin{figure}
    \vskip 0.2in
    \centering
    \includegraphics[width=0.48\textwidth]{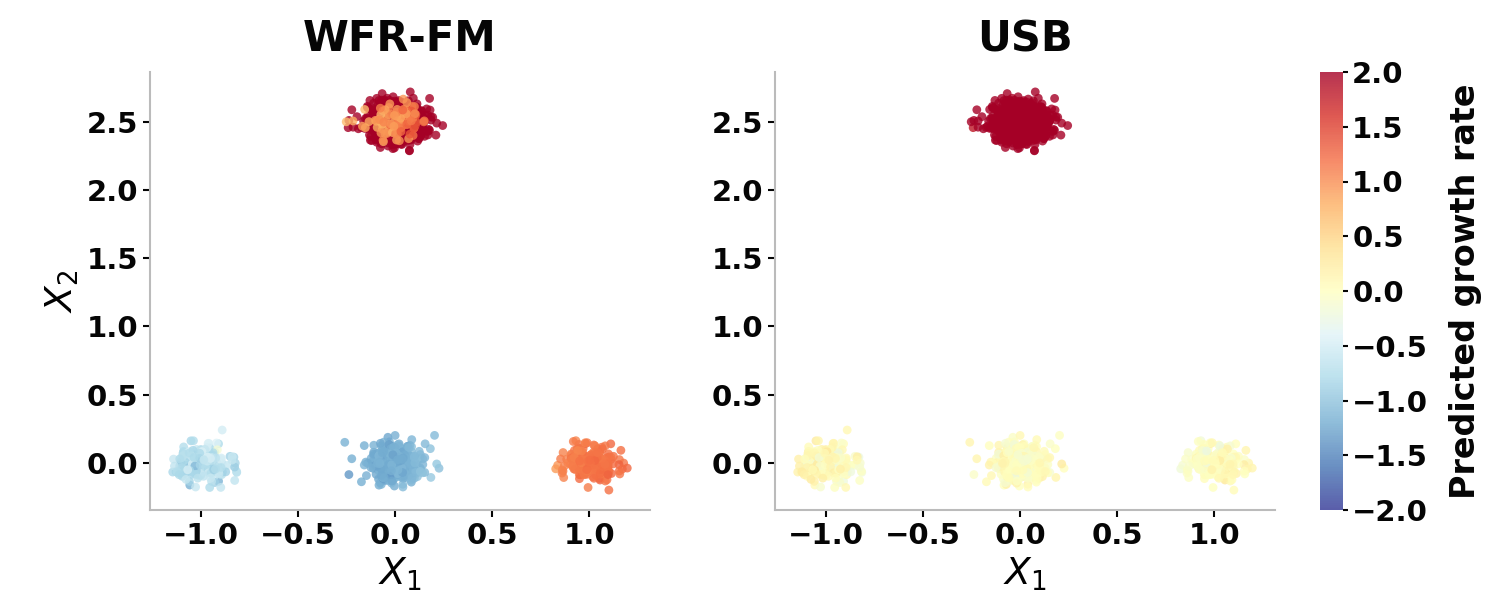} 
    \caption{Learned growth rate on the Gaussian 1000D dataset. Left panel: WFR-FM; Right panel: USB}
    \label{fig:gaussian}
\end{figure}

\textbf{USB simulates discrete birth-death dynamics at single-cell resolution.} On the Dyngen dataset, we initiated simulations from the same cell to generate 8 trajectories (\(\nu=0.1\)) using \textbf{Branching Inference}, distinguished by different colors (Fig.\ref{fig:branch plot}). The total mass of the Dyngen data initially decreases and subsequently increases, evolving into two imbalanced branches, with the lower branch containing a higher cell count than the upper one (0.708 : 0.292). We observe that cell fates diverse significantly even when originating from the same initial state: the majority of cells undergo early apoptosis, while some trajectories traverse towards distinct branches, experiencing subsequent division or death events. These results demonstrate USB's capacity to model not only cell division but also apoptosis and complex bifurcation behaviors.

\begin{figure}
    \vskip 0.2in
    \centering
    \includegraphics[width=0.44\textwidth]{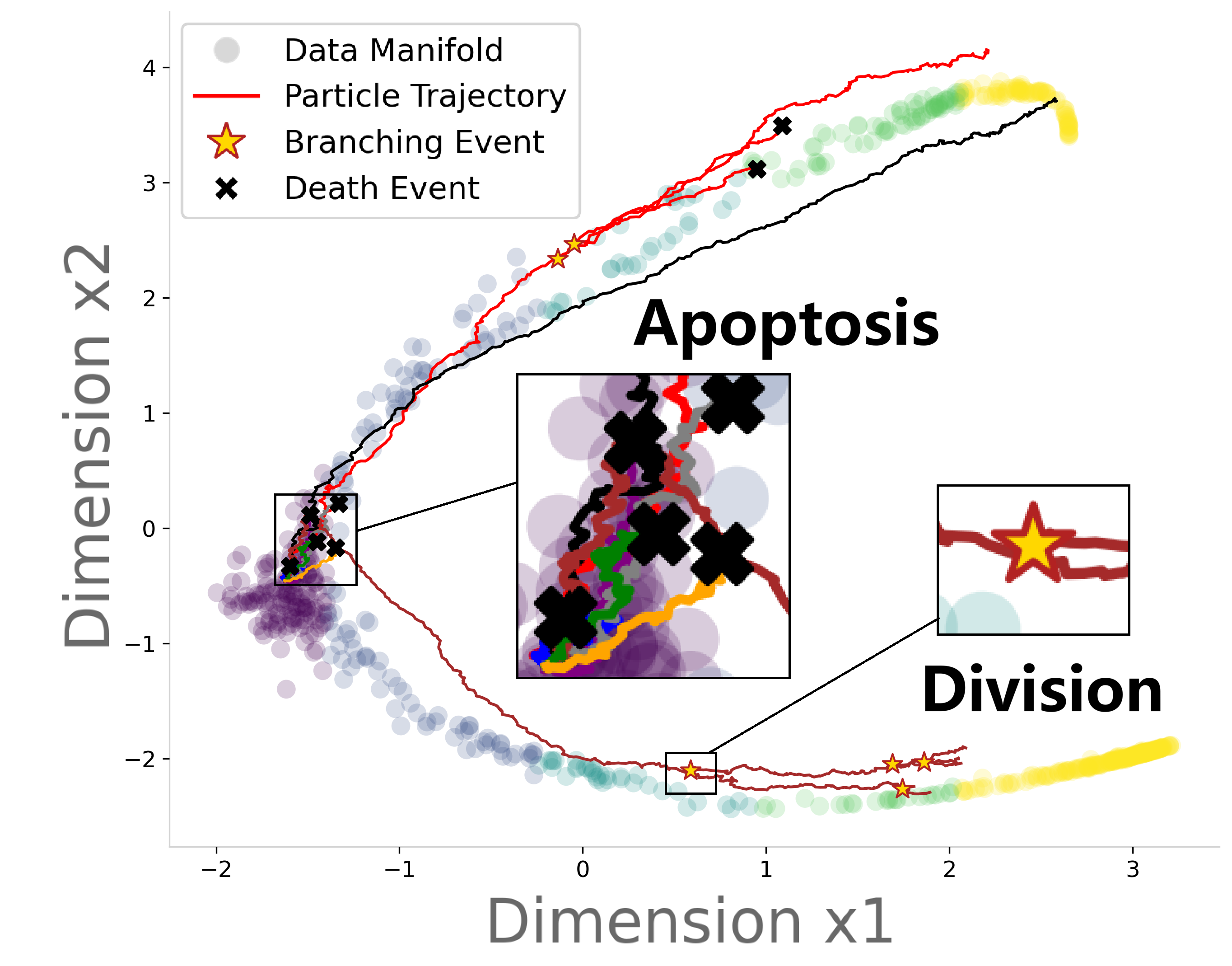}
    \caption{Single-cell resolution birth-death dynamics simulation}
    \label{fig:branch plot}
\end{figure}

\textbf{USB captures the discrete dynamics at both microscopic and macroscopic levels.} We rigorously validate USB's discrete birth-death simulations, we conducted evaluations at both microscopic and macroscopic levels. At microscopic level, we calculate the \textbf{Event-Count Statistics} of cell division on Simulation dataset. The ground-truth division rate is \(g=\frac{X^2_2}{2(1+X_2^2)}\). We simulated 2,000 branching trajectories over 400 steps (\(\Delta t=0.01\)). Collecting all 
coordinates traversed along these paths, we partitioned them into 100 bins. For each bin, we computed the \textbf{Theoretical Expected Event Counts} (\(\sum g(X_2)\Delta t\)) and counted the \textbf{Empirical Event Counts} (actual simulated divisions falling in that bin). The Pearson correlation between theoretical and empirical counts across the 100 bins is 0.9950, with a relative error of 10.7\% (Figure \ref{fig:event count}). This rigorously proves USB precisely captures state-dependent discrete dynamics at single-cell resolution.

\begin{figure}
    \centering
    \includegraphics[width=0.44\textwidth]{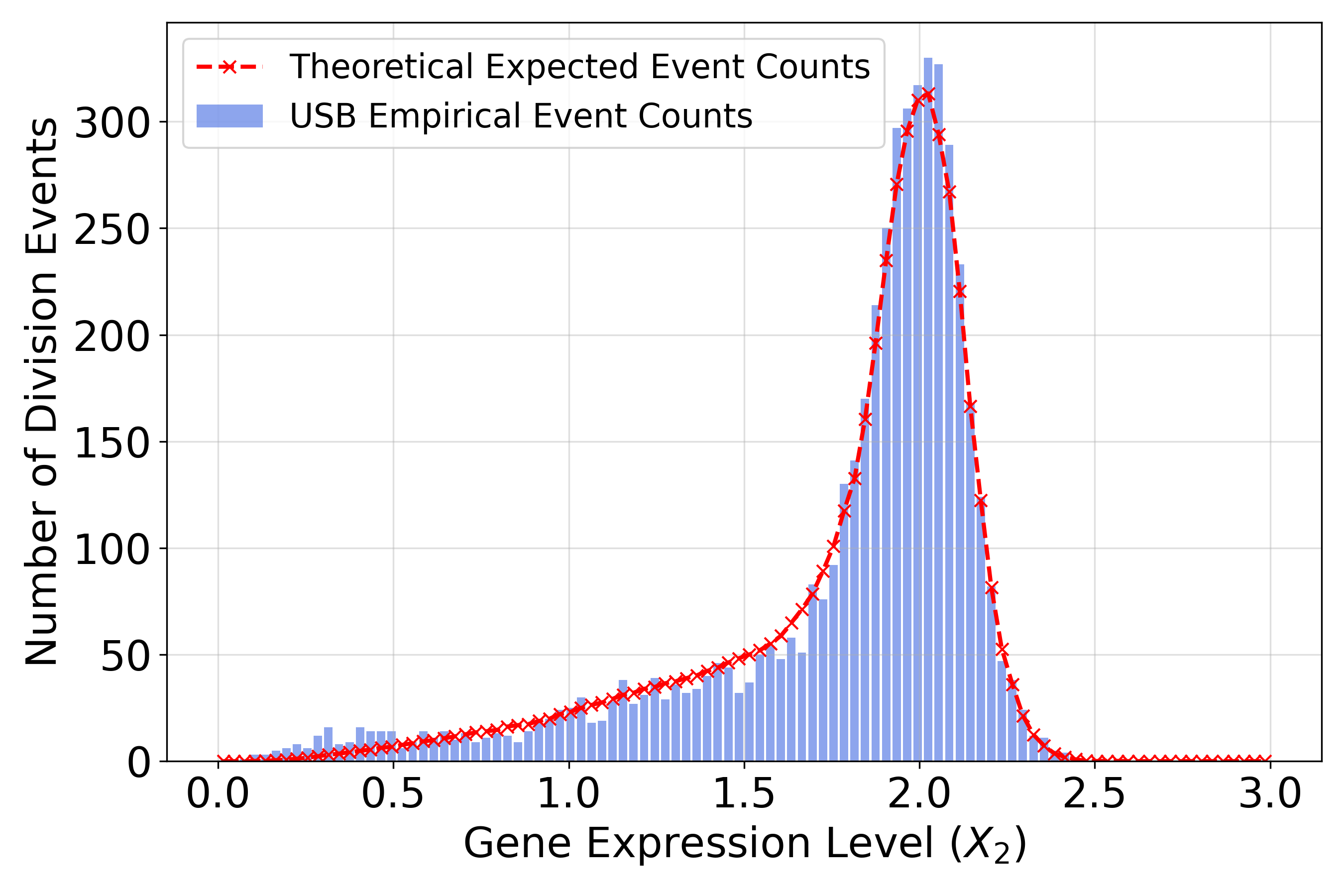}
    \caption{Event-Count Statistics of cell division on Simulation data.}
    \label{fig:event count}
\end{figure}

At macroscopic level, we calculate the \textbf{Branch Occupancy Ratios} on Dyngen dataset. We clustered real terminal states (time point 4) into two branches via K-means to establish ground-truth branch occupancy. We then applied this classifier to the terminal states of 15,700 USB-simulated branching trajectories. The ground-truth ratio is 0.708 : 0.292, while USB predicts 0.650 : 0.350. The accumulation of microscopic discrete events in USB successfully calibrates macroscopic bifurcation structures, faithfully reproducing true biological dynamics.

\section{Conclusion, Limitation and Discussion}
In this work, we introduced USB, a framework rooted in the BSB problem. By employing unbalanced score matching, USB efficiently integrates the stochastic and unbalanced effect inherent in single-cell dynamics, and explicitly permits the discrete simulation of birth-death dynamics. We validated the effectiveness and robustness of the method on both synthetic and real scRNA-seq datasets, demonstrating its significant potential for biological trajectory inference. 

A limitation of USB is that the semi-coupling of BSB and its RUOT relaxation are intractable. Consequently, we use WFR for approximation. The underlying mathematics merits further investigation. However, the unbalanced score matching framework we have established is indeed general: provided that the static semi-coupling and the conditional path of the target problem, our approach can be extended to other domains. This generality renders USB applicable to a wider range of dynamical modeling or machine learning scenarios involving stochasticity and unbalance, or featuring microscopic branching and discrete mass variations.

\section*{Code Availability}
The code is available at https://github.com/Sirin6/USB.

\section*{Impact Statement}
This paper presents work whose goal is to advance the field of machine learning. There are many potential societal consequences of our work, none of which we feel must be specifically highlighted here.

\section*{Acknowledgments}
This work was supported by the National Natural Science Foundation of China (NSFC No. 12225102 to L.Z., 8206100646 to P.Z., T2321001 to P.Z. \& L.Z., and 12288101 to P.Z. \& L.Z.) and the National Key R\&D Program of China No.2024YFA0919500 to L.Z. We thank the anonymous referees for their valuable feedback and constructive suggestions.

\bibliography{example_paper}
\bibliographystyle{icml2026}

\newpage
\appendix
\onecolumn

\section{Proofs}
\label{Appendix:A}
Proofs of theorems and propositions
\subsection{Proof of Theorem \ref{thm:marginalization}}
\label{pf:marginalization}
\textbf{Theorem \ref{thm:marginalization}.}
\textit{The marginal vector field and rate function (\ref{marginal}) generates the marginal measure path (\ref{marginal p}) for any $q(\bm{z})$ independent of $\bm{x}$ and $t$. The score function and $\bm{u}^{\circ}$ also satisfies the marginalization relation $\bm{s}_t(\bm{x})=\int \bm{s}_t(\bm{x}\vert \bm{z})\tfrac{\rho_t(\bm{x}\vert \bm{z})q(\bm{z})}{\rho_t(\bm{x})}\,\mathrm{d}\bm{z}, \bm{u}^{\circ}_t(\bm{x})=\int \bm{u}^{\circ}_t(\bm{x}\vert \bm{z})\tfrac{\rho_t(\bm{x}\vert \bm{z})q(\bm{z})}{\rho_t(\bm{x})}\,\mathrm{d}\bm{z}$.}

\textit{Proof.} To show that the marginals generate the marginal measure path (\ref{marginal p}), it is sufficient to show that the marginal measure $\rho_t(\bm{x})$ satisfies the Fokker-Planck equation with source term (\ref{marginal FPK equation with source term}). 
$$
\begin{aligned}
\partial_t\rho_t(\bm{x})&=\frac{d}{dt}\int \rho_t(\bm{x}\vert \bm{z})q(\bm{z})\mathrm{d} \bm{z}\\
&=\int \partial_t \rho_t(\bm{x}\vert \bm{z})q(\bm{z})\mathrm{d} \bm{z}\\
&\overset{(1)}{=}\int -\nabla_{\bm{x}}\cdot(\bm{u}_t(\bm{x}\vert \bm{z}) \rho_t(\bm{x}\vert \bm{z}))q(\bm{z})\mathrm{d} \bm{z}+\int \frac{\nu^2}{2}\Delta_{\bm{x}}\rho_t(\bm{x}|\bm{z})q(\bm{z})\mathrm{d} \bm{z} + \int g_t(\bm{x}\vert \bm{z}) \rho_t(\bm{x}\vert \bm{z})q(\bm{z})\mathrm{d} \bm{z}\\
&=-\nabla_{\bm{x}}\cdot\int \bm{u}_t(\bm{x}\vert \bm{z}) \rho_t(\bm{x}\vert \bm{z})q(\bm{z})\mathrm{d} \bm{z}+\frac{\nu^2}{2}\Delta_{\bm{x}}\int\rho_t(\bm{x}|\bm{z})q(\bm{z})\mathrm{d} \bm{z}+\int g_t(\bm{x}\vert \bm{z}) \rho_t(\bm{x}\vert \bm{z})q(\bm{z})\mathrm{d} \bm{z}\\
&=-\nabla_{\bm{x}}\cdot\big{(}\rho_t(\bm{x})\int \bm{u}_t(\bm{x}\vert \bm{z}) \frac{\rho_t(\bm{x}\vert \bm{z})q(\bm{z})}{\rho_t(\bm{x})}\mathrm{d} \bm{z}\big{)}+\frac{\nu^2}{2}\Delta_{\bm{x}}\rho_t(\bm{x})+\rho_t(\bm{x})\int g_t(\bm{x}\vert \bm{z}) \frac{\rho_t(\bm{x}\vert \bm{z})q(\bm{z})}{\rho_t(\bm{x})}\mathrm{d} \bm{z}\\
&\overset{(2)}{=}-\nabla_{\bm{x}}\cdot(\rho_t(\bm{x})\bm{u}_t(\bm{x}))+\frac{\nu^2}{2}\Delta_{\bm{x}}\rho_t(\bm{x})+\rho_t(\bm{x})g_t(\bm{x})\\
\end{aligned}
$$
In $(1)$, we use the Fokker-Planck equation with source term of the conditional measure $\rho_t(\bm{x}\vert \bm{z})$ (\ref{conditional FPK equation with source term}). In $(2)$, we use the definition of marginal vector field and rate function (\ref{marginal}). Thus, the marginal measure $\rho_t(\bm{x})$ satisfies the Fokker-Planck equation with source term (\ref{marginal FPK equation with source term}). 

By direct calculation,
$$
\begin{aligned}
\bm{s}_t(\bm{x}) &= \nabla_{\bm{x}}\operatorname{ln}\rho_t(\bm{x})\\
&=\frac{1}{\rho_t(\bm{x})}\nabla_{\bm{x}}\rho_t(\bm{x})\\
&=\frac{1}{\rho_t(\bm{x})}\nabla_{\bm{x}}\int\rho_t(\bm{x}\vert \bm{z})q(\bm{z})\,\mathrm{d}\bm{z}\\
&=\frac{1}{\rho_t(\bm{x})}\int\nabla_{\bm{x}}\rho_t(\bm{x}\vert \bm{z})q(\bm{z})\,\mathrm{d}\bm{z}\\
&=\frac{1}{\rho_t(\bm{x})}\int \nabla_{\bm{x}}\operatorname{ln}\rho_t(\bm{x}\vert \bm{z})\rho_t(\bm{x}\vert \bm{z})q(\bm{z})\,\mathrm{d}\bm{z}\\
&=\int \bm{s}_t(\bm{x}\vert \bm{z})\frac{\rho_t(\bm{x}\vert \bm{z})q(\bm{z})}{\rho_t(\bm{x})}\,\mathrm{d}\bm{z}\\
\bm{u}^{\circ}_t(\bm{x}) &= \bm{u}_t(\bm{x})-\frac{\nu^2}{2}\bm{s}_t(\bm{x})\\
&=\int \bm{u}_t(\bm{x}\vert \bm{z})\tfrac{\rho_t(\bm{x}\vert \bm{z})q(\bm{z})}{\rho_t(\bm{x})}\,\mathrm{d}\bm{z}-\frac{\nu^2}{2}\int \bm{s}_t(\bm{x}\vert \bm{z})\tfrac{\rho_t(\bm{x}\vert \bm{z})q(\bm{z})}{\rho_t(\bm{x})}\,\mathrm{d}\bm{z}\ \\
&=\int \big(\bm{u}_t(\bm{x}\vert \bm{z})-\frac{\nu^2}{2}\bm{s}_t(\bm{x}\vert \bm{z})\big)\tfrac{\rho_t(\bm{x}\vert \bm{z})q(\bm{z})}{\rho_t(\bm{x})}\,\mathrm{d}\bm{z}\\
&=\int \bm{u}^{\circ}_t(\bm{x}\vert \bm{z})\tfrac{\rho_t(\bm{x}\vert \bm{z})q(\bm{z})}{\rho_t(\bm{x})}\,\mathrm{d}\bm{z}\\
\end{aligned}
$$

The relation between conditionals and marginals is verified. The proof is completed.\hfill $\square$

\subsection{Proof or Proposition \ref{prop:CGMP}}
\label{pf:CGMP}
\textbf{Proposition \ref{prop:CGMP}.}
\text{For CGMP (\ref{CGMP}),} 
\begin{equation*}
\left \{
\begin{aligned}
&\bm{u}^{\circ}_t(\bm{x}\vert \bm{z}) = \frac{\sigma_t'(\bm{z})}{\sigma_t(\bm{z})} \big(\bm{x} - \bm{\eta}_t(\bm{z})\big) + \bm{\eta}_t'(\bm{z})\\
&g_t(\bm{x}\vert \bm{z}) = \partial_t \ln m_t(\bm{z})\\
&\bm{s}_t(\bm{x}\vert \bm{z})= -\frac{\bm{x} - \bm{\eta}_t(\bm{z})}{\sigma_t^2(\bm{z})} 
\end{aligned}
\right .
\end{equation*}

\textit{Proof.}
The CMGP satisfies the following conditional Fokker-Planck equation with source term (\ref{conditional FPK equation with source term})
$$
\partial_t\rho_t(\bm{x}\vert \bm{z})+\nabla_{\bm{x}}\cdot(\bm{u}_t(\bm{x}\vert \bm{z})\rho_t(\bm{x}\vert \bm{z}))=\frac{\nu^2}{2}\Delta_{\bm{x}}\rho_t(\bm{x}\vert \bm{z})+\rho_t(\bm{x}\vert \bm{z})g_t(\bm{x}\vert \bm{z})
$$
Expanding the conditional measure into mass and density, the equation become
\begin{equation}
m_t(\bm{z})\partial_t\tilde{\rho}_t(\bm{x}\vert \bm{z})+\partial_tm_t(\bm{z})\tilde{\rho}_t(\bm{x}\vert \bm{z})+m_t(\bm{z})\nabla_{\bm{x}}\cdot(\bm{u}_t(\bm{x}\vert \bm{z})\tilde{\rho}_t(\bm{x}\vert \bm{z}))=\frac{\nu^2}{2}m_t(\bm{z})\Delta_{\bm{x}}\tilde{\rho}_t(\bm{x}\vert \bm{z})+m_t(\bm{z})\tilde{\rho}_t(\bm{x}\vert \bm{z})g_t(\bm{x}\vert \bm{z})
\end{equation}
Since $m_t(\bm{z})>0$ for all $t$, we devide the both sides of the equation by $m_t(\bm{z})$ and get
\begin{equation}
\partial_t\tilde{\rho}_t(\bm{x}\vert \bm{z})+\partial_t\operatorname{ln}m_t(\bm{z})\tilde{\rho}_t(\bm{x}\vert \bm{z})+\nabla_{\bm{x}}\cdot(\bm{u}_t(\bm{x}\vert \bm{z})\tilde{\rho}_t(\bm{x}\vert \bm{z}))=\frac{\nu^2}{2}\Delta_{\bm{x}}\tilde{\rho}_t(\bm{x}\vert \bm{z})+\tilde{\rho}_t(\bm{x}\vert \bm{z})g_t(\bm{x}\vert \bm{z})
\end{equation}
We further reorganize it to
\begin{equation}
\partial_t\tilde{\rho}_t(\bm{x}\vert \bm{z})+\nabla_{\bm{x}}\cdot(\bm{u}_t(\bm{x}\vert \bm{z})\tilde{\rho}_t(\bm{x}\vert \bm{z}))=\frac{\nu^2}{2}\Delta_{\bm{x}}\tilde{\rho}_t(\bm{x}\vert \bm{z})+\tilde{\rho}_t(\bm{x}\vert \bm{z})\big(g_t(\bm{x}\vert \bm{z})-\partial_t\operatorname{ln}m_t(\bm{z})\big)
\end{equation}
Set $g_t(\bm{x}\vert \bm{z})=\partial_t\operatorname{ln}m_t(\bm{z})$, the equation of the density $\tilde{\rho}_t(\bm{x}\vert \bm{z})$ reduces to the Fokker-Planck equation
\begin{equation}
\partial_t\tilde{\rho}_t(\bm{x}\vert \bm{z})+\nabla_{\bm{x}}\cdot(\bm{u}_t(\bm{x}\vert \bm{z})\tilde{\rho}_t(\bm{x}\vert \bm{z}))=\frac{\nu^2}{2}\Delta_{\bm{x}}\tilde{\rho}_t(\bm{x}\vert \bm{z})
\end{equation}
We then absorb the diffusion term into the drift term by introducing \(\bm{u}^{\circ}_t(\bm{x}\vert\bm{z })=\bm{u}_t(\bm{x}\vert\bm{z })-\frac{\nu^2}{2}\bm{s}_t(\bm{x}\vert\bm{z })\)
\begin{equation}
\partial_t\tilde{\rho}_t(\bm{x}\vert \bm{z})+\nabla_{\bm{x}}\cdot\big(\bm{u}^{\circ}_t(\bm{x}\vert \bm{z})\tilde{\rho}_t(\bm{x}\vert \bm{z})\big)=0
\end{equation}

Note that this equation is exactly the continuity equation of the conditional Gaussian path in conditional flow matching. Thus, the conditional vector field of CGMP shares the same form with the conditional vector field of conditional Gaussian path in balanced conditional flow matching, which is $\bm{u}^{\circ}_t(x|z) = \frac{\bm{\sigma}_t'(\bm{z})}{\bm{\sigma}_t(\bm{z})} \big(\bm{x} - \bm{\eta}_t(\bm{z})\big) + \bm{\eta}_t'(\bm{z})$. 

The conditional score function of CGMP is just the score function of a family of Gaussian distributions
$$
\begin{aligned}
\bm{s}_t(\bm{x}\vert\bm{z})&=\nabla_{\bm{x}}\operatorname{ln}\rho_t(\bm{x}\vert\bm{z})\\
&\overset{(1)}{=}\nabla_{\bm{x}}\operatorname{ln}\tilde{\rho}_t(\bm{x}\vert\bm{z})\\
&\overset{(2)}{=}\nabla_{\bm{x}}\operatorname{ln}\mathcal{N}(\bm{x}\vert \bm{\eta}_t(\bm{z}),\sigma^2_t(\bm{z})\mathrm{I})\\
&=-\frac{\bm{x} - \bm{\eta}_t(\bm{z})}{\sigma_t^2(\bm{z})} 
\end{aligned}
$$
In $(1)$, note that the mass term \(m_t(\bm{z})\) is independent of \(\bm{x}\), thus the score function of \(\rho_t\) and \(\tilde{\rho}_t\) are equal. In $(2)$, we use the definition of CGMP \(\tilde{\rho}_t(\bm{x}\vert\bm{z})=\mathcal{N}(\bm{x}\vert \bm{\eta}_t(\bm{z}),\sigma^2_t(\bm{z})\mathrm{I})\) (\ref{CGMP}).

The proof is completed. \hfill $\square$

\subsection{Proof of Theorem \ref{thm:equal gradient}}
\label{pf:equal gradient}
\textbf{Theorem \ref{thm:equal gradient}.}
\textit{If $\rho_t(\bm{x}) > 0$ for all $\bm{x} \in \mathcal{X}$ and $t \in [0,1]$, and $q(\bm{z})$ is independent of $\bm{x}$ and $t$, then $\mathcal{L}_{{\rm USM}} (\bm{\theta})=\mathcal{L}_{{\rm CUSM}}(\bm{\theta})+C$, where $C$ is independent of $\bm{\theta}$. Thus they have identical gradients w.r.t $\bm{\theta}$, i.e.
\[
\nabla_{\bm{\theta}} \mathcal{L}_{{\rm USM}}(\bm{\theta}) = \nabla_{\bm{\theta}} \mathcal{L}_{{\rm CUSM}}(\bm{\theta}).
\]}

\textit{Proof.}
Recall the two objectives
\begin{equation*}
\begin{aligned}
&\mathcal{L}_{\text{USM}}(\bm{\theta})=
\int_{0}^{1}\int_\mathcal{X} (\left\| \bm{\bm{v}_{\theta}}(\bm{x},t) - \bm{u}^{\circ}_t(\bm{x}) \right\|_2^2+\left\| g_{\bm{\theta}}(\bm{x},t) - g_t(\bm{x}) \right\|_2^2
+\lambda^2(t)\left\| \bm{s}_{\bm{\theta}}(\bm{x},t) - \bm{s}_t(\bm{x}) \right\|_2^2)\rho_t(\bm{x})\mathrm{d} \bm{x}\mathrm{d}t\\
&\mathcal{L}_{\text{CUSM}}(\bm{\theta})=
\mathbb{E}_{t\sim \mathcal{U}[0,1], \bm{z}\sim q(\bm{z}), \bm{x}\sim \tilde{\rho}_t(\bm{x}\vert \bm{z})}m_t(\bm{z})(\left\| \bm{\bm{v}_{\theta}}(\bm{x},t) - \bm{u}^{\circ}_t(\bm{x}\vert\bm{z}) \right\|_2^2
+\left\| g_{\bm{\theta}}(\bm{x},t)- g_t(\bm{x}\vert\bm{z}) \right\|_2^2
+\lambda^2(t)\left\| \bm{s}_{\bm{\theta}}(\bm{x},t) - \bm{s}_t(\bm{x}\vert\bm{z}) \right\|_2^2)
\end{aligned}
\end{equation*}

By direct calculation,
\[
\begin{aligned}
\mathcal{L}_{\text{USM}}(\bm{\theta})
&=
\mathbb{E}_{t\sim\mathcal{U}[0,1]}\int_{\mathcal{X}}\Big(\|\bm{v}_{\bm{\theta}}(\bm{x},t)\|_2^2
-2\langle \bm{v}_{\bm{\theta}}(\bm{x},t),\bm{u}^{\circ}_t(\bm{x})\rangle
+\|g_{\bm{\theta}}(\bm{x},t)\|_2^2
-2\langle g_{\bm{\theta}}(\bm{x},t),g_t(\bm{x})\rangle\\
&\qquad\qquad\qquad\qquad+\lambda^2(t)\big(\|\bm{s}_{\bm{\theta}}(\bm{x},t)\|_2^2-2\langle \bm{s}_{\bm{\theta}}(\bm{x},t),\bm{s}_t(\bm{x})\rangle\big)\Big)\rho_t(\bm{x})\mathrm{d}\bm{x}+C.\\
&\overset{(1)}{=}
\mathbb{E}_{t\sim\mathcal{U}[0,1]}\int_{\mathcal{X}}\int\Big(\|\bm{v}_{\bm{\theta}}(\bm{x},t)\|_2^2
-2\langle \bm{v}_{\bm{\theta}}(\bm{x},t),\bm{u}^{\circ}_t(\bm{x}\vert\bm{z})\rangle
+\|g_{\bm{\theta}}(\bm{x},t)\|_2^2
-2\langle g_{\bm{\theta}}(\bm{x},t),g_t(\bm{x}\vert\bm{z})\rangle\\
&\qquad\qquad\qquad\qquad+\lambda^2(t)\big(\|\bm{s}_{\bm{\theta}}(\bm{x},t)\|_2^2-2\langle \bm{s}_{\bm{\theta}}(\bm{x},t),\bm{s}_t(\bm{x}\vert\bm{z})\rangle\big)\Big)\,
m_t(\bm{z})\tilde{\rho}_t(\bm{x}\vert\bm{z})q(\bm{z})\,\mathrm{d}\bm{z}\,\mathrm{d}\bm{x}+C\\
&=\mathbb{E}_{t\sim\mathcal{U}[0,1]}\int_{\mathcal{X}}\int \Big(\left\| \bm{\bm{v}_{\theta}}(\bm{x},t) - \bm{u}^{\circ}_t(\bm{x}\vert\bm{z}) \right\|_2^2
+\left\| g_{\bm{\theta}}(\bm{x},t)- g_t(\bm{x}\vert\bm{z}) \right\|_2^2\\
&\qquad\qquad\qquad\qquad+\lambda^2(t)\left\| \bm{s}_{\bm{\theta}}(\bm{x},t) - \bm{s}_t(\bm{x}\vert\bm{z}) \right\|_2^2\Big)m_t(\bm{z})\tilde{\rho}_t(\bm{x}\vert\bm{z})q(\bm{z})\mathrm{d}\bm{z}\,\mathrm{d}\bm{x}+C\\
&=\mathbb{E}_{t\sim \mathcal{U}[0,1], \bm{z}\sim q(\bm{z}), \bm{x}\sim \tilde{\rho}_t(\bm{x}\vert \bm{z})}m_t(\bm{z})\Big(\left\| \bm{\bm{v}_{\theta}}(\bm{x},t) - \bm{u}^{\circ}_t(\bm{x}\vert\bm{z}) \right\|_2^2
+\left\| g_{\bm{\theta}}(\bm{x},t)- g_t(\bm{x}\vert\bm{z}) \right\|_2^2\\
&\qquad\qquad\qquad\qquad+\lambda^2(t)\left\| \bm{s}_{\bm{\theta}}(\bm{x},t) - \bm{s}_t(\bm{x}\vert\bm{z}) \right\|_2^2\Big)+C\\
&=\mathcal{L}_{\text{CUSM}}(\bm{\theta})+C
\end{aligned}
\]
where \(C\) is independent of \(\bm{\theta}\). In \((1)\), we use the relation between conditionals and marginals (\ref{marginal}, \ref{thm:marginalization}). Since the constant $C$ is independent of $\bm{\theta}$, we have
\[
\nabla_{\bm{\theta}} \mathcal{L}_{\text{USM}}(\bm{\theta}) = \nabla_{\bm{\theta}} \mathcal{L}_{\text{CUSM}}(\bm{\theta})
\]

The proof is completed. \hfill $\square$

\subsection{Proof of Proposition \ref{prop:mm-ruot}}
\label{pf:mm-ruot}
\textbf{Proposition \ref{prop:mm-ruot}.}
\textit{The solution to the multi-time RUOT problem (\ref{MM-RUOT}) is equivalent to the concatenation of the solutions of successive time points.}

\textit{Proof. (adopted from \citep{wfr_fm} with modification)}
Let $t_0 < t_1 < \cdots < t_K$ be the discrete time points. For each $k=1,\dots,K$, denote by $(\rho^{(k)}(\bm{x},t), \bm{u}^{(k)}(\bm{x},t), g^{(k)}(\bm{x},t))$ the optimal solution of the two-time RUOT problem between $\mu_{t_{k-1}}$ and $\mu_{t_k}$:
\begin{equation}
\text{RUOT}(\mu_{t_{k-1}}, \mu_{t_k}) = 
(t_k-t_{k-1}) \inf_{\rho, u, g} \int_{t_{k-1}}^{t_k} \int_\mathcal{X} \Big( \frac{1}{2}\|\bm{u}(\bm{x},t)\|_2^2 + \Psi(g(\bm{x},t)) \Big) \rho(\bm{x},t) \, d\bm{x} \, dt
\end{equation}
subject to
\begin{equation}
    \partial_t \rho + \nabla_{\bm{x}} \cdot (\rho \bm{u}) = \frac{\nu^2}{2}\Delta_{\bm{x}}\rho+\rho g, \quad 
\rho_{t_{k-1}} = \mu_{t_{k-1}}, \ \rho_{t_k} = \mu_{t_k}.
\end{equation}

Now define the concatenated trajectory
\[
\tilde{\rho}(\bm{x},t) = \rho^{(k)}(\bm{x},t), \quad 
\tilde{\bm{u}}(\bm{x},t) = \bm{u}^{(k)}(\bm{x},t), \quad
\tilde{g}(\bm{x},t) = g^{(k)}(\bm{x},t), \quad \text{for } t \in [t_{k-1}, t_k].
\]
Since $\rho^{(k)}_{t_{k}} = \mu_{t_k} = \rho^{(k+1)}_{t_k}$, the concatenated triple $(\tilde{\rho},\tilde{\bm{u}},\tilde{g})$ is admissible for the multi-time RUOT problem.

The corresponding cost is
\begin{equation}
\begin{aligned}
&(t_K-t_0) \int_{t_0}^{t_{K}}\int_{\mathcal{X}}  
\big(\frac{1}{2}\|\tilde{\bm{u}}(\bm{x},t)\|_2^2+\Psi(\tilde{g}(\bm{x},t))\big)\tilde{\rho}(\bm{x},t)\, d\bm{x} \, dt \\
&= \sum_{k=1}^K \frac{t_K-t_0}{t_k-t_{k-1}} \,
\text{RUOT}(\mu_{t_{k-1}}, \mu_{t_k}).
\end{aligned}
\end{equation}
Hence
\begin{equation}
\text{RUOT}(\{\mu_{t_0},\ldots, \mu_{t_{K}}\}) \;\leq\;
\sum_{k=1}^K \frac{t_K-t_0}{t_k-t_{k-1}} \,
\text{RUOT}(\mu_{t_{k-1}}, \mu_{t_k}).
\end{equation}

To see that the equality must hold, assume by contradiction that there exists another admissible solution 
$(\rho',\bm{u}',g')$ such that
\[
(t_K-t_0) \int_{t_0}^{t_{K}}\int_{\mathcal{X}}  
\big(\frac{1}{2}\|\bm{u}'(\bm{x},t)\|_2^2+\Psi(g'(\bm{x},t))\big)\rho'(\bm{x},t)\, d\bm{x} \, dt
\;<\;
\sum_{k=1}^K \frac{t_K-t_0}{t_k-t_{k-1}} \,
\text{RUOT}(\mu_{t_{k-1}}, \mu_{t_k}).
\]
Since the total cost is a sum over the disjoint intervals $[t_{k-1},t_k]$, this strict inequality implies that 
there must exist at least one index $k^\ast$ such that
\[
(t_K-t_0)\int_{t_{k^\ast-1}}^{t_{k^\ast}} \int_{\mathcal{X}} 
\big(\frac{1}{2}\|\bm{u}'(\bm{x},t)\|_2^2+\Psi(g'(\bm{x},t))\big)\rho'(\bm{x},t)\, d\bm{x}\, dt
\;<\;
\frac{t_K-t_0}{t_{k^\ast}-t_{k^\ast-1}} \,
\text{RUOT}(\mu_{t_{k^\ast-1}}, \mu_{t_{k^\ast}}).
\]
Dividing both sides by the positive factor $\tfrac{t_K-t_0}{t_{k^\ast}-t_{k^\ast-1}}$ gives
\[
(t_{k^\ast}-t_{k^\ast-1})\int_{t_{k^\ast-1}}^{t_{k^\ast}} \int_{\mathcal{X}} 
\big(\frac{1}{2}\|\bm{u}'(\bm{x},t)\|_2^2+\Psi(g'(\bm{x},t))\big)\rho'(\bm{x},t)\, d\bm{x}\, dt
\;<\;
\text{RUOT}(\mu_{t_{k^\ast-1}}, \mu_{t_{k^\ast}}).
\]
But this contradicts the definition of 
$\text{RUOT}(\mu_{t_{k^\ast-1}}, \mu_{t_{k^\ast}})$ 
as the minimal cost between $\mu_{t_{k^\ast-1}}$ and $\mu_{t_{k^\ast}}$. 
Thus, no admissible solution can have strictly smaller cost than the concatenated one. 

We conclude that the concatenated trajectory
\begin{equation}
    (\rho^*(x,t), \bm{u}^*(x,t), g^*(x,t)) 
= \bigcup_{k=1}^K (\rho^{(k)}, \bm{u}^{(k)}, g^{(k)}), \quad t \in [t_{k-1},t_k],
\end{equation}{}
achieves the infimum of the multi-time problem and is therefore optimal. The proof is completed.\hfill $\square$

\textbf{Remark}: This is true only when the dynamics between adjacent time pairs $(t_{k-1}, t_{k})$ are independent. In multi-marginal methods, such as MMFM \citep{rohbeck2025modeling}, 3MSBM \citep{theodoropoulos2025momentum} and MMSFM \citep{lee2025mmsfm} where high order continuity or global connections between different time pairs are introduced, proposition (\ref{prop:mm-ruot}) will never hold.

\section{Implementation details}
\label{Appendix:B}

\subsection{Weighting schedule \(\lambda(t)\)}
\label{supp:score weight}
For numerical stability, we use the same weighting schedule \(\lambda(t)=\nu\sqrt{t(1-t)}\) as \citep{tong2023simulationfree} to normalize the score loss. With this weighting schedule, the regression target of the weighted score net \(\lambda(t)\bm{s}_{\bm{\theta}}(\bm{x},t)\) is simplified to a standard Gaussian noise \(\bm{\epsilon}_t\sim\mathcal{N}(\bm{0},\mathrm{I})\) instead of the conditional score function (\ref{PB-bridge conditional}) including a division over \(\nu^2 t(1-t)\) which may cause numerical instability when \(t\) is near to \(0,1\). 
\begin{equation}
\begin{aligned}
\lambda^2(t)\left\| \bm{s}_{\bm{\theta}}(\bm{x},t) - \bm{s}_t(\bm{x}\vert\bm{z}) \right\|_2^2&=\left\| \lambda(t)\bm{s}_{\bm{\theta}}(\bm{x},t) - \frac{\bm{\eta_t}-\bm{x}}{\nu\sqrt{t(1-t)}} \right\|_2^2\\
&\overset{(1)}{=}\left\| \lambda(t)\bm{s}_{\bm{\theta}}(\bm{x},t) + \bm{\epsilon}_t \right\|_2^2
\end{aligned}
\end{equation}
In \((1)\), note that \(\bm{x}\sim\mathcal{N}(\bm{\eta}_t,\nu^2t(1-t)\mathrm{I})\), hence \(\frac{\bm{x}-\bm{\eta_t}}{\nu\sqrt{t(1-t)}}=\bm{\epsilon}_t\sim\mathcal{N}(\bm{0},\mathrm{I})\).

\subsection{Poisson-Brownian bridge}
\label{supp:PB-bridge}
Since BBM is a branching process, it generates multiple end points from single start point. Thus, it is impossible to find a conditional BBM trajectory given its start point and end point. One can only try to find a conditional process in weak sense, i.e. find a conditional measure path of BBM. According to the evolution equation of BBM (\ref{BBM eqn}), we know that BBM undergoes diffusion spatially, while following exponential growth in mass. At a microscopic level, if we approximate that all cells share the same exponential splitting clock, then the logarithm of the total cell count is a Poisson process, the limit of which recovers this exponential growth. For numerical tractability, in what follows we determine the conditional path for this weighted particle approximation.

Given two Diracs \(m_0\delta_{\bm{x}_0},m_1\delta_{\bm{x}_1}\), the stochastic process bridging \(\bm{x}_0,\bm{x}_1\) is a Brownian motion fixed on two sides, i.e. a Brownian bridge.
\begin{equation}
\mathcal{N}(t\bm{x}_1+(1-t)\bm{x}_0,\nu^2t(1-t)\mathrm{I})
\end{equation}

The stochastic process bridging \(\operatorname{ln}m_0,\operatorname{ln}m_1\) is a Poisson process fixed on two sides, i.e. a Poisson bridge \citep{PoissonBridge}. In cases where initial count and final count are two natural numbers \(N_0\le N_1\), the Poisson bridge between them is 
\begin{equation}
\label{eq:Poisson bridge}
\mathcal{P}(t)\sim N_0+Bernoulli(N_1-N_0,t)
\end{equation}
Now, consider adjusting the unit of mass from \(1\) to \(\Delta m\), so that the counts \(N_0\) and \(N_1\) become total masses \(M_0=\Delta mN_0\) and \(M_1=\Delta mN_1\), respectively. Consequently, the Poisson bridge connecting these states is
\begin{equation}
\label{eq:Poisson bridge with mass}
\mathcal{P}(t) \sim M_0+\Delta m\cdot Bernoulli(N_1-N_0,t)
\end{equation}
When \(N_1-N_0\) is large, one utilize the central limit theorem (CLT)
\begin{equation}
\begin{aligned}
\mathcal{P}(t) &\approx M_0+\Delta m\cdot \mathcal{N}((N_1-N_0)t,(N_1-N_0)t(1-t))\\
&=M_0+\mathcal{N}(\Delta m(N_1-N_0)t,\Delta m^2(N_1-N_0)t(1-t))\\
&=\mathcal{N}(tM_1+(1-t)M_0,\Delta m(M_1-M_0)t(1-t))\\
&=\mathcal{N}(tM_1+(1-t)M_0,O(\Delta m))
\end{aligned}
\end{equation}
Let \(\Delta m\to0\), we finally recover the linear mass variation on log scale.
\begin{equation}
\begin{aligned}
\mathcal{P}(t) &\approx\mathcal{N}(tM_1+(1-t)M_0,O(\Delta m))\\
&\to\delta_{tM_1+(1-t)M_0}\quad(\Delta m \to 0)
\end{aligned}
\end{equation}

We point out that this approximation is justified due to the large sample size of single-cell sequencing data. Therefore, given two Dirac measures (representing the precise states at the start and endpoints), we connect their positions using a Brownian bridge, and linearly connect their masses on log scale. We term this the Poisson-Brownian bridge (\ref{PB-bridge}).

\subsection{Approximating RUOT semi-coupling}
\label{supp:taylor expansion}
In a word, we approximate the RUOT semi-coupling by a static WFR semi-coupling which can be easily constructed from the coupling of an optimal entropy-transport problem (OET), and the OET problem can be easily solved using the POT package \citep{pot}.

In order to find the semi-coupling induced by the USB problem (\ref{eq:USB}), we approximate it by solving its relaxation -- RUOT (\ref{RUOT}). When the referencing BBM has diffusion parameter \(\nu\), branching rate \(\lambda\), and branching mechanism \(\bm{p}_0=\bm{p}_2=\frac{1}{2}, \bm{p}_k=0,k\neq0,2\), the corresponding RUOT is
\begin{equation}
\begin{aligned}
\label{eq:corresponding RUOT}
\text{RUOT}(\mu_0, \mu_1) =  \inf_{\rho, \bm{u}, g} \int_0^1 \int_\mathcal{X} \Big( \frac{1}{2}\|\bm{u}(\bm{x},t)\|_2^2 + \Psi(g(\bm{x},t)) \Big) \rho(\bm{x},t) \, d\bm{x} \, dt\\
\text{s.t.}\ \ \ \ \ \ \ \ \ \ \ \ \ \ \ \ \ \ \ \ \ \ \ \ \  \partial_t\rho+\nabla_{\bm{x}}\cdot(\rho \bm{u})=\frac{\nu^2}{2}\Delta_{\bm{x}}\rho+\rho g,\ \rho_0=\mu_0,\ \rho_1=\mu_1
\end{aligned}
\end{equation}
with growth penalty \(\Psi_{\nu,\lambda}(g)\) determined by its Legendre transform \citep{BSB}
\begin{equation}
\label{supp eq:dual growth penalty}
\Psi_{\nu,\lambda}^{*}(g)=\nu^2\lambda(\operatorname{cosh}(\frac{g}{\nu^2})-1)
\end{equation}
By applying Legendre transform to \(\Psi^{\star}\), we obtain the explicit form of the growth penalty.
\begin{equation}
\begin{aligned}
\Psi(g) &= 1-\sqrt{1+g^2}+g\operatorname{ln}(\sqrt{1+g^2}+g)\\
\Psi_{\nu,\lambda}(g) &= \nu^2\lambda\Psi(\frac{g}{\lambda})
\end{aligned}
\end{equation}

Since it is hard to find a static form for the RUOT problem or solve it with such a growth penalty, we approximate it by expanding the growth penalty to second order. 
\begin{equation}
\begin{aligned}
\Psi(g)&=\frac{1}{2}g^2+o(g^3)\\
\Psi_{\nu,\lambda}(g)&=\frac{\nu^2}{2\lambda}g^2+o(g^3)
\end{aligned}
\end{equation}
Doing so, we approximate the original RUOT problem with a WFR problem which is easy to solve.
\begin{equation}
\label{supp:static WFR}
\begin{aligned}
\text{RUOT}(\mu_0, \mu_1) &\approx \text{WFR}_{\delta}^2(\mu_0, \mu_1) = \inf_{\rho, \bm{u}, g} \frac{1}{2}\int_0^1 \int_\mathcal{X} \Big( \|\bm{u}(\bm{x},t)\|_2^2 + \delta^2|g(\bm{x},t)|^2 \Big) \rho(\bm{x},t) \, d\bm{x} \, dt\\
&= 2\delta^2\inf_{(\gamma_0,\gamma_1)\in (\mathcal{M}_+(\mathcal{X}^2))^2} \int_{\mathcal{X}^2} \Big(\gamma_0(\bm{x},\bm{y})+\gamma_1(\bm{x},\bm{y})-2\sqrt{\gamma_0(\bm{x},\bm{y})\gamma_1(\bm{x},\bm{y})}\operatorname{\overline{cos}}(\frac{\Vert\bm{x}-\bm{y}\Vert_2}{2\delta})\Big) \mathrm{d} \bm{x}\mathrm{d} \bm{y}
\end{aligned}
\end{equation}

where \(\delta=\nu/\sqrt{\lambda}\), $\operatorname{\overline{cos}}(x) = \operatorname{cos}(\min\{x,0\})$. The second row is known as the static WFR problem which is a minimization problem respect to the semi-coupling \((\gamma_0,\gamma_1)\) instead of \((\rho,\bm{u},g)\). To solve the static WFR problem, \citep{chizat2018unbalanced, liero2018optimal} proved that the static WFR problem is equivalent to the following optimal entropy-transport (OET) problem.
\begin{equation}
\label{OET}
\begin{aligned}
\text{OET}_{\delta}(\mu_0,\mu_1) &= 2\delta^2\inf_{\gamma\in\mathcal{M}_+(\mathcal{X}^2)} \{\int_{\mathcal{X}^2}  \, -2\operatorname{ln}\operatorname{\overline{cos}}(\frac{\Vert \bm{x}-\bm{y}\Vert_2}{2\delta})\gamma(\bm{x},\bm{y})\mathrm{d} \bm{x}\mathrm{d} \bm{y}\\&+\operatorname{KL}(\int_\mathcal{X}\gamma(\bm{x},\bm{y})\mathrm{d} \bm{y}\Vert \mu_0(\bm{x}))+\operatorname{KL}(\int_\mathcal{X}\gamma(\bm{x},\bm{y})\mathrm{d} \bm{x}\Vert \mu_1(\bm{y}))\}
\end{aligned}
\end{equation}
An OET problem can be easily solved using the generalized Sinkhorn-Knopp matrix scaling algorithm \citep{chizat2017OET} which is well implemented in the POT package \citep{pot}. To construct the semi-coupling \((\gamma_0,\gamma_1)\) from the OET coupling \(\gamma\), we follow the results of \citep{liero2018optimal,wfr_fm}.

\begin{theorem}
\label{thm:semi-coupling}
\citep{wfr_fm} Let $\gamma$ be the optimal coupling of the OET problem (\ref{OET}), then the semi-coupling $\gamma_0(\bm{x},\bm{y})=\frac{\gamma(\bm{x},\bm{y})}{\int_\mathcal{X}\gamma(\bm{x},\bm{z})\mathrm{d} \bm{z}}\mu_0(\bm{x}), \gamma_1(\bm{x},\bm{y})=\frac{\gamma(\bm{x},\bm{y})}{\int_\mathcal{X}\gamma(\bm{z},\bm{y})\mathrm{d} \bm{z}}\mu_1(\bm{y})$ solves the static WFR problem (\ref{static WFR}).
\end{theorem}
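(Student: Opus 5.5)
We plan to prove Theorem~\ref{thm:semi-coupling} by duality between the OET problem (\ref{OET}) and the static WFR problem (\ref{static WFR}), following \citep{liero2018optimal,chizat2018unbalanced}. For brevity write $c(\bm{x},\bm{y})=\operatorname{\overline{cos}}(\frac{\Vert\bm{x}-\bm{y}\Vert_2}{2\delta})$, and write $\gamma^{(0)}(\bm{x})=\int\gamma(\bm{x},\bm{y})\mathrm{d}\bm{y}$ and $\gamma^{(1)}(\bm{y})=\int\gamma(\bm{x},\bm{y})\mathrm{d}\bm{x}$ for the marginals of a candidate OET plan $\gamma$.

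\textbf{Step 1 (admissibility).} With this notation the proposed pair is $\gamma_0=\gamma\,\mu_0(\bm{x})/\gamma^{(0)}(\bm{x})$ and $\gamma_1=\gamma\,\mu_1(\bm{y})/\gamma^{(1)}(\bm{y})$. Integrating in $\bm{y}$ and $\bm{x}$ respectively gives $\int\gamma_0\,\mathrm{d}\bm{y}=\mu_0$ and $\int\gamma_1\,\mathrm{d}\bm{x}=\mu_1$. This requires $\mu_0\ll\gamma^{(0)}$ and $\mu_1\ll\gamma^{(1)}$. We will take these from the optimality conditions of OET. The first-order conditions of the generalized Sinkhorn problem \citep{chizat2017OET} give $\gamma=e^{\phi(\bm{x})+\psi(\bm{y})}c(\bm{x},\bm{y})^2\mu_0\otimes\mu_1$ on $\{c>0\}$, so the marginals of $\gamma$ have positive densities wherever the data allow. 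On any part of $\mu_i$ not charged by $\gamma$, we extend the pair by pure creation or destruction terms, i.e.\ mass placed where $\gamma_0\gamma_1=0$. These terms cost exactly their mass in both problems.

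\textbf{Step 2 (cost of the candidate).} We evaluate the static WFR objective at $(\gamma_0,\gamma_1)$. Using $\sqrt{\gamma_0\gamma_1}=\gamma\sqrt{\sigma_0(\bm{x})\sigma_1(\bm{y})}$ with $\sigma_i=\mathrm{d}\mu_i/\mathrm{d}\gamma^{(i)}$, the objective becomes
\[
\int\mu_0+\int\mu_1-2\int\sqrt{\sigma_0\sigma_1}\,c\,\mathrm{d}\gamma .
\]
Next we use the OET optimality condition: the log-densities $\ln\sigma_0$ and $\ln\sigma_1$ are exactly the dual potentials. Complementary slackness, $\ln\sigma_0(\bm{x})+\ln\sigma_1(\bm{y})=2\ln c(\bm{x},\bm{y})$ $\gamma$-a.e., then gives $\sqrt{\sigma_0\sigma_1}\,c=c^2$ or an equivalent identity. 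From this the candidate value should equal the OET value $\text{OET}_\delta(\mu_0,\mu_1)$.

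\textbf{Step 3 (lower bound).} For any semi-coupling $(\gamma_0',\gamma_1')$ we show that the WFR cost is at least the OET value. Set $\gamma'=\sqrt{\gamma_0'\gamma_1'}$, which is well defined after taking densities with respect to a common dominating measure. The OET objective at $\gamma'$ is bounded by the WFR objective at $(\gamma_0',\gamma_1')$. This follows from the pointwise inequality $a+b-2\sqrt{ab}\,c\ge$ (the entropy-plus-transport integrand). It in turn comes from the convexity bound $-\ln c\ge 1-c$ together with the KL bound, i.e.\ the usual reverse construction of \citep{liero2018optimal}. We also use the equality $\text{WFR}=\text{OET}$ already cited above (\ref{OET}). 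Combining this with Step~2, the candidate attains the infimum.

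\textbf{Main obstacle.} The hard part is Step~2: identifying the Radon--Nikodym factors $\sigma_i$ with the optimal dual potentials and checking that the cost collapses exactly, including the region where $c=0$, i.e.\ distance at least $\pi\delta$. There the cut-off cosine makes $\gamma$ vanish and the mass must be accounted for as creation or destruction. I would handle this first with discrete measures, where Sinkhorn's scaling form is explicit, and pass to general measures by weak approximation and lower semicontinuity.
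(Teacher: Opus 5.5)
Your overall route (OET primal--dual optimality, evaluate the candidate, compare with the value) is the Liero--Mielke--Savar\'e argument that the paper only imports by citation. Your creation/destruction fallback is genuinely needed: for $\mu_0=\delta_{\bm{x}}$, $\mu_1=\delta_{\bm{y}}$ with $\Vert\bm{x}-\bm{y}\Vert_2\ge\pi\delta$, the optimal $\gamma$ is $0$ and the stated formula gives no admissible semi-coupling. However, the central computation in Step~2 is wrong as written, and Step~3 as written fails.

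\textbf{Step 2.} With your convention $\sigma_i=\mathrm{d}\mu_i/\mathrm{d}\gamma^{(i)}$, the optimality conditions are $\gamma^{(i)}=e^{-\varphi_i}\mu_i$, i.e.\ $\varphi_i=\ln\sigma_i$. The dual constraint is $\varphi_0(\bm{x})+\varphi_1(\bm{y})\le-2\ln c(\bm{x},\bm{y})$, with equality $\gamma$-a.e. So complementary slackness reads $\ln\sigma_0+\ln\sigma_1=-2\ln c$, which gives $\sqrt{\sigma_0\sigma_1}\,c=1$, not $c^2$.
\begin{itemize}
\item With your identity, the candidate cost would be $\int\mu_0+\int\mu_1-2\int c^2\,\mathrm{d}\gamma$, which is not the OET value.
\item With the correct identity, it is $2\delta^2\big(\mu_0(\mathcal{X})+\mu_1(\mathcal{X})-2\gamma(\mathcal{X}^2)\big)$.
\item You still have to show this equals the OET value; you only say it ``should''. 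Write $\mathrm{KL}(\gamma^{(0)}\Vert\mu_0)=-\int\ln\sigma_0\,\mathrm{d}\gamma-\gamma(\mathcal{X}^2)+\mu_0(\mathcal{X})$, do the same for $\mu_1$, and use slackness to cancel $\int-2\ln c\,\mathrm{d}\gamma$.
\end{itemize}
The Gibbs form $e^{\phi\oplus\psi}c^2\,\mu_0\otimes\mu_1$ in Step~1 is the optimality condition of the entropically regularized problem that generalized Sinkhorn solves, not of (\ref{OET}). Minimizers of (\ref{OET}) are typically singular with respect to $\mu_0\otimes\mu_1$. Admissibility should instead come from the dual. $\gamma^{(0)}$ can vanish at $\bm{x}$ (i.e.\ $\varphi_0(\bm{x})=+\infty$) only if $c(\bm{x},\bm{y})=0$ for $\mu_1$-a.e.\ $\bm{y}$. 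That is exactly the set your fallback handles.

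\textbf{Step 3.} Put $F(s)=s\ln s-s+1$, $a=\gamma_0'$, $b=\gamma_1'$, $\theta=\gamma'$. The pointwise entropy--transport integrand is $aF(\theta/a)+bF(\theta/b)-2\theta\ln c$.
\begin{itemize}
\item At your choice $\theta=\sqrt{ab}$, it equals $a+b-2\sqrt{ab}\,(1+\ln c)$. The bound $-\ln c\ge 1-c$ makes this \emph{at least} $a+b-2\sqrt{ab}\,c$, which is the reverse of what you need. It is even $+\infty$ wherever $c=0<ab$.
\item The correct plan is $\gamma'=c\sqrt{\gamma_0'\gamma_1'}$. The value $\theta=c\sqrt{ab}$ minimizes the integrand in $\theta$ and gives exactly $a+b-2\sqrt{ab}\,c$.
\item Then use the data-processing inequality $\mathrm{KL}(\gamma'^{(0)}\Vert\mu_0)\le\mathrm{KL}(\gamma'\Vert\gamma_0')$, valid since $\mu_0$ is the first marginal of $\gamma_0'$, and likewise for $\mu_1$. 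This yields OET $\le$ the WFR cost of $(\gamma_0',\gamma_1')$.
\end{itemize}
Since you also invoke the cited equality $\text{WFR}=\text{OET}$, Step~3 can simply be dropped. The proof is then: admissible candidate, corrected Step~2, cited equality of values.

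Finally, the discrete-then-weak-limit plan is unnecessary and risky, because $\gamma\mapsto\gamma\,\mu_0/\gamma^{(0)}$ is not weakly continuous. The duality argument works directly for general measures once optimal potentials exist, which Liero--Mielke--Savar\'e prove.
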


We summarize the workflow for the calculation of the semi-coupling \((\gamma_0,\gamma_1)\) as following.

\begin{algorithm}[H]
\caption{Semi-coupling calculation Workflow}
\label{alg:Semi-coupling}
\begin{algorithmic}[1]
\Require Sample-able distributions $\mu_{0}, \mu_{1}$, diffusion parameter \(\nu\), branching rate \(\lambda\).
\State $\delta=\nu/\sqrt{\lambda}$
\State $\gamma \gets \text{OET}_{\delta}(\mu_{0}, \mu_{1})$ (Defined in \ref{static WFR})
\State $\gamma_0(\bm{x},\bm{y}) \gets \frac{\gamma(\bm{x},\bm{y})}{\int_\mathcal{X}\gamma(\bm{x},\bm{z})\mathrm{d} \bm{z}}\mu_{0}(\bm{x}), \gamma_1(\bm{x},\bm{y}) \gets \frac{\gamma(\bm{x},\bm{y})}{\int_\mathcal{X}\gamma(\bm{z},\bm{y})\mathrm{d} \bm{z}}\mu_{1}(\bm{y})$ (Theorem \ref{thm:semi-coupling})
\State \Return $(\gamma_0,\gamma_1)$
\end{algorithmic}
\end{algorithm}

One can determine the triplet \((\nu,\lambda,\delta)\) by knowing any two of them. We point out that, when fixing the stochastic reference \(\nu\), the WFR growth penalty coefficient \(\delta\) also has a intuitive meaning of growth reference. Since \(\delta = \nu/\sqrt{\lambda}\), the smaller the growth penalty coefficient \(\delta\), the larger the growth reference \(\lambda\), and vice versa. Thus, for convenience, we implicitly choose the branching rate \(\lambda\) by giving \(\nu\) and \(\delta\). The performance of the approximation is studied in (\ref{appendix:WFR approximation}).

\textbf{Remark.} As mentioned above, some RUOT problem e.g. WFR do have an equivalent OET formulation which is easy to solve \citep{chizat2018interpolating,chizat2018unbalanced}. We tried to follow their approach, but failed. A discussion on their approach and the difficulties here can be found in Appendix \ref{supp:travelling dirac}. 

\subsection{Branching inference}
\label{supp:branching inference}
Branching inference aims to simulate the discrete single-cell birth-death dynamics at single-cell resolution. We start from a cell at \(\bm{x}_0\) whose position evolves according to the marginal SDE.
\begin{equation}
\mathrm{d}\bm{x}_t = (\bm{v}_{\theta}(\bm{x}_t,t)+
\frac{\nu^2}{2}\bm{s}_{\bm{\theta}}(\bm{x}_t,t))\mathrm{d}t+\nu\mathrm{d}\bm{W}_t
\end{equation}
All SDE simulations were implemented using Euler-Maruyama scheme, and all ODE simulations were implemented using forward Euler scheme. Detailed inference workflows can be found in Appendix \ref{Appendix:algorithm}.

We also simulates a non-homogeneous branching process to decide when the cell divides or dies. According to the evolution equation of BBM (\ref{BBM eqn}), the relation between the learned growth rate \(g_{\bm{\theta}}(\bm{x},t)\) and the corresponding branching rate \(\lambda\), offspring distribution \(\bm{p}\) at \((\bm{x},t)\) is
\begin{equation}
g_{\bm{\theta}}(\bm{x},t)=\lambda(\bm{x},t)(\bm{p}_2(\bm{x},t)-\bm{p}_0(\bm{x},t))
\end{equation}
For simplicity, we assume that stochasticity in cells is solely confined to changes in gene expression, while their apoptosis or division is entirely determined by gene expression and time; that is, the offspring distribution \(\bm{p}(\bm{x},t)\) is a one-hot vector given \((\bm{x},t)\). Hence, we have
\begin{equation}
\left \{
\begin{aligned}
\label{eq:learned branching mechanism}
&\lambda(\bm{x},t)=|g_{\bm{\theta}}(\bm{x},t)|\\
&\bm{p}_2=\mathrm{1}_{g_{\bm{\theta}}(\bm{x},t)\ge0}\\
&\bm{p}_0=\mathrm{1}_{g_{\bm{\theta}}(\bm{x},t)<0}\\
&\bm{p}_k=0,\quad k\neq0,2
\end{aligned}
\right .
\end{equation}
where \(\mathrm{1}_{A}\) is the indicator function of event \(A\). Based on the above, we linked the learned growth rates to the branching dynamics. We simulate the branching event of each cell as a non-homogeneous jump process. In details, for each SDE simulation time step \(\Delta t\), the probability of a cell branching within that step is given by \(p=|g_{\bm{\theta}}(\bm{x},t)|\Delta t\). Therefore, we sample a random variable \(\alpha\sim\mathcal{U}[0,1]\). If \(\alpha \le p\), branching occurs, otherwise it does not. When branching occurs: if \(g_{\bm{\theta}}(\bm{x},t)\ge 0\), the cell divides into two, and the resulting daughter cells are subsequently simulated independently according to these aforementioned rules; else, the current cell dies.

We further point out that the mean mass increase of continuous inference and branching inference on log-scale are the same. In any time step \(\Delta t\), the log mass increment is \(g_{\bm{\theta}}(\bm{x},t)\Delta t\), while the mean log mass increment of branching inference is 
\begin{equation*}
 \begin{aligned}
\lambda(\bm{x},t)\Delta t(\bm{p}_2-\bm{p}_0)&=|g_{\bm{\theta}}(\bm{x},t)|\Delta t(\mathrm{1}_{g_{\bm{\theta}}(\bm{x},t)\ge0}-\mathrm{1}_{g_{\bm{\theta}}(\bm{x},t)<0})\\
&=|g_{\bm{\theta}}(\bm{x},t)|\Delta t \cdot\text{sgn}(g_{\bm{\theta}}(\bm{x},t))\\
&=g_{\bm{\theta}}(\bm{x},t)\Delta t
\end{aligned}   
\end{equation*}
which is the same. Also, since continuous inference and branching inference follow the same SDE spatially, they generate the same measure in weak sense.

\section{Additional results}
\label{Appendix:C}
In this section, we provide a detailed description of the computational resources (\ref{supp:experimental details}), metrics (\ref{supp:evaluation metrics}), and datasets used. We further carry out sensitivity analyses for the growth penalization \(\delta\) (\ref{supp:simulation gene data}), diffusion parameter \(\nu\) (\ref{supp:Dyngen data}), and the mini-batch OT (\ref{supp:eb data},\ref{supp:mouse data}), and assess the algorithm’s scalability with respect to the number of cells (\ref{supp:mouse data}) and the dimensionality (\ref{supp:eb data}). In (\ref{appendix:comupational cost}), we compare the computational cost of USB against current SOTA OT-based algorithms. In (\ref{appendix:WFR approximation}), we study the performance of the WFR approximation.

\subsection{Experimental Details}
\label{supp:experimental details}
The experiments were performed on a shared high-performance computing cluster with NVIDIA H800 GPU and 128 CPU cores. The architecture of the neural networks used to parameterize $\bm{v_{\theta}}(\bm{x},t)$, $g_{\bm{\theta}}(\bm{x},t)$ and $\bm{s}_{\bm{\theta}}(\bm{x},t)$ are Multilayer Perceptrons with 256 hidden channels and 5 layers. We use the LeakyReLU activation function. These networks were optimized using Adam \citep{adam}, and implemented using Pytorch. The OT problems are solved using the Python Optimal Transport (POT) package \citep{pot}.

\subsection{Evaluation Metrics}
\label{supp:evaluation metrics}
To evaluate model performance on measure reconstruction, we decoupled the task into two parts: distribution-matching and mass-matching. Two measures \(p,q\) are said to be the same, if and only if they have the same normalized distributions \(\tilde{p},\tilde{q}\), and same total mass \(m_p,m_q\), since \(p=m_p\tilde{p},q=m_q\tilde{q}\). In the followings, let \(p=m_p\tilde{p}\) be the true measure, and \(q=m_q\tilde{q}\) be the predicted measure.

We use the 1-Wasserstein distance ($\mathcal{W}_1$) to measure the similarity between normalized predicted distribution \(\tilde{q}\) and true distribution \(\tilde{p}\).
\begin{equation}
\label{eq:W1}
\mathcal{W}_1(\tilde{p}, \tilde{q}) = \min_{\pi \in \Pi(\tilde{p}, \tilde{q})} \int \|\bm{x} - \bm{y}\|_2 d\pi(\bm{x}, \bm{y})
\end{equation}
where 
\[\Pi(\tilde{p}, \tilde{q})=\{\pi(\bm{x}, \bm{y})\in\mathcal{M}_+(\mathcal{X}^2)|\int_{\mathcal{X}}\pi(\bm{x}, \bm{y})\mathrm{d}\bm{y}=\tilde{p}(\bm{x}),\int_{\mathcal{X}}\pi(\bm{x}, \bm{y})\mathrm{d}\bm{x}=\tilde{q}(\bm{y})\}\]
In practice, \(\tilde{p},\tilde{q}\) are empirical distributions consist or weighted particles. The above minimization problem can be easily solved by linear program or Sinkhorn algorithm using POT package.

We use Relative Mass Error (RME) to assess how well the model captures cell population growth. The RME is defined as
\begin{equation}
\label{eq:RME}
\text{RME}(p,q)=\frac{|m_p-m_q|}{m_q}
\end{equation}
In previous literature such as \citep{TIGON,DeepRUOT,wang2025joint,wfr_fm}, RME is formulated as 
\[
\text{RME}(t_k) = \frac{\left| \sum_i{w_i(t_k)} - n_k/n_0 \right|}{n_k/n_0}
\]
where $w_i(t_k)$ is the inferred weight of cell $i$ at time $t_k$, and $n_k$ is the number of cells at time $k$. One can easily check the two formulations are the same.

To generate predicted measures at time \(t_k\), we start form the true measure at \(t_0\). Each cell is assigned with a mass weight $w_i(0) = 1$). For continuous inference, each cell travels following the learned SDE dynamics, and its weight evolves following the learned birth-death ODE. For branching inference, only the position of each cell follows the SDE, the weight stay invariant, while the cell may vanish or get a copy at the branching event. At \(t_k\), the true measure is consists of \(n_k\) cells with weight \(1\) from the dataset, and the predicted measure is consists of weighted cells which are still alive at \(t_k\). The total mass is defined as the sum of weights (also the cell number for branching inference). For some datasets, we also perform a hold-out experiment. The model was trained on all but one time point and evaluated on that time point. All simulations were started at the initial time point \(t_0\).

To compare USB with other methods, we implement several existed methods on all datasets. For WFR-FM \citep{wfr_fm} and VGFM \citep{wang2025joint}, we used their default hyperparameter settings, since the datasets evaluated in our work are also used in their work, and our model sizes are consistent. For branchSBM \citep{tang2025branched}, we follow the hyperparameter settings in Table 9 of their paper. For other simulation-free methods (MMFM \citep{rohbeck2025modeling}, Metric FM \citep{kapusniak2024metric}, SF2M \citep{tong2023simulationfree}, UOT-FM \citep{UOT}), we trained 3000 epochs with cosine annealing learning rate decay on high-dimensional real datasets (Mouse, Cite, EB50D, EB100D), 1000 epochs with constant learning rate on other datasets. For simulation-based methods (MIOFlow \citep{mioflow}, TIGON \citep{TIGON} and DeepRUOT \citep{DeepRUOT}, Var-RUOT \citep{sun2025variational}), we follow the hyperparameter setting recommended by DeepRUOT, which is the SOTA in simulation-based methods.

\subsection{Simulation Gene Data}
\label{supp:simulation gene data}
The Simulation Gene data is adopted from \citep{DeepRUOT}. It simulates a synthetic gene regulatory network via the following equations
$$\frac{dX_1}{dt} = \frac{\alpha_1 X_1^2 + \beta}{1 + \alpha_1 X_1^2 + \gamma_2 X_2^2 + \gamma_3 X_3^2 + \beta} - \delta_1 X_1 + \eta_1 \xi_t$$$$\frac{dX_2}{dt} = \frac{\alpha_2 X_2^2 + \beta}{1 + \gamma_1 X_1^2 + \alpha_2 X_2^2 + \gamma_3 X_3^2 + \beta} - \delta_2 X_2 + \eta_2 \xi_t$$$$\frac{dX_3}{dt} = \frac{\alpha_3 X_3^2}{1 + \alpha_3 X_3^2} - \delta_3 X_3 + \eta_3 \xi_t$$

where $X_i$ represents the concentration of gene $i$. The model features a toggle switch system (mutual inhibition and self-activation) between $X_1$ and $X_2$ with inhibition by $X_3$ and activation by an external signal $\beta$. The self-activation rate, inhibition rate, and degradation rate of each gene are denoted as $\alpha_i$, $\gamma_i$, and $\delta_i$, respectively. Also, noise terms ($\eta_i \xi_t$) are added to each governing ODE to introduce stochasticity. The growth is introduced by probabilistic cell division with probability $g=\alpha_g\frac{X^2_2}{1+X^2_2}\%$. Upon division, each child inherits the gene expression level of its parent, and adds an independent noise \(\eta_d\mathcal{N}(0,1)\) to each gene. 

The data is generated by simulating the governing equations from two source. One is located at a steady states of the system with no growth, thus exhibits equilibrium during time. The other one exhibits both transition and growth. The data is recorded at five time points \([0,8,16,24,32]\). 


    

\textbf{Sensitivity study of growth penalty \(\delta\).}
When referencing on a BBM with diffusion parameter \(\nu\) and branching rate \(\lambda\), we have to solve a RUOT problem (\ref{RUOT}) to get the semi-coupling. We approximate it by solving a WFR problem with growth penalty \(\delta = \nu/\sqrt{\lambda}\) (\ref{supp:taylor expansion}). The parameter \(\delta\) explicitly represents the ratio of the strength of transition and growth. With excessively high \(\delta\), cells tend to transit rather than grow. Changes in the relative mass between the two trajectories will not be realized through population growth. Instead, it will be compensated by incorrect transfers between cell states, causing the model to learn an incorrect vector field and underlying dynamics (Fig.\ref{fig:simulation delta} left panel). With excessively low \(\delta\), cells tend to grow rather than transit. When the measure is transported between time points, it tends to first decrease the mass and then increase it after the transfer is completed. Errors in the vector field can be compensated by the unpanelized large growth rate, causing the model to learn an incorrect vector field and dynamics (Fig.\ref{fig:simulation delta} right panel). To balance the mutually compensating effects of transfer and growth, we chose a moderate \(\delta=1.3\) in the main text to penalize the growth, thereby learning the correct vector field and underlying dynamics (Fig.\ref{fig:simulation delta} middle panel). Sensitivity analysis results for different \(\delta\) are shown in Table \ref{tab:delta}. Both an excessively high or low \(\delta\) (\(\delta=0.5,2.5\)) results in a sub-optimal performance, while \(\delta\) in a proper range (1 to 2) exhibit comparable performance, demonstrating the robustness of USB to \(\delta\). 

\begin{figure}[h!]
    \centering
    \includegraphics[width=0.33\textwidth]{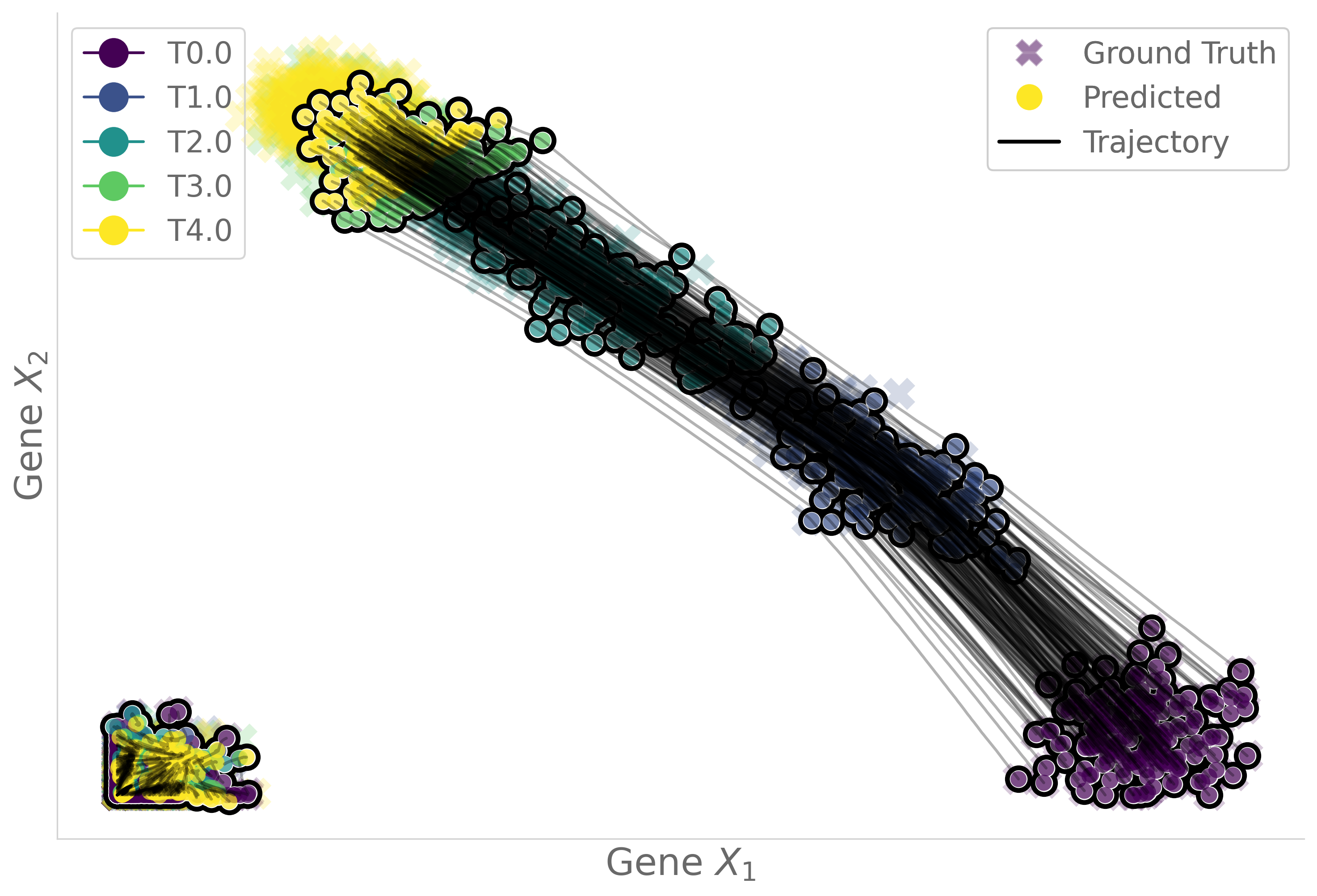} 
    \includegraphics[width=0.33\textwidth]{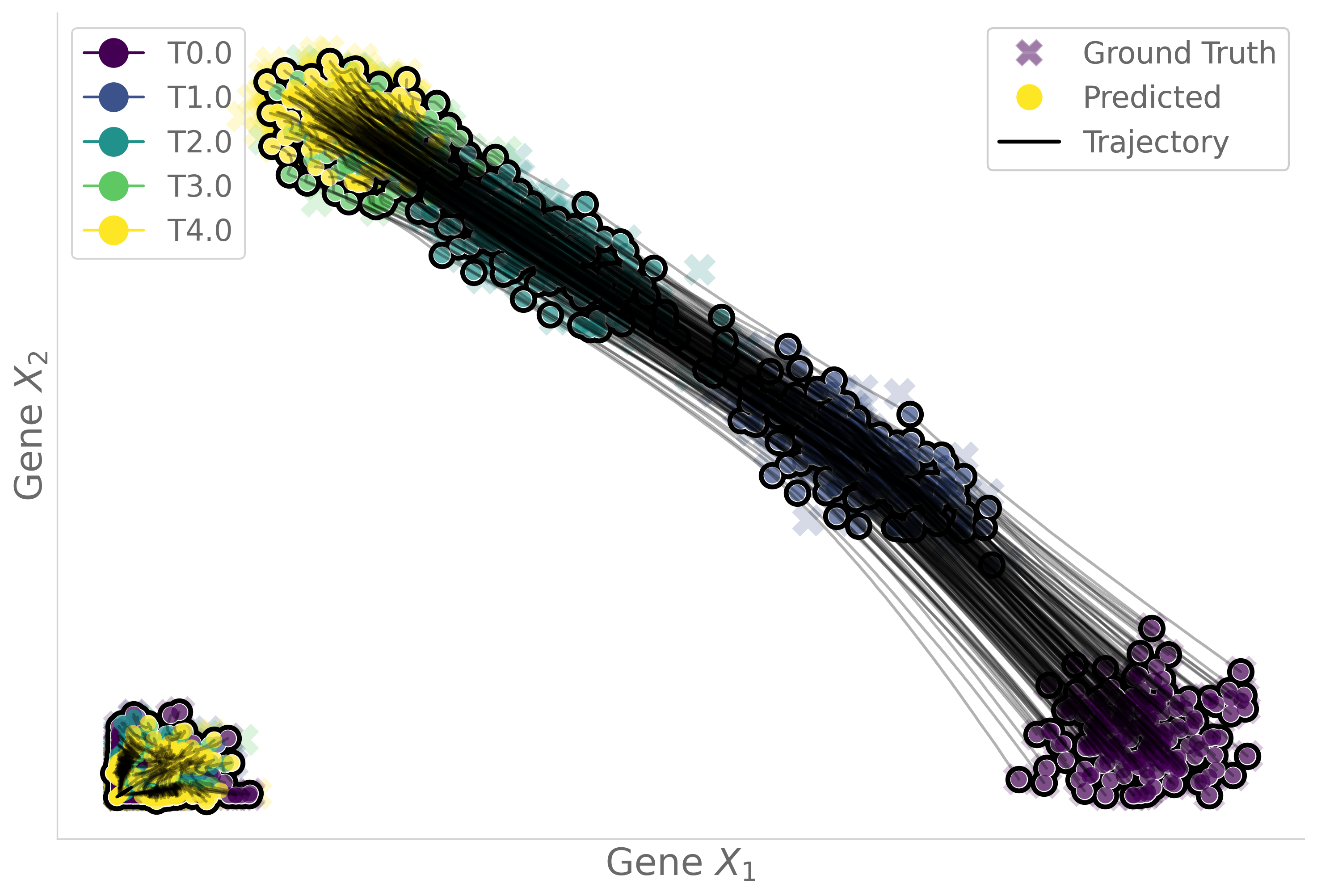} 
    \includegraphics[width=0.33\textwidth]{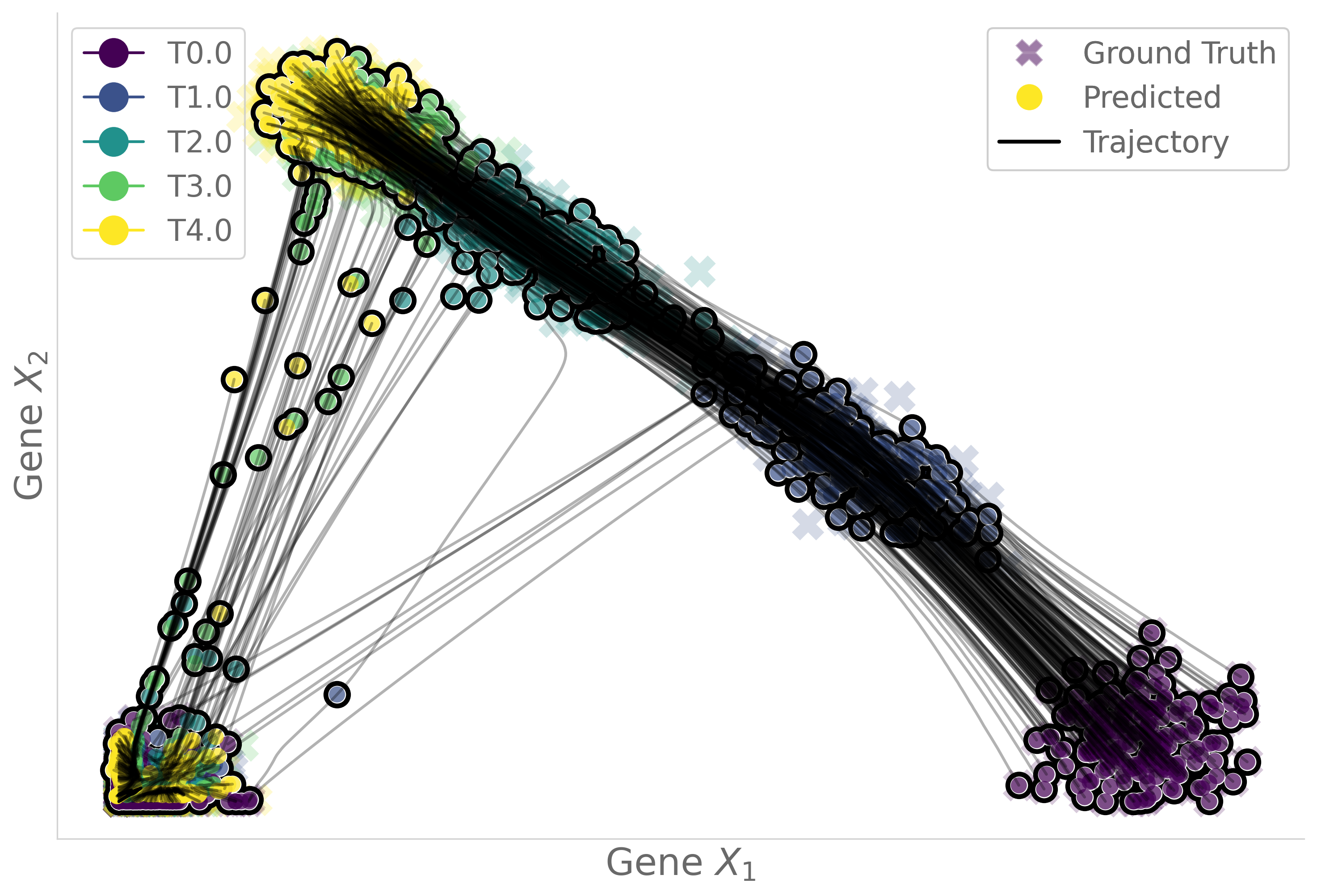} 
    \caption{Learned trajectories on the Simulation Gene dataset (\(\nu=0.001\)). From left to right: \(\delta=0.5,1.3,2.5\).}
    \label{fig:simulation delta}
\end{figure}

\begin{table}[h!]
\centering
\caption{Sensitivity analysis for parameter $\delta$ on simulation gene dataset.}
\label{tab:delta}
\resizebox{\textwidth}{!}{
\begin{tabular}{lcccccccc}
\toprule
\textbf{Parameter} & \multicolumn{2}{c}{\textbf{t=1}} & \multicolumn{2}{c}{\textbf{t=2}} & \multicolumn{2}{c}{\textbf{t=3}} & \multicolumn{2}{c}{\textbf{t=4}} \\
\cmidrule(lr){2-3} \cmidrule(lr){4-5} \cmidrule(lr){6-7} \cmidrule(lr){8-9}
& $\mathcal{W}_1$ & RME & $\mathcal{W}_1$ & RME & $\mathcal{W}_1$ & RME & $\mathcal{W}_1$ & RME \\
\midrule
$\delta=0.5$ & 0.033\tiny$\pm6\times10^{-5}$ & 0.020\tiny$\pm6\times10^{-5}$ & 0.078\tiny$\pm1\times10^{-4}$ & 0.073\tiny$\pm1\times10^{-4}$ & 0.096\tiny$\pm2\times10^{-4}$ & 0.112\tiny$\pm2\times10^{-4}$ & 0.086\tiny$\pm2\times10^{-4}$ & 0.130\tiny$\pm2\times10^{-4}$ \\
$\delta=1.0$ & 0.022\tiny$\pm5\times10^{-5}$ & 0.003\tiny$\pm5\times10^{-5}$ & 0.026\tiny$\pm3\times10^{-5}$ & 0.006\tiny$\pm1\times10^{-5}$ & 0.024\tiny$\pm5\times10^{-5}$ & 0.008\tiny$\pm2\times10^{-5}$ & 0.022\tiny$\pm5\times10^{-5}$ & 0.011\tiny$\pm2\times10^{-5}$ \\
$\delta=1.3$ & 0.020\tiny$\pm4\times10^{-5}$ & 0.000\tiny$\pm2\times10^{-5}$ & 0.021\tiny$\pm2\times10^{-5}$ & 0.001\tiny$\pm1\times10^{-5}$ & 0.019\tiny$\pm1\times10^{-5}$ & 0.002\tiny$\pm2\times10^{-5}$ & 0.018\tiny$\pm2\times10^{-5}$ & 0.003\tiny$\pm3\times10^{-5}$ \\
$\delta=1.5$ & 0.022\tiny$\pm5\times10^{-5}$ & 0.002\tiny$\pm4\times10^{-6}$ & 0.029\tiny$\pm7\times10^{-5}$ & 0.006\tiny$\pm5\times10^{-6}$ & 0.026\tiny$\pm8\times10^{-5}$ & 0.009\tiny$\pm6\times10^{-6}$ & 0.025\tiny$\pm3\times10^{-5}$ & 0.009\tiny$\pm7\times10^{-6}$ \\
$\delta=1.7$ & 0.025\tiny$\pm5\times10^{-5}$ & 0.001\tiny$\pm3\times10^{-6}$ & 0.027\tiny$\pm8\times10^{-5}$ & 0.004\tiny$\pm5\times10^{-6}$ & 0.022\tiny$\pm7\times10^{-5}$ & 0.006\tiny$\pm8\times10^{-6}$ & 0.021\tiny$\pm1\times10^{-4}$ & 0.009\tiny$\pm9\times10^{-6}$ \\
$\delta=2.0$ & 0.022\tiny$\pm2\times10^{-5}$ & 0.000\tiny$\pm3\times10^{-6}$ & 0.026\tiny$\pm2\times10^{-4}$ & 0.000\tiny$\pm2\times10^{-5}$ & 0.034\tiny$\pm6\times10^{-4}$ & 0.002\tiny$\pm9\times10^{-5}$ & 0.047\tiny$\pm3\times10^{-4}$ & 0.007\tiny$\pm1\times10^{-4}$ \\
$\delta=2.5$ & 0.023\tiny$\pm3\times10^{-5}$ & 0.000\tiny$\pm6\times10^{-6}$ & 0.041\tiny$\pm0.002$ & 0.004\tiny$\pm2\times10^{-4}$ & 0.029\tiny$\pm6\times10^{-4}$ & 0.007\tiny$\pm4\times10^{-4}$ & 0.026\tiny$\pm7\times10^{-4}$ & 0.008\tiny$\pm4\times10^{-4}$ \\
\bottomrule
\end{tabular}
}
\end{table}

\subsection{Dyngen Data}
\label{supp:Dyngen data}
We adopt the same dataset as in \cite{mioflow, wang2025joint}. It was simulated by Dyngen \citep{cannoodt2021spearheading}. The dataset contains 728 cells, znd was reduced to 5 dimensions using PHATE \citep{moon2019visualizing}. The data exhibits a complex dynamics which contains both varying mass across time points and bifurcation. The bifurcation is also unbalanced in some sense that the cell number in the lower branch is larger than the upper branch. Due to the complexity, it is a challenging task to recover its dynamics. USB accurately recovers the bifurcating trajectories, and predicts a higher growth rate in the lower branch, capturing the unbalanced bifurcation dynamics (Fig.\ref{fig:Dyngen supp}). In the main text, we set \(\delta = 1.7\).

\begin{figure}[h!]
    \centering
    \includegraphics[width=0.45\textwidth]{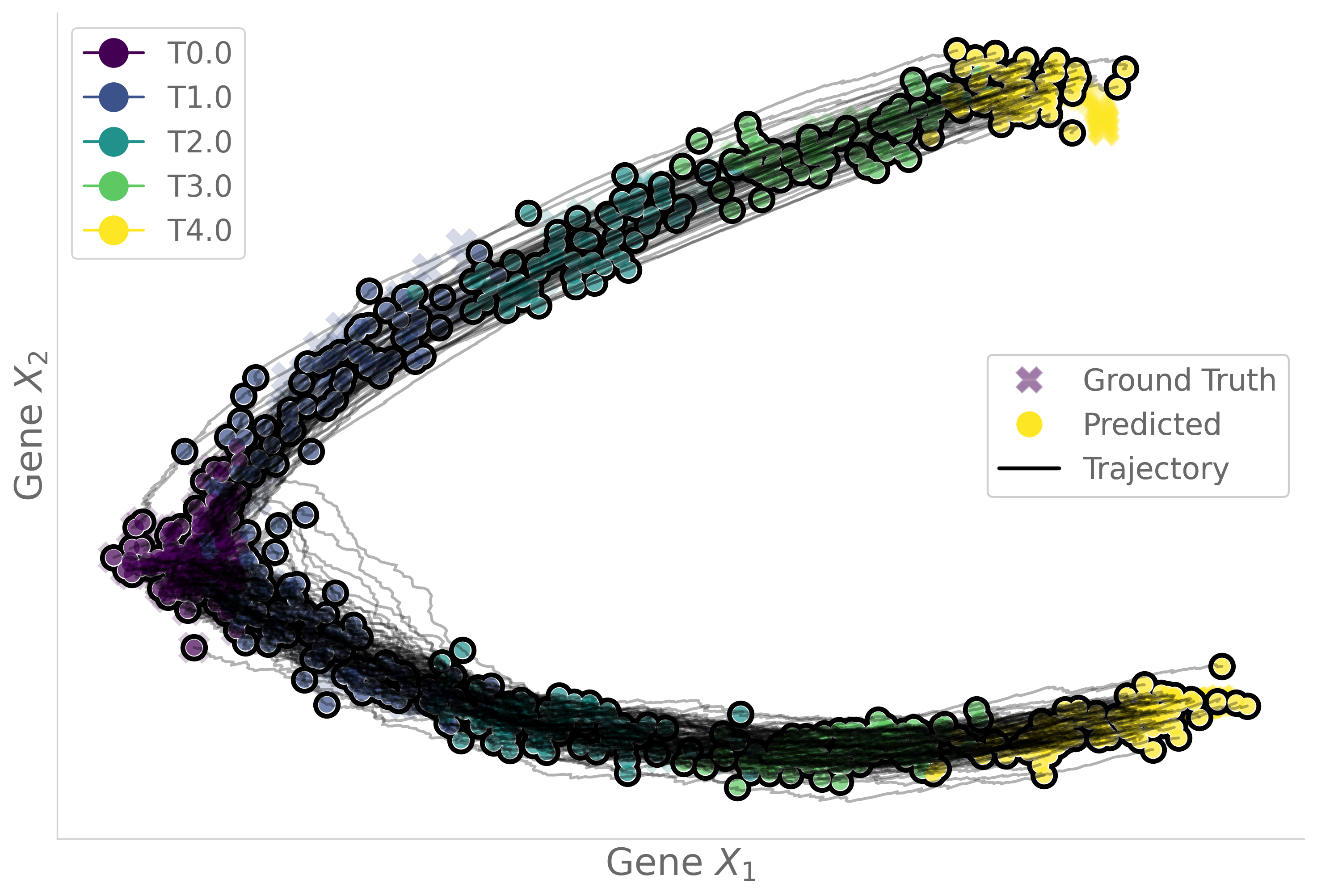} 
    \includegraphics[width=0.45\textwidth]{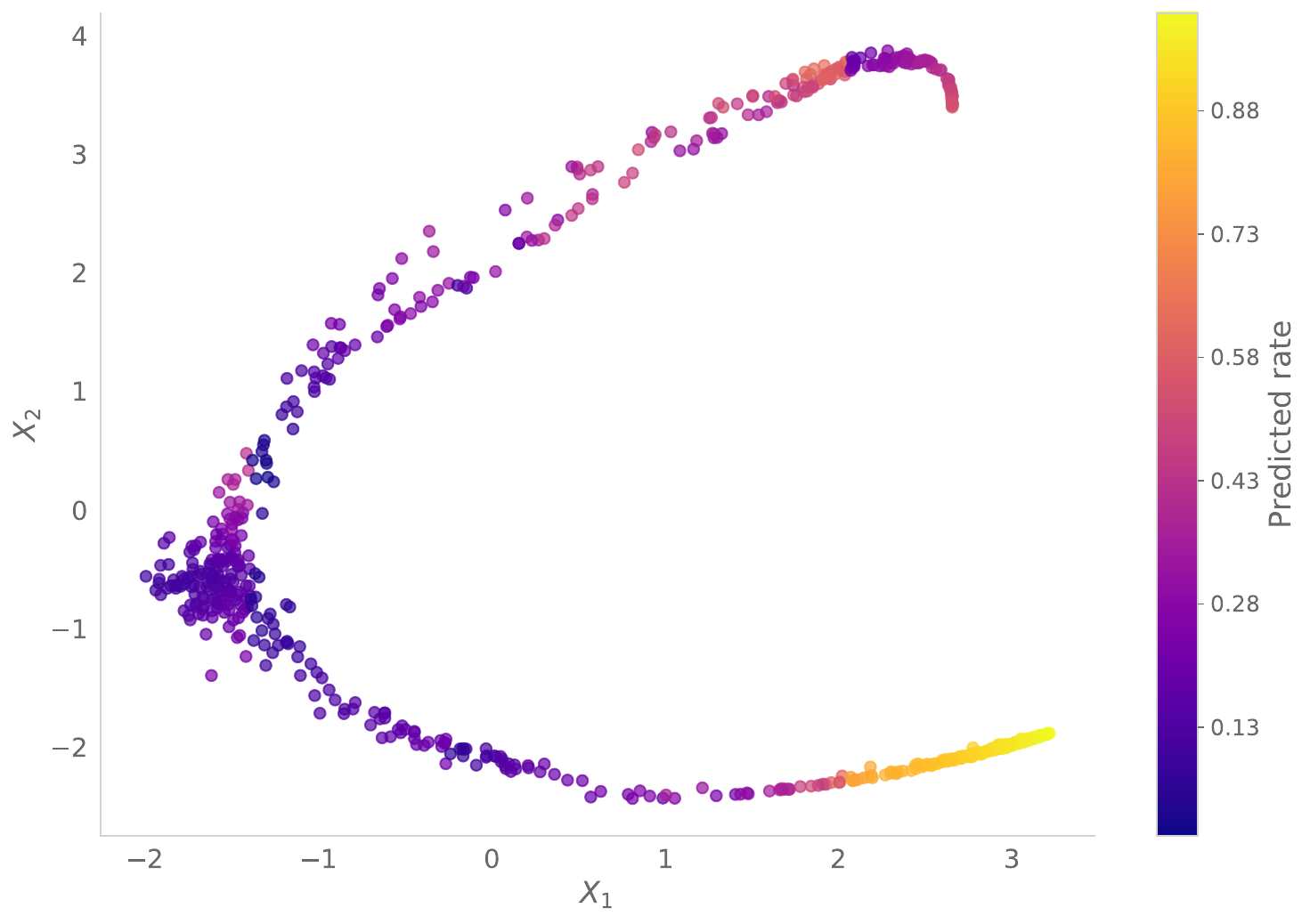} 
    \caption{Learned trajectories and growth on the Dyngen dataset (\(\delta=1.7,\nu=0.1\)). Left panel: trajectories; Right panel: growth rate.}
    \label{fig:Dyngen supp}
\end{figure}

We also summarized the detailed quantitative results across different time points in Table \ref{tab:Dyngen}. As shown, USB achieves the top-2 lowest $\mathcal{W}_1$ and RME in all of the time points. 

\textbf{Sensitivity study of diffusion parameter \(\nu\).} We further provide a sensitivity analysis for the diffusion parameter \(\nu\) in the bottom of Table \ref{tab:Dyngen}. As the level pf noise becomes larger, the accuracy of the measure transportation decreases. But, we point out that even for a extremely large noise (\(\nu=0.1\)), the \(\mathcal{W}_1\) distance and RME of USB remains the second lowest in most experiments, demonstrating the robustness and power of USB. In the main text, if there is no specific notice, we set \(\nu=0.001\).

\textbf{Remark.} When setting \(\nu=0.5\), the training diverges, thus \(\nu=0.1\) is a extremely large noise level for Dyngen data.

\begin{table}[h!]
\centering
\caption{Comparison of method performance over time on the Dyngen dataset and sensitivity analysis for parameter \(\nu\). Best results are in bold, and the second best are underlined (\textbf{results of \(\nu=0.01,0.05,0.1\) are not compared}).}
\label{tab:Dyngen}
\resizebox{\textwidth}{!}{
\begin{tabular}{lcccccccc}
\toprule
\textbf{Method} & \multicolumn{2}{c}{\textbf{t=1}} & \multicolumn{2}{c}{\textbf{t=2}} & \multicolumn{2}{c}{\textbf{t=3}} & \multicolumn{2}{c}{\textbf{t=4}} \\
\cmidrule(lr){2-3} \cmidrule(lr){4-5} \cmidrule(lr){6-7} \cmidrule(lr){8-9}
& $\mathcal{W}_1$ & RME & $\mathcal{W}_1$ & RME & $\mathcal{W}_1$ & RME & $\mathcal{W}_1$ & RME \\
\midrule
MMFM & 0.574 & --- & 1.704 & --- & 1.499 & --- & 1.706 & --- \\
Metric FM & 0.892 & --- & 2.347 & --- & 2.030 & --- & 1.799 & --- \\
SF2M & 0.637 & --- & 1.266 & --- & 1.415 & --- & 1.790 & --- \\
MIOFlow & 0.420 & --- & 0.640 & --- & 1.537 & --- & 1.263 & --- \\
BranchSBM & 0.926 & --- & 1.171 & --- & 2.081 & --- & 1.481 & --- \\
TIGON & 0.446 & 0.033 & 0.584 & 0.060 & 0.415 & 0.023 & 0.603 & 0.071 \\
DeepRUOT & 0.454 & 0.011 & 0.481 & 0.070 & 0.870 & 0.104 & 0.688 & 0.074 \\
Var-RUOT & 0.315 & 0.128 & 0.548 & 0.336 & 0.630 & 0.222 & 0.593 & 0.023 \\
UOT-FM & 0.652 & 0.008 & 0.780 & 0.077 & 1.252 & 0.090 & 2.130 & 0.213 \\
VGFM & 0.335 & \textbf{0.001} & 0.312 & 0.073 & 1.109 & 0.041 & 0.634 & 0.033 \\
WFR-FM & \underline{0.110} & 0.003 & \underline{0.098} & \underline{0.007} & \underline{0.211} & \textbf{0.008} & \textbf{0.121} & \textbf{0.002} \\
\textbf{USB}\tiny{\(\nu=0.001\)} & \textbf{0.109\tiny$\pm0.0002$} & \underline{0.002\tiny$\pm0.0000$} & \textbf{0.093\tiny$\pm0.0002$} & \textbf{0.002\tiny$\pm0.0001$} & \textbf{0.180\tiny$\pm0.0013$} & \underline{0.015\tiny$\pm0.0004$} & \underline{0.142\tiny$\pm0.0040$} & \underline{0.008\tiny$\pm0.0008$} \\
\midrule
USB\tiny{\(\nu=0.01\)} & 0.136\tiny$\pm0.001$ & 0.010\tiny$\pm0.0005$ & 0.144\tiny$\pm0.004$ & 0.005\tiny$\pm0.0008$ & 0.247\tiny$\pm0.014$ & 0.006\tiny$\pm0.005$ & 0.290\tiny$\pm0.020$ & 0.004\tiny$\pm0.003$ \\
USB\tiny{\(\nu=0.05\)} & 0.169\tiny$\pm0.006$ & 0.006\tiny$\pm0.002$ & 0.206\tiny$\pm0.010$ & 0.018\tiny$\pm0.004$ & 0.359\tiny$\pm0.059$ & 0.031\tiny$\pm0.014$ & 0.302\tiny$\pm0.059$ & 0.009\tiny$\pm0.006$ \\
USB\tiny{\(\nu=0.1\)} & 0.218\tiny$\pm0.009$ & 0.008\tiny$\pm0.003$ & 0.261\tiny$\pm0.012$ & 0.029\tiny$\pm0.007$ & 0.496\tiny$\pm0.076$ & 0.042\tiny$\pm0.025$ & 0.357\tiny$\pm0.047$ & 0.072\tiny$\pm0.024$ \\
\bottomrule
\end{tabular}
}
\end{table}


\subsection{Gaussian Data}
\label{supp:Gaussian data}
We adopt the 1000D Gaussian Mixture Data from \cite{wang2025joint}. Following their setup, at the initial time point, there are 500 cells from 2 Gaussians, 100 from the upper Gaussian, and 400 from the lower Gaussian. The upper Gaussian remains stationary and undergoes pure growth, whereas the lower Gaussian exhibits no growth and instead bifurcates symmetrically into left and right Gaussians. At the terminal time point, the total cell count is 1,400: the upper component increases from 100 to 1,000 cells, and the lower component splits evenly into two 200-cell Gaussians. Since the 1000D Gaussian has 1000 dimensions, it is suitable for testing USB's performance in high dimensional spaces. As shown in Fig.\ref{fig:Gaussian supp} and Fig.\ref{fig:gaussian}, USB faithfully recovers the underlying dynamics. In the main text, we set \(\delta=1.4,\nu=0.001\).

\begin{figure}[h!]
    \centering
    \includegraphics[width=0.5\textwidth]{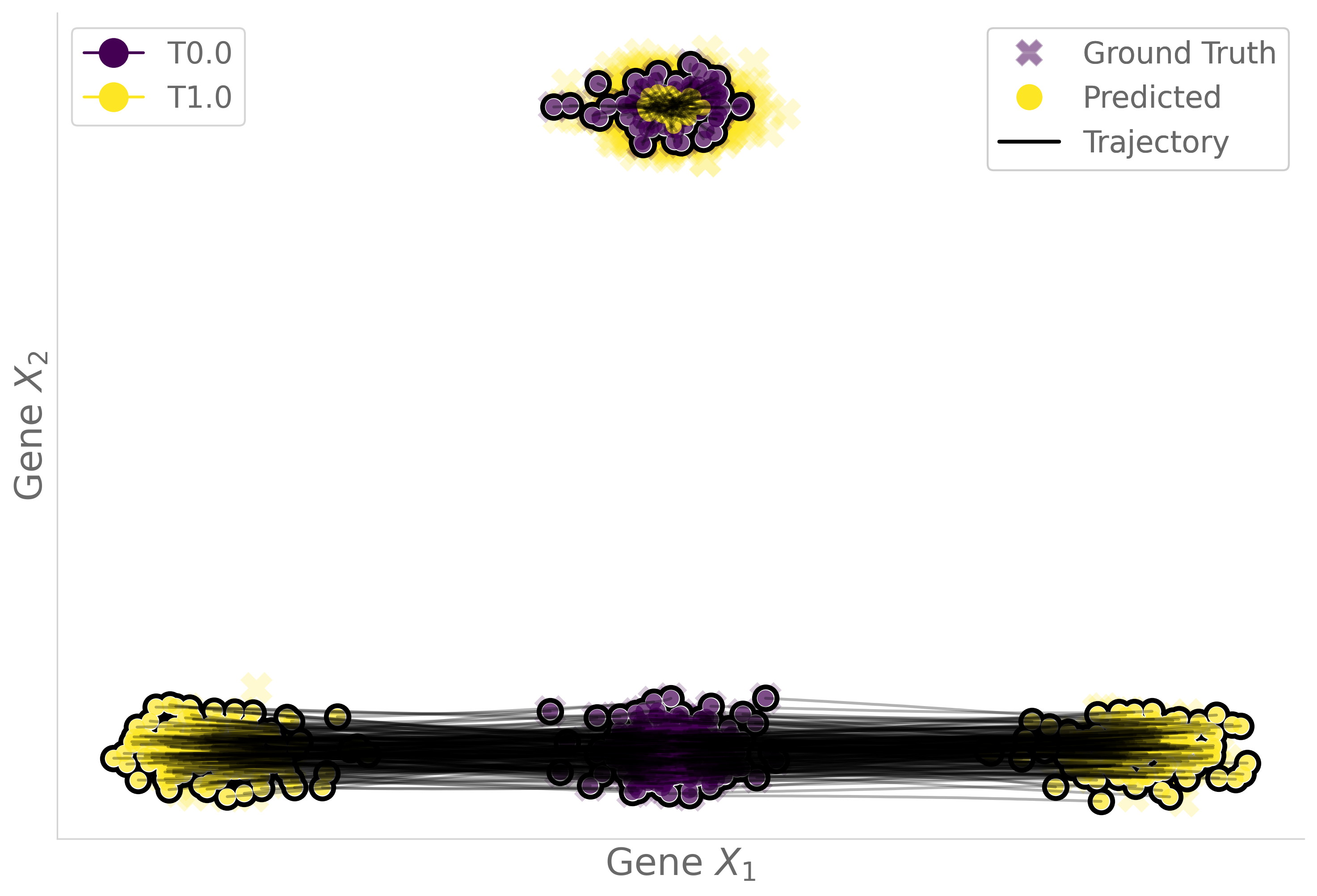} 
    \caption{Learned trajectories on the Gaussian 1000D dataset (\(\delta=1.3,\nu=0.001\)).}
    \label{fig:Gaussian supp}
\end{figure}

\subsection{Epithelial Mesenchymal Transition Data}
\label{supp:emt data}
We adopt the dataset that captures the epithelial-mesenchymal transition (EMT) in A549 lung cancer cells from \cite{cook2020context}. The dataset includes samples collected at four distinct time points throughout this process. \citep{TIGON} reduced the dimension to 10 by an autoencoder. We applied USB on the reduced 10D EMT data. As shown in Table \ref{tab:emt}, USB achieves the best performance at most time points with growth penalty \(\delta=14\), and diffusion parameter \(\nu=0.001\). We plotted the learned growth rate in Figure \ref{fig:emt supp}. As shown, USB predicts higher growth rate in initial and intermediate stages, which is in line with the results predicted by DeepRUOT \citep{DeepRUOT} and WFR-FM \citep{wfr_fm} as these cells exhibit enhanced stemness.


\begin{table}[h!]
\centering
\caption{Comparison of method performance over time on the 10D EMT dataset.}
\label{tab:emt}
\begin{tabular}{lcccccc}
\toprule
\textbf{Method} & \multicolumn{2}{c}{\textbf{t=1}} & \multicolumn{2}{c}{\textbf{t=2}} & \multicolumn{2}{c}{\textbf{t=3}} \\
\cmidrule(lr){2-3} \cmidrule(lr){4-5} \cmidrule(lr){6-7}
& $\mathcal{W}_1$ & RME & $\mathcal{W}_1$ & RME & $\mathcal{W}_1$ & RME \\
\midrule
MMFM & 0.2576 & --- & 0.2874 & --- & 0.3102 & --- \\
Metric FM & 0.2605 & --- & 0.2971 & --- & 0.3050 & --- \\
SF2M & 0.2566 & --- & 0.2811 & --- & 0.2900 & --- \\
MIOFlow & 0.2439 & --- & 0.2665 & --- & 0.2841 & --- \\
BranchSBM & 0.3287 & --- & 0.3757 & --- & 0.2894 & --- \\
TIGON & 0.2433 & 0.002 & 0.2661 & \underline{0.003} & 0.2847 & \textbf{0.001} \\
DeepRUOT & 0.2902 & \underline{0.001} & 0.3193 & 0.011 & 0.3291 & \underline{0.002} \\
Var-RUOT & 0.2540 & 0.075 & 0.2670 & 0.014 & 0.2683 & 0.041 \\
UOT-FM & 0.2538 & 0.002 & 0.2696 & 0.013 & 0.2771 & 0.010 \\
VGFM & 0.2350 & 0.016 & 0.2420 & 0.011 & 0.2450 & 0.018 \\
WFR-FM & \underline{0.2099} & \underline{0.001} & \underline{0.2272} & \textbf{0.002} & \underline{0.2346} & \textbf{0.001} \\
\textbf{USB} & \textbf{0.1831\tiny$\pm$0.0000} & \textbf{0.000\tiny$\pm$0.0000} & \textbf{0.2159\tiny$\pm$0.0002} & 0.009\tiny$\pm$0.0000 & \textbf{0.2309\tiny$\pm$0.0001} & 0.009\tiny$\pm$0.0000 \\
\bottomrule
\end{tabular}
\end{table}

\begin{figure}[h!]
    \centering
    \includegraphics[width=0.5\textwidth]{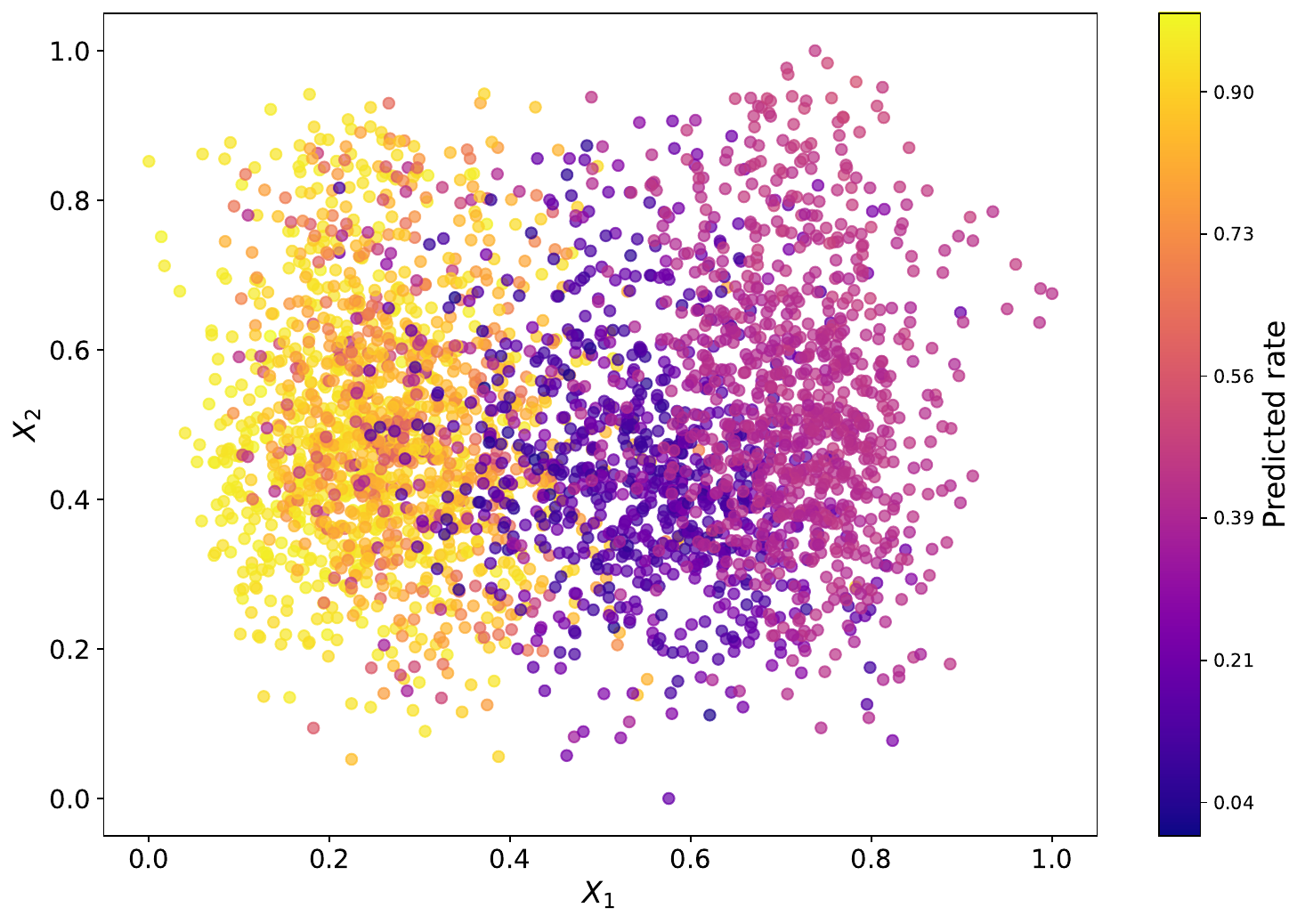} 
    \caption{Learned growth rate on the EMT dataset (\(\delta=1.4,\nu=0.001\)).}
    \label{fig:emt supp}
\end{figure}

\subsection{Embryoid Bodies Data}
\label{supp:eb data}
We adopt the EB dataset from \citep{moon2019visualizing}, comprising 16,819 cells collected at five time points over a 27-day differentiation course of human embryoid bodies (EBs), an experimental model of early embryonic development. The data was preprocessed via Principal Component Analysis (PCA) as in \cite{wang2025joint}. We kept the first 100 principal components for downstream analyses.

\textbf{Scalability with respect to the dimensionality.}
We evaluate USB on EB data with first 5, 50, 100 PCs to test the scalability of USB w.r.t the dimensionality. We set \(\delta = 3, 27,30\) for 5D, 50D, 100D, respectively, and set \(\nu=0.001\). To be consistent to \citep{wfr_fm}, the 5D EB is further standardized. We compare the performance of USB with several other methods on the 3 datasets with results in showed Table \ref{tab:eb_5d},\ref{tab:eb_50d},\ref{tab:eb_100d}, respectively. The computation time is recorded in Table \ref{tab:time_eb}. Across all experiments, USB achieved performance comparable to the best baseline with no significant computational overhead, indicating that USB scales directly to high-dimensional datasets without compromising performance.

\begin{table}[h!]
\centering
\caption{Comparison of method performance over time on the 5D EB dataset. Best results are in bold, and the second best are underlined.}
\label{tab:eb_5d}
\resizebox{\textwidth}{!}{
\begin{tabular}{lcccccccc}
\toprule
\textbf{Method} & \multicolumn{2}{c}{\textbf{t=1}} & \multicolumn{2}{c}{\textbf{t=2}} & \multicolumn{2}{c}{\textbf{t=3}} & \multicolumn{2}{c}{\textbf{t=4}} \\
\cmidrule(lr){2-3} \cmidrule(lr){4-5} \cmidrule(lr){6-7} \cmidrule(lr){8-9}
& $\mathcal{W}_1$ & RME & $\mathcal{W}_1$ & RME & $\mathcal{W}_1$ & RME & $\mathcal{W}_1$ & RME \\
\midrule
MMFM & 0.477 & --- & 0.554 & --- & 0.781 & --- & 0.872 & --- \\
Metric FM & 0.449 & --- & 0.552 & --- & 0.583 & --- & 0.597 & --- \\
SF2M & 0.556 & --- & 0.715 & --- & 0.750 & --- & 0.650 & --- \\
MIOFlow & 0.442 & --- & 0.585 & --- & 0.651 & --- & 0.670 & --- \\
BranchSBM & 0.864 & --- & 1.355 & --- & 1.211 & --- & 1.626 & --- \\
TIGON & 0.386 & \textbf{0.002} & 0.502 & \underline{0.015} & 0.602 & 0.021 & 0.600 & 0.027 \\
DeepRUOT & 0.386 & 0.005 & 0.497 & 0.017 & 0.591 & 0.021 & 0.585 & 0.030 \\
Var-RUOT & 0.416 & 0.111 & 0.486 & 0.144 & 0.509 & 0.054 & 0.511 & 0.022 \\
UOT-FM & 0.544 & 0.032 & 0.670 & 0.029 & 0.729 & \underline{0.016} & 0.852 & 0.041 \\
VGFM & 0.402 & 0.046 & 0.494 & 0.018 & 0.525 & 0.035 & 0.573 & 0.021 \\
WFR-FM & \textbf{0.324} & \underline{0.003} & \textbf{0.401} & \textbf{0.001} & \underline{0.431} & \textbf{0.005} & \underline{0.510} & \textbf{0.005} \\
\textbf{USB} & \underline{0.331\tiny$\pm2\times10^{-5}$} & 0.006\tiny$\pm2\times10^{-6}$ & \underline{0.408\tiny$\pm2\times10^{-5}$} & 0.023\tiny$\pm4\times10^{-6}$ & \textbf{0.427\tiny$\pm2\times10^{-5}$} & 0.018\tiny$\pm6\times10^{-6}$ & \textbf{0.450\tiny$\pm4\times10^{-5}$} & \underline{0.019\tiny$\pm1\times10^{-5}$} \\
\bottomrule
\end{tabular}
}
\end{table}

\begin{table}[h!]
\centering
\caption{Comparison of method performance over time on the 50D EB dataset. Best results are in bold, and the second best are underlined.}
\label{tab:eb_50d}
\resizebox{\textwidth}{!}{
\begin{tabular}{lcccccccc}
\toprule
\textbf{Method} & \multicolumn{2}{c}{\textbf{t=1}} & \multicolumn{2}{c}{\textbf{t=2}} & \multicolumn{2}{c}{\textbf{t=3}} & \multicolumn{2}{c}{\textbf{t=4}} \\
\cmidrule(lr){2-3} \cmidrule(lr){4-5} \cmidrule(lr){6-7} \cmidrule(lr){8-9}
& $\mathcal{W}_1$ & RME & $\mathcal{W}_1$ & RME & $\mathcal{W}_1$ & RME & $\mathcal{W}_1$ & RME \\
\midrule
MMFM & 9.124 & --- & 10.474 & --- & 11.022 & --- & 11.480 & --- \\
Metric FM & 8.506 & --- & 9.795 & --- & 10.621 & --- & 12.042 & --- \\
SF2M & 9.247 & --- & 10.882 & --- & 11.650 & --- & 12.154 & --- \\
MIOFlow & 8.447 & --- & 9.229 & --- & 9.436 & --- & 10.123 & --- \\
BranchSBM & 11.085 & --- & 12.359 & --- & 12.353 & --- & 10.813 & --- \\
TIGON & 8.433 & 0.067 & 9.275 & 0.022 & 9.802 & 0.179 & 10.148 & 0.101 \\
DeepRUOT & 8.169 & \textbf{0.003} & 9.049 & 0.038 & 9.378 & 0.088 & \underline{9.733} & \textbf{0.004} \\
Var-RUOT & 9.442 & 0.128 & 9.709 & 0.081 & 10.482 & 0.031 & 10.735 & 0.030 \\
UOT-FM & 8.717 & 0.063 & 10.858 & \underline{0.009} & 11.813 & 0.022 & 12.733 & 0.018 \\
VGFM & \underline{7.951} & 0.089 & \underline{8.747} & 0.042 & \underline{9.244} & 0.019 & \textbf{9.620} & 0.044 \\
WFR-FM & \textbf{7.664} & \underline{0.008} & \textbf{8.659} & \textbf{0.006} & \textbf{9.182} & \textbf{0.004} & 9.914 & \textbf{0.004} \\
\textbf{USB} & 7.992\tiny$\pm2\times10^{-5}$ & \underline{0.008\tiny$\pm7\times10^{-7}$} & 8.973\tiny$\pm5\times10^{-5}$ & 0.019\tiny$\pm1\times10^{-6}$ & 9.421\tiny$\pm2\times10^{-5}$ & \underline{0.008\tiny$\pm2\times10^{-6}$} & 9.963\tiny$\pm2\times10^{-5}$ & \underline{0.017\tiny$\pm3\times10^{-6}$} \\
\bottomrule
\end{tabular}
}
\end{table}

\begin{table}[h!]
\centering
\caption{Comparison of method performance over time on the 100D EB dataset. Best results are in bold, and the second best are underlined.}
\label{tab:eb_100d}
\resizebox{\textwidth}{!}{
\begin{tabular}{lcccccccc}
\toprule
\textbf{Method} & \multicolumn{2}{c}{\textbf{t=1}} & \multicolumn{2}{c}{\textbf{t=2}} & \multicolumn{2}{c}{\textbf{t=3}} & \multicolumn{2}{c}{\textbf{t=4}} \\
\cmidrule(lr){2-3} \cmidrule(lr){4-5} \cmidrule(lr){6-7} \cmidrule(lr){8-9}
& $\mathcal{W}_1$ & RME & $\mathcal{W}_1$ & RME & $\mathcal{W}_1$ & RME & $\mathcal{W}_1$ & RME \\
\midrule
MMFM & 11.460 & --- & 13.879 & --- & 14.441 & --- & 14.907 & --- \\
Metric FM & 10.806 & --- & 12.348 & --- & 13.622 & --- & 16.801 & --- \\
SF2M & 11.333 & --- & 12.982 & --- & 13.718 & --- & 14.945 & --- \\
MIOFlow & 11.387 & --- & 12.331 & --- & 11.905 & --- & 12.908 & --- \\
BranchSBM & 12.853 & --- & 14.298 & --- & 14.419 & --- & 13.718 & --- \\
TIGON & 10.547 & 0.014 & 12.926 & 0.052 & 13.897 & 0.107 & 14.945 & 0.096 \\
DeepRUOT & 10.256 & \textbf{0.002} & \underline{11.103} & 0.074 & \underline{11.529} & 0.136 & \textbf{12.406} & 0.047 \\
Var-RUOT & 11.746 & 0.091 & 12.237 & 0.024 & 12.957 & 0.150 & 13.335 & 0.074 \\
UOT-FM & 10.757 & 0.056 & 12.799 & 0.037 & 13.761 & 0.044 & 15.657 & 0.022 \\
VGFM & 10.313 & 0.048 & 11.278 & 0.035 & 11.703 & 0.028 & \underline{12.637} & 0.066 \\
WFR-FM & \textbf{9.941} & \underline{0.009} & \textbf{11.040} & \textbf{0.006} & \textbf{11.516} & \underline{0.008} & 12.664 & \underline{0.005} \\
\textbf{USB} & \underline{10.206\tiny$\pm5\times10^{-5}$} & 0.016\tiny$\pm5\times10^{-7}$ & 11.297\tiny$\pm6\times10^{-5}$ & \underline{0.010\tiny$\pm8\times10^{-7}$} & 11.758\tiny$\pm9\times10^{-6}$ & \textbf{0.002\tiny$\pm2\times10^{-6}$} & 12.888\tiny$\pm5\times10^{-5}$ & \textbf{0.003\tiny$\pm3\times10^{-6}$} \\
\bottomrule
\end{tabular}
}
\end{table}

\begin{table}[h!]
    \centering
    \caption{Computational time across different dimensions on the EB dataset.}
    \label{tab:time_eb}
    \begin{tabular}{lc}
    \toprule
    \textbf{Dimension} & \textbf{Time (s)} \\
    \midrule
    5 & 31.36 \\
    50 & 30.15 \\
    100 & 30.67 \\
    \bottomrule
    \end{tabular}
\end{table}

\textbf{Sensitivity analysis for mini-batch OT.}
For larger datasets, full-batch OT can require substantial memory and long computation time. To avoid these costs, one can use mini-batch OT \citep{cfm_tong,fatras2021minibatchOT}. To show that USB is compatible with this strategy—and thus has the potential to handle huge datasets—we evaluated different mini-batch sizes on the EB (100D) and Mouse (\ref{supp:mouse data}) datasets. Results are summarized in Table \ref{tab:batch_size} and Table \ref{tab:batch_size mouse}. On EB (100D), we observe a non‑monotonic dependence on batch size: using either small batches (500) or full‑batch OT leads to slightly inferior performance and longer computation time, while a moderate batch size performs better and runs faster. Nevertheless, the overall impact of batch size is modest, and USB remains on par with the strongest baseline even with small batch size. This indicates that USB can naturally pairs with mini‑batch OT and scales to huge datasets without losing performance. For consistency with the baseline using mini-batch OT (WFR‑FM) and a favorable accuracy–efficiency balance, we use a mini‑batch size of 2,000 on EB (100D), Cite, and Mouse datasets.

\begin{table}[h!]
\centering
\caption{Sensitivity analysis for batch size of mini-batch WFR-OET on the 100D EB dataset.}
\label{tab:batch_size}
\resizebox{\textwidth}{!}{
\begin{tabular}{lccccccccc}
\toprule
\textbf{Batch Size} & \multicolumn{2}{c}{\textbf{t=1}} & \multicolumn{2}{c}{\textbf{t=2}} & \multicolumn{2}{c}{\textbf{t=3}} & \multicolumn{2}{c}{\textbf{t=4}} & \textbf{Time (s)} \\
\cmidrule(lr){2-3} \cmidrule(lr){4-5} \cmidrule(lr){6-7} \cmidrule(lr){8-9}
& $\mathcal{W}_1$ & RME & $\mathcal{W}_1$ & RME & $\mathcal{W}_1$ & RME & $\mathcal{W}_1$ & RME & \\
\midrule
500 & 10.238\tiny$\pm3\times10^{-5}$ & 0.015\tiny$\pm4\times10^{-7}$ & 11.360\tiny$\pm3\times10^{-5}$ & 0.010\tiny$\pm4\times10^{-7}$ & 11.877\tiny$\pm5\times10^{-5}$ & 0.001\tiny$\pm1\times10^{-6}$ & 12.958\tiny$\pm6\times10^{-5}$ & 0.006\tiny$\pm2\times10^{-6}$ & 30.33 \\
1000 & 10.228\tiny$\pm2\times10^{-5}$ & 0.014\tiny$\pm8\times10^{-7}$ & 11.314\tiny$\pm6\times10^{-5}$ & 0.008\tiny$\pm2\times10^{-6}$ & 11.789\tiny$\pm6\times10^{-5}$ & 0.005\tiny$\pm6\times10^{-5}$ & 12.880\tiny$\pm2\times10^{-4}$ & 0.001\tiny$\pm3\times10^{-6}$ & 27.76 \\
2000 & 10.206\tiny$\pm5\times10^{-5}$ & 0.016\tiny$\pm5\times10^{-7}$ & 11.297\tiny$\pm6\times10^{-5}$ & 0.010\tiny$\pm8\times10^{-7}$ & 11.758\tiny$\pm9\times10^{-6}$ & 0.002\tiny$\pm2\times10^{-6}$ & 12.888\tiny$\pm5\times10^{-5}$ & 0.003\tiny$\pm3\times10^{-6}$ & 27.26 \\
3000 & 10.201\tiny$\pm4\times10^{-5}$ & 0.015\tiny$\pm6\times10^{-7}$ & 11.337\tiny$\pm3\times10^{-5}$ & 0.010\tiny$\pm1\times10^{-6}$ & 11.803\tiny$\pm4\times10^{-5}$ & 0.004\tiny$\pm1\times10^{-6}$ & 12.905\tiny$\pm5\times10^{-5}$ & 0.001\tiny$\pm2\times10^{-6}$ & 28.34 \\
4000 & 10.189\tiny$\pm1\times10^{-5}$ & 0.016\tiny$\pm9\times10^{-7}$ & 11.306\tiny$\pm3\times10^{-5}$ & 0.011\tiny$\pm1\times10^{-6}$ & 11.756\tiny$\pm3\times10^{-5}$ & 0.002\tiny$\pm6\times10^{-7}$ & 12.816\tiny$\pm4\times10^{-5}$ & 0.003\tiny$\pm1\times10^{-6}$ & 28.42 \\
\midrule %
w/o mini-batch & 10.216\tiny$\pm2\times10^{-5}$ & 0.014\tiny$\pm4\times10^{-7}$ & 11.330\tiny$\pm5\times10^{-5}$ & 0.011\tiny$\pm9\times10^{-7}$ & 11.817\tiny$\pm3\times10^{-5}$ & 0.000\tiny$\pm1\times10^{-6}$ & 12.916\tiny$\pm7\times10^{-5}$ & 0.005\tiny$\pm2\times10^{-6}$ & 31.53 \\
\bottomrule
\end{tabular}
}
\end{table}

\subsection{Cite-seq Data}
\label{supp:cite data}
We adopt the CITE-seq (Cite) dataset from \citep{lance2022multimodal}, consisting of 31,240 cells across four time points. Following the preprocessing in \citep{wang2025joint}, we keep only the gene-expression modality and apply PCA to 50 dimensions. We use mini-batch OT (batch size 2,000) and set \(\delta=30, \nu=0.001\).The performance on distribution matching and mass matching are summarized in Table \ref{tab:cite}. As shown, USB attains the highest accuracy in half of the experiments and remains comparable to the best baseline on the remainder.

\begin{table}[h!]
\centering
\caption{Comparison of method performance over time on the 50D CITE dataset. Best results are in bold, and the second best are underlined.}
\label{tab:cite}
\begin{tabular}{lcccccc}
\toprule
\textbf{Method} & \multicolumn{2}{c}{\textbf{t=1}} & \multicolumn{2}{c}{\textbf{t=2}} & \multicolumn{2}{c}{\textbf{t=3}} \\
\cmidrule(lr){2-3} \cmidrule(lr){4-5} \cmidrule(lr){6-7}
& $\mathcal{W}_1$ & RME & $\mathcal{W}_1$ & RME & $\mathcal{W}_1$ & RME \\
\midrule
MMFM & 33.971 & --- & 36.854 & --- & 43.721 & --- \\
Metric FM & 28.314 & --- & 28.617 & --- & 33.212 & --- \\
SF2M & 29.543 & --- & 32.655 & --- & 36.265 & --- \\
MIOFlow & 28.290 & --- & 28.524 & --- & \underline{32.230} & --- \\
BranchSBM & 39.908 & --- & 37.008 & --- & \textbf{31.883} & --- \\
TIGON & 28.196 & 0.186 & 27.921 & 0.545 & 32.846 & 0.653 \\
DeepRUOT & 28.245 & 0.168 & \underline{27.908} & 0.525 & 32.950 & 0.634 \\
Var-RUOT & 30.219 & 0.331 & 32.702 & 0.325 & 40.613 & 0.486 \\
UOT-FM & 33.531 & \underline{0.009} & 32.795 & 0.046 & 49.751 & 0.097 \\
VGFM & 29.449 & 0.020 & 29.722 & 0.057 & 33.752 & \textbf{0.001} \\
WFR-FM & \underline{27.831} & 0.043 & \textbf{27.478} & \underline{0.045} & 34.784 & 0.022 \\
\textbf{USB} & \textbf{27.021\tiny$\pm5\times10^{-5}$} & \textbf{0.002\tiny$\pm1\times10^{-6}$} & 28.081\tiny$\pm1\times10^{-4}$ & \textbf{0.015\tiny$\pm2\times10^{-6}$} & 38.447\tiny$\pm$0.003 & \underline{0.010\tiny$\pm5\times10^{-5}$} \\
\bottomrule
\end{tabular}
\end{table}

\subsection{Mouse Hematopoiesis Data}
\label{supp:mouse data}
We adopt the mouse blood hematopoiesis dataset (Mouse) from \cite{weinreb2020lineage}, selecting 49,302 lineage-traced cells measured at three time points. The gene-expression matrix were reduced to 50 dimensions using PCA. Owing to its substantial population growth and large cell number, it is well suited for scalability testing and USB. We use mini-bath OT of size 2000, and set \(\delta=15,\nu=0.001\), the all time points training results is shown in Table \ref{tab: weinreb}. USB outperforms most baselines significantly except WFR-FM, and performs comparable to WFR-FM at all time points.

\begin{table}[h!]
\centering
\caption{Comparison of method performance over time on the 50D Mouse dataset. Best results are in bold, and the second best are underlined.}
\label{tab: weinreb}
\begin{tabular}{lcccc}
\toprule
\textbf{Method} & \multicolumn{2}{c}{\textbf{t=1}} & \multicolumn{2}{c}{\textbf{t=2}} \\
\cmidrule(lr){2-3} \cmidrule(lr){4-5}
& $\mathcal{W}_1$ & RME & $\mathcal{W}_1$ & RME \\
\midrule
MMFM & 7.647 & --- & 10.156 & --- \\
Metric FM & 7.788 & --- & 11.449 & --- \\
SF2M & 8.217 & --- & 11.086 & --- \\
MIOFlow & 6.313 & --- & 6.746 & --- \\
BranchSBM & 7.957 & --- & 9.236 & --- \\
TIGON & 6.140 & 0.382 & 6.973 & 0.326 \\
DeepRUOT & 6.052 & 0.062 & 6.757 & 0.041 \\
Var-RUOT & 7.951 & 0.131 & 10.862 & 0.154 \\
UOT-FM & 8.114 & 0.035 & 9.170 & \underline{0.011} \\
VGFM & 6.274 & 0.076 & 6.796 & 0.070 \\
WFR-FM & \textbf{5.486} & \textbf{0.012} & \textbf{6.211} & \underline{0.011} \\
\textbf{USB} & \underline{5.589\tiny$\pm1\times10^{-6}$} & \underline{0.013\tiny$\pm6\times10^{-8}$} & \underline{6.548\tiny$\pm4\times10^{-6}$} & 
\textbf{0.008\tiny$\pm4\times10^{-7}$} \\
\bottomrule
\end{tabular}
\end{table}

\textbf{Sensitivity analysis for mini-batch OT.} As mentioned in (\ref{supp:eb data}), we also test the sensitivity of the batch size of mini-batch OT on Mouse dataset. The result is shown in Table \ref{tab:batch_size mouse}. The non-monotonic varying computation time is also observed. Intuitively, for small batch sizes, it take shorter time to compute each mini-batch OT, but the number of batch are larger, thus may also require more computation time. Hence, the computation time exhibits a U-shape variation according to batch size. On Mouse dataset, the \(\mathcal{W}_1\) distance and RME decrease when using larger batch size while it slightly increase when using full-batch. The monotonicity is not observed on EB dataset. It may because of that EB is not large enough. Intuitively, the accuracy will be positively correlated to the batch size on sufficiently large datasets. Consistent to the results on EB, the performance and computation time show no significant difference across batch sizes which are not extremely small for Mouse (\(>\)1000).

\begin{table}[h!]
\centering
\caption{Sensitivity analysis for batch size of mini-batch WFR-OET on the 50D Mouse dataset.}
\label{tab:batch_size mouse}
\resizebox{\textwidth}{!}{
\begin{tabular}{lccccc}
\toprule
\textbf{Batch Size} & \multicolumn{2}{c}{\textbf{t=1}} & \multicolumn{2}{c}{\textbf{t=2}} & \textbf{Time (s)} \\
\cmidrule(lr){2-3} \cmidrule(lr){4-5}
& $\mathcal{W}_1$ & RME & $\mathcal{W}_1$ & RME & \\
\midrule
500 & 5.644\tiny$\pm2\times10^{-6}$ & 0.015\tiny$\pm1\times10^{-7}$ & 6.854\tiny$\pm1\times10^{-5}$ & 
0.001\tiny$\pm5\times10^{-7}$ & 307.73 \\
1000 & 5.599\tiny$\pm4\times10^{-6}$ & 0.014\tiny$\pm2\times10^{-7}$ & 6.535\tiny$\pm5\times10^{-6}$ & 
0.006\tiny$\pm6\times10^{-7}$ & 293.61 \\
2000 & 5.589\tiny$\pm1\times10^{-6}$ & 0.013\tiny$\pm6\times10^{-8}$ & 6.548\tiny$\pm4\times10^{-6}$ & 
0.008\tiny$\pm4\times10^{-7}$ & 292.28 \\
3000 & 5.539\tiny$\pm2\times10^{-6}$ & 0.011\tiny$\pm1\times10^{-7}$ & 6.476\tiny$\pm7\times10^{-6}$ & 
0.007\tiny$\pm7\times10^{-7}$ & 296.69 \\
4000 & 5.492\tiny$\pm3\times10^{-6}$ & 0.012\tiny$\pm1\times10^{-7}$ & 6.416\tiny$\pm4\times10^{-6}$ & 
0.009\tiny$\pm4\times10^{-7}$ & 300.50 \\
5000 & 5.484\tiny$\pm1\times10^{-6}$ & 0.010\tiny$\pm6\times10^{-8}$ & 6.396\tiny$\pm2\times10^{-5}$ & 
0.001\tiny$\pm6\times10^{-7}$ & 306.06 \\
10000 & 5.502\tiny$\pm2\times10^{-6}$ & 0.013\tiny$\pm1\times10^{-7}$ & 6.397\tiny$\pm2\times10^{-6}$ & 
0.013\tiny$\pm7\times10^{-7}$ & 313.60 \\
20000 & 5.438\tiny$\pm1\times10^{-6}$ & 0.009\tiny$\pm1\times10^{-7}$ & 6.335\tiny$\pm4\times10^{-6}$ & 
0.006\tiny$\pm5\times10^{-7}$ & 332.33 \\
\midrule %
w/o mini-batch & 5.527\tiny$\pm8\times10^{-7}$ & 0.008\tiny$\pm1\times10^{-7}$ & 6.383\tiny$\pm1\times10^{-5}$ & 
0.001\tiny$\pm5\times10^{-7}$ & 331.22 \\
\bottomrule
\end{tabular}
}
\end{table}

\textbf{Scalability with respect to the cell number.}
We subsampled the Mouse dataset to 10000, 15000, 20000, 25000, 30000, 35000, 40000, 45000 cells to test the scalability of USB w.r.t the size of the dataset. The training time on each subsampled dataset are scattered in Fig.\ref{fig:n cells}. The training time scales linearly with cell numbers, in line with other competing methods, indicating that USB can be applied to larger datasets.  

\begin{figure}[h!]
    \centering
    \includegraphics[width=0.6\textwidth]{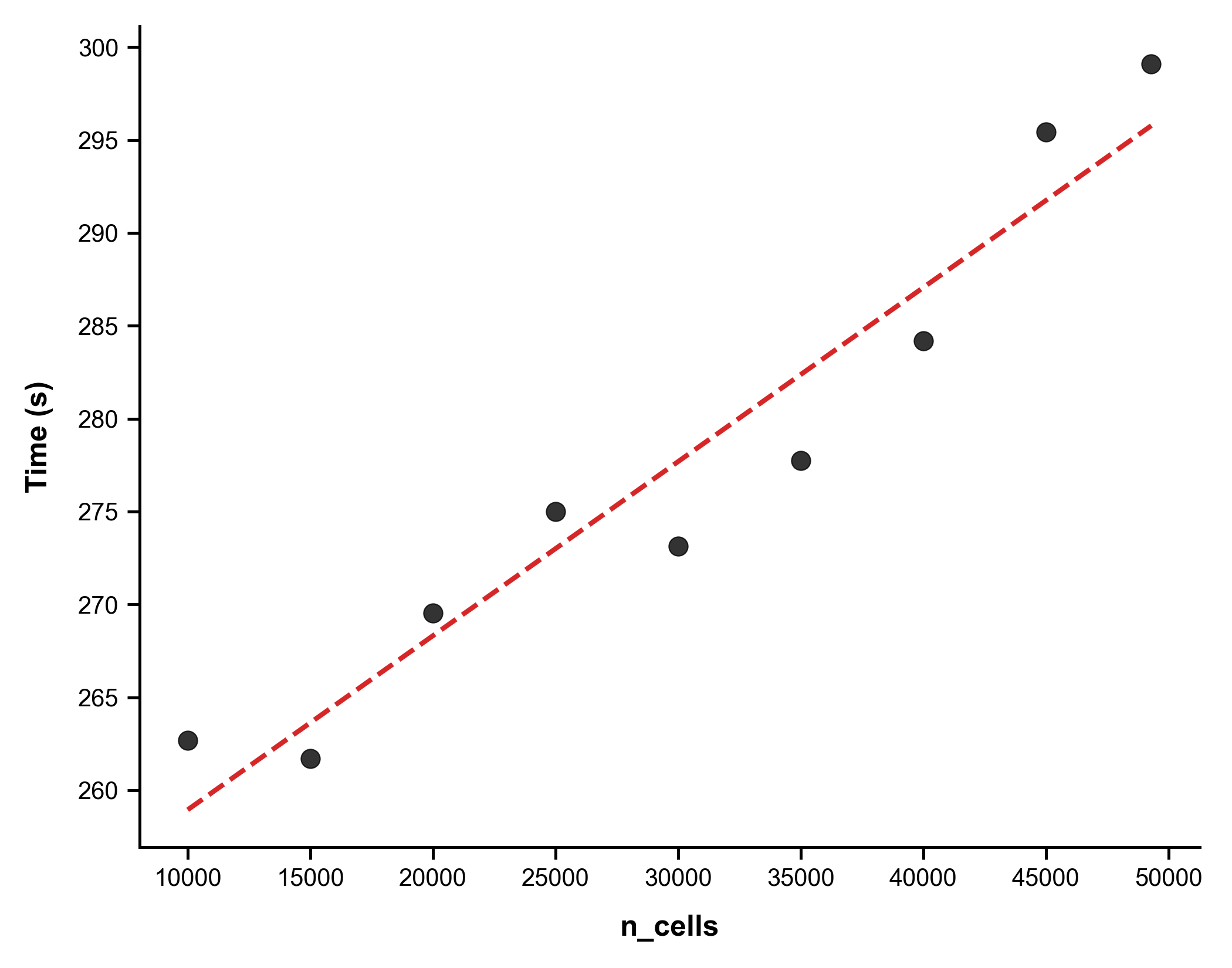} 
    \caption{Training time v.s. cell numbers}
    \label{fig:n cells}
\end{figure}

\subsection{Computational cost comparison}
\label{appendix:comupational cost}
We ran DeepRUOT, SF2M, WFR-FM, and USB on a single H200 GPU across varying cell numbers (10000, 20000, 30000, 40000, 49302) using the Mouse Hematopoiesis dataset. Hyperparameter setting: batch size=512, epochs=3000, mini-batch OT with chunk size=2000, networks are MLPs with 256 hidden channels and 5 layers. The runtime and memory results are summarized in Table \ref{tab:runtime comparison}.

\begin{table}[htbp]
    \centering
    \caption{Runtime and memory results across varying cell numbers}
    \label{tab:runtime comparison}
    \begin{tabular}{llrrrrr}
        \toprule
        \textbf{Metric} & \textbf{Method} & \textbf{N=10000} & \textbf{N=20000} & \textbf{N=30000} & \textbf{N=40000} & \textbf{N=49302} \\
        \midrule
        \multirow{4}{*}{\textbf{Runtime (s)}} 
         & DeepRUOT & 400.03 & 753.90 & 1100.51 & 1596.62 & 2007.85 \\
         & WFR-FM & 27.95 & 33.36 & 42.17 & 54.18 & 67.80 \\
         & SF2M & 28.35 & 33.84 & 42.83 & 54.74 & 68.78 \\
         & \textbf{USB} & 29.40 & 35.39 & 44.26 & 56.21 & 70.10 \\
        \midrule
        \multirow{4}{*}{\textbf{Peak CPU RAM (MB)}} 
         & DeepRUOT & 137.07 & 457.16 & 1010.16 & 1769.52 & 2668.26 \\
         & WFR-FM & 1121.32 & 4336.26 & 9681.68 & 17140.84 & 26032.36 \\
         & SF2M & 1121.32 & 4336.26 & 9681.68 & 17140.84 & 26032.36 \\
         & \textbf{USB} & 1121.32 & 4336.25 & 9681.68 & 17140.84 & 26032.36 \\
        \midrule
        \multirow{4}{*}{\textbf{Peak GPU VRAM (MB)}} 
         & DeepRUOT & 2742.58 & 5904.51 & 9515.35 & 13614.29 & 17586.63 \\
         & WFR-FM & 242.21 & 708.72 & 444.44 & 717.98 & 522.33 \\
         & SF2M & 242.21 & 708.82 & 444.54 & 718.08 & 522.43 \\
         & \textbf{USB} & 242.21 & 713.05 & 448.77 & 722.31 & 526.65 \\
        \bottomrule
    \end{tabular}
\end{table}

Simulation-free algorithms outperform DeepRUOT in runtime. The extra time and memory cost of USB against SF2M and WFR is negligible. Since the memory of SF2M, WFR, USB are very similar, we present an analysis below.

\noindent\textbf{Shared OT Pre-computation.} As shown in the table, all three methods exhibit an identical peak CPU memory that grows quadratically with respect to $N$. It is because they share an OT calculation step which causes the dominating $\mathcal{O}(N^2)$ CPU memory peak.

\noindent\textbf{Small Extra Memory on GPU.} USB's peak VRAM is consistently $\sim 4.3$ MB higher than SF2M and WFR across all sample sizes. This extra $4.3$ MB difference corresponds to the parameter size of our additional neural network. SF2M uses a velocity net and a score net, WFR uses a velocity net and a growth net, while USB uses all 3 networks. The extra network has 5 hidden layers, each has 256 neurons. The total parameter number is about $256 \times (51 + 256 \times 4 + 51) \approx 290000$. We use float32 and Adam ($\text{memory} \approx 4 \times \text{parameters}$), thus the expected extra memory is about $290000 \times 4\text{ Bytes} \times 4 / 1024^2 \approx 4.4\text{ MB}$, which is consistent with the reported extra memory.

These results show that USB allows for more complex dynamics modeling with negligible extra computational cost.

\subsection{WFR approximation}
\label{appendix:WFR approximation}
We approximate the intractable RUOT semi-coupling using the efficient WFR semi-coupling (\ref{supp:taylor expansion}). We studied the performance of this approximation on small numerical examples (RUOT were optimized using scipy packages).

\textbf{Eg1. 1D Splitting.} An initial unit mass at $x=0$ is transported to two targets at $x=-1$ and $x=1$ (with target masses of $0.4$ and $1.6$, respectively). As shown in Figure \ref{fig:1D-splitting}, while the absolute costs differ, the normalized cost landscapes of WFR and RUOT are highly similar, achieving the exact same global minimum at $0.303$ (the optimal splitting proportion).

\begin{figure}[h!]
    \centering
    \includegraphics[width=0.8\textwidth]{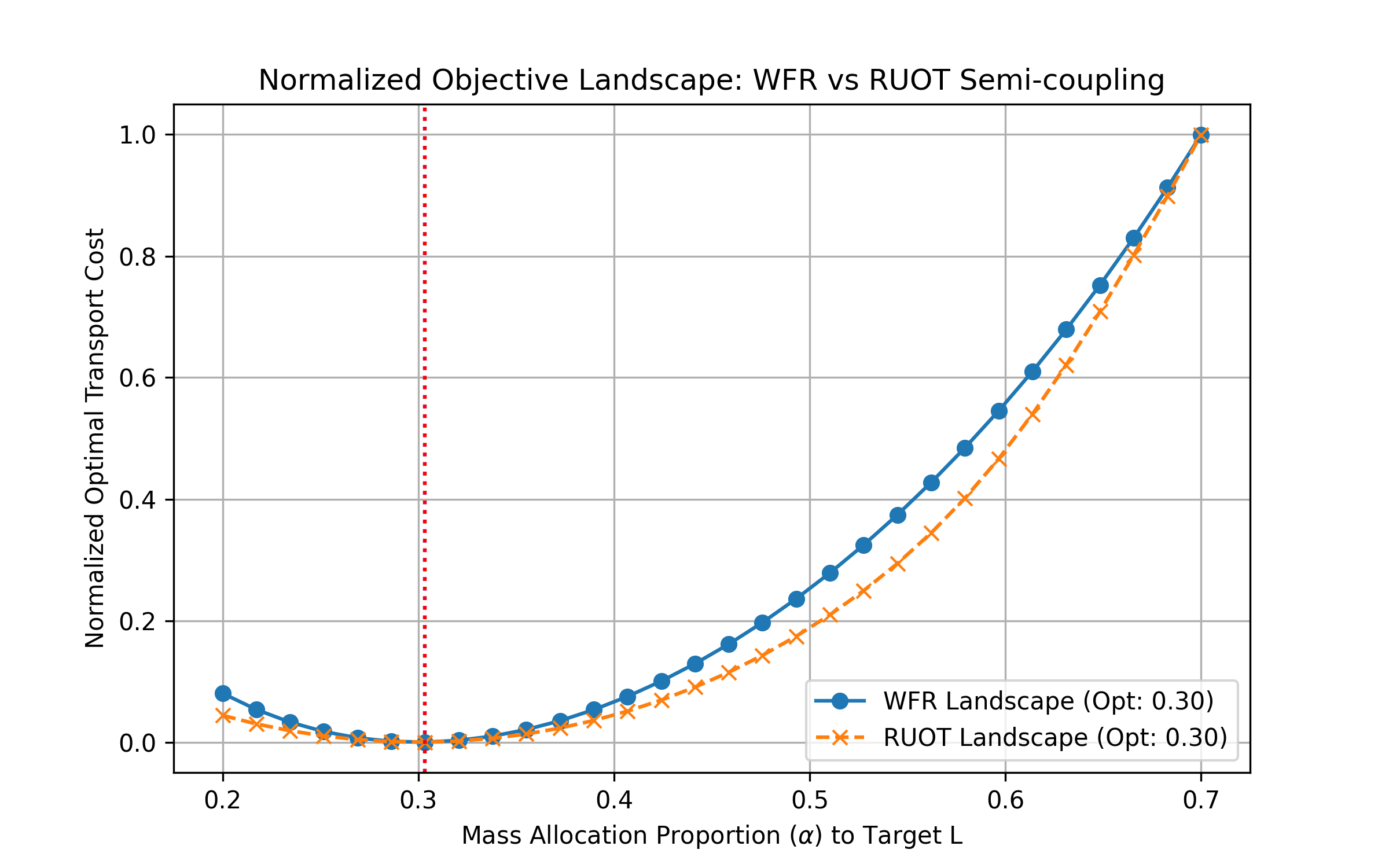} 
    \caption{WFR approximation on 1D case.}
    \label{fig:1D-splitting}
\end{figure}

\textbf{Eg2. 2D unbalanced transport.} We transported mass from 5 source points (total mass $1.0$) to 5 target points. The total target mass is scaled by an unbalance factor $\alpha \in \{0.7, 1.0, 1.3, 1.5, 2.0\}$, with individual target masses drawn from a Dirichlet distribution. In \cref{fig:0.7,fig:1,fig:1.3,fig:1.5,fig:2}, scatter plots comparing the matrix elements of WFR and RUOT show that the vast majority of points align tightly along the 45-degree diagonal ($y=x$). The accompanying heatmaps further visually confirm that the resulting optimal transport plans are very similar.

\begin{figure}
    \centering
    \includegraphics[width=0.7\textwidth]{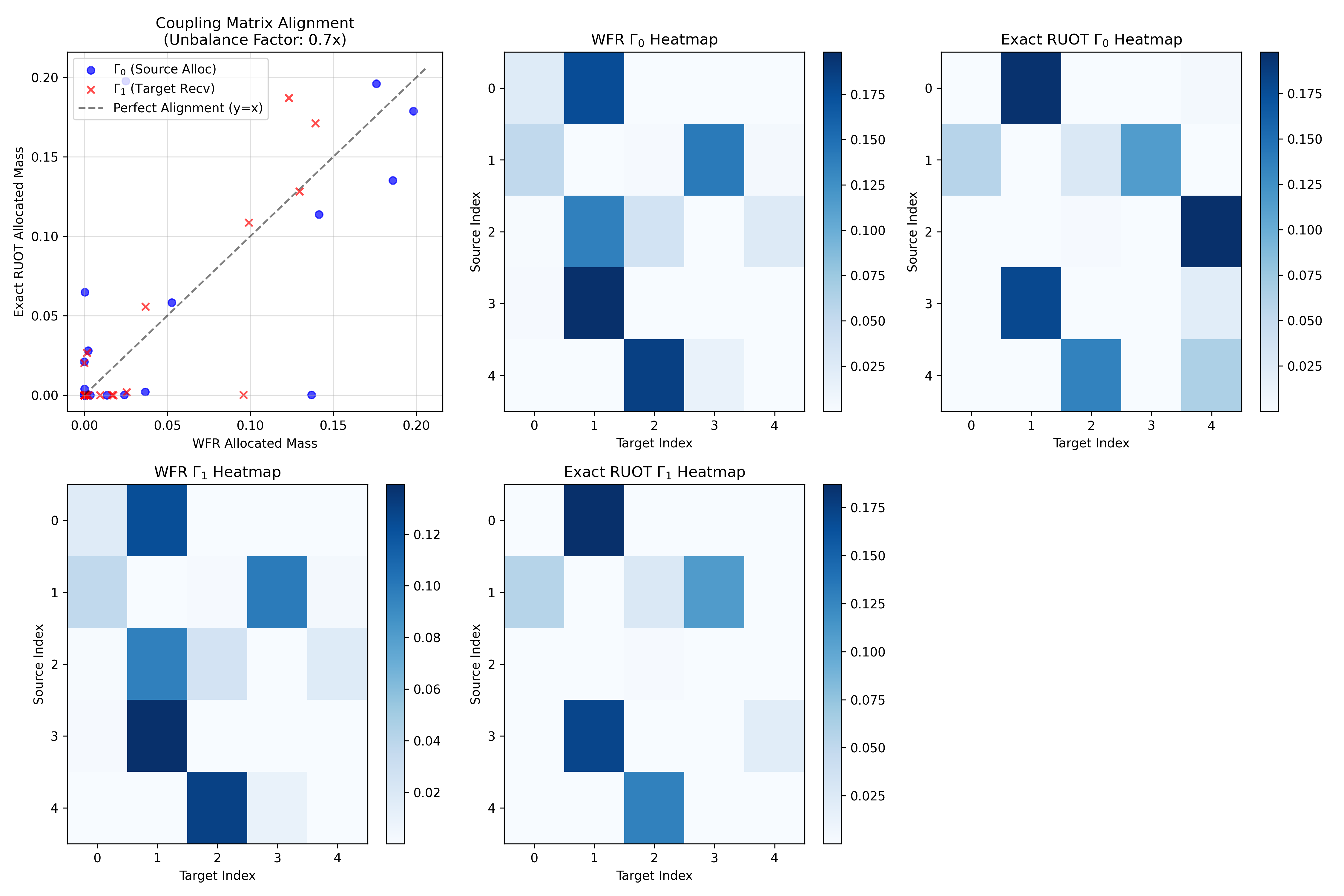}
    \caption{WFR approximation on 2D case (\(\alpha=0.7\)).}
    \label{fig:0.7}
\end{figure}
\begin{figure}
    \centering
    \includegraphics[width=0.7\textwidth]{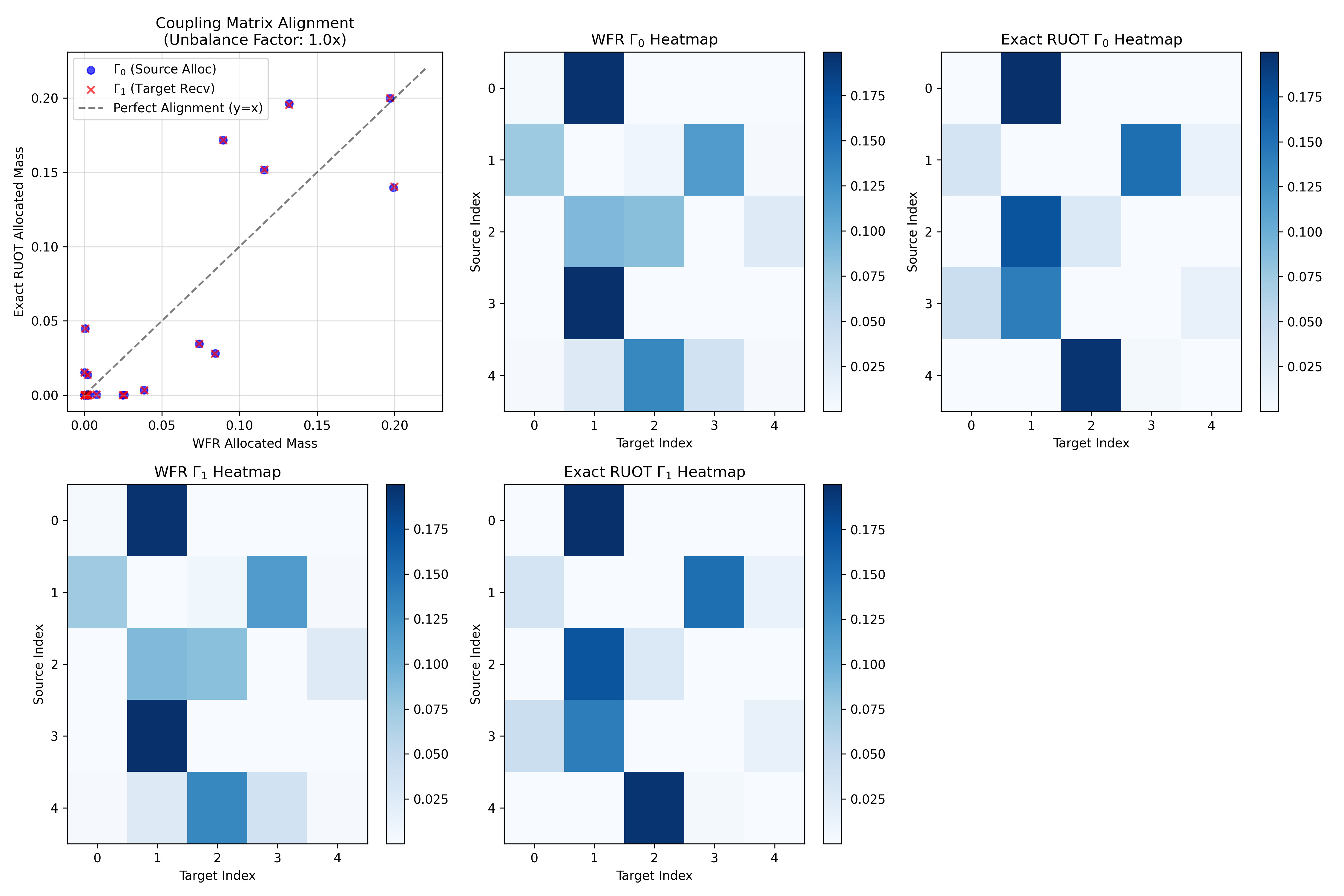}
    \caption{WFR approximation on 2D case (\(\alpha=1\)).}
    \label{fig:1}
\end{figure}
\begin{figure}
    \centering
    \includegraphics[width=0.7\textwidth]{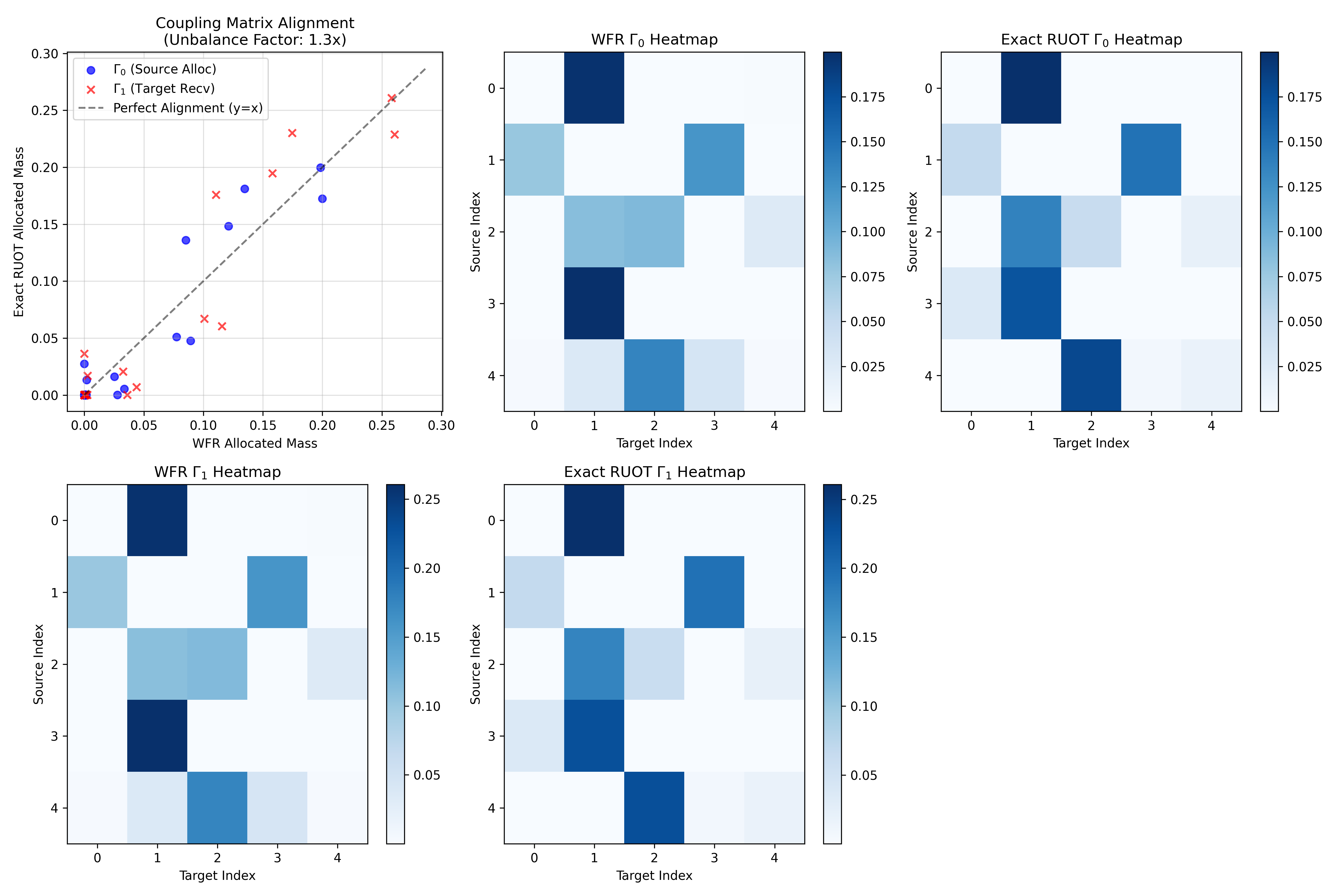}
    \caption{WFR approximation on 2D case (\(\alpha=1.3\)).}
    \label{fig:1.3}
\end{figure}
\begin{figure}
    \centering
    \includegraphics[width=0.7\textwidth]{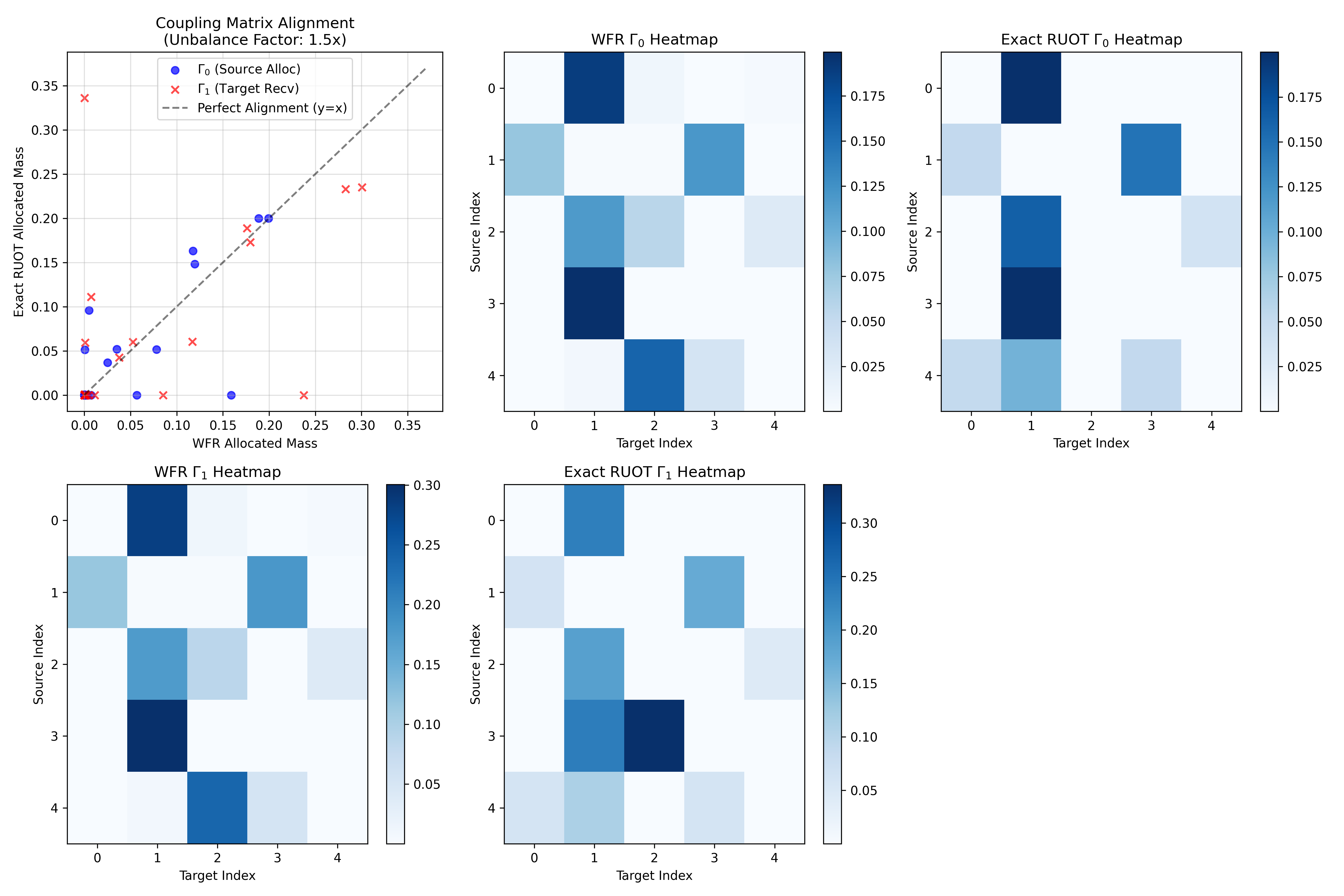}
    \caption{WFR approximation on 2D case (\(\alpha=1.5\)).}
    \label{fig:1.5}
\end{figure}
\begin{figure}
    \centering
    \includegraphics[width=0.7\textwidth]{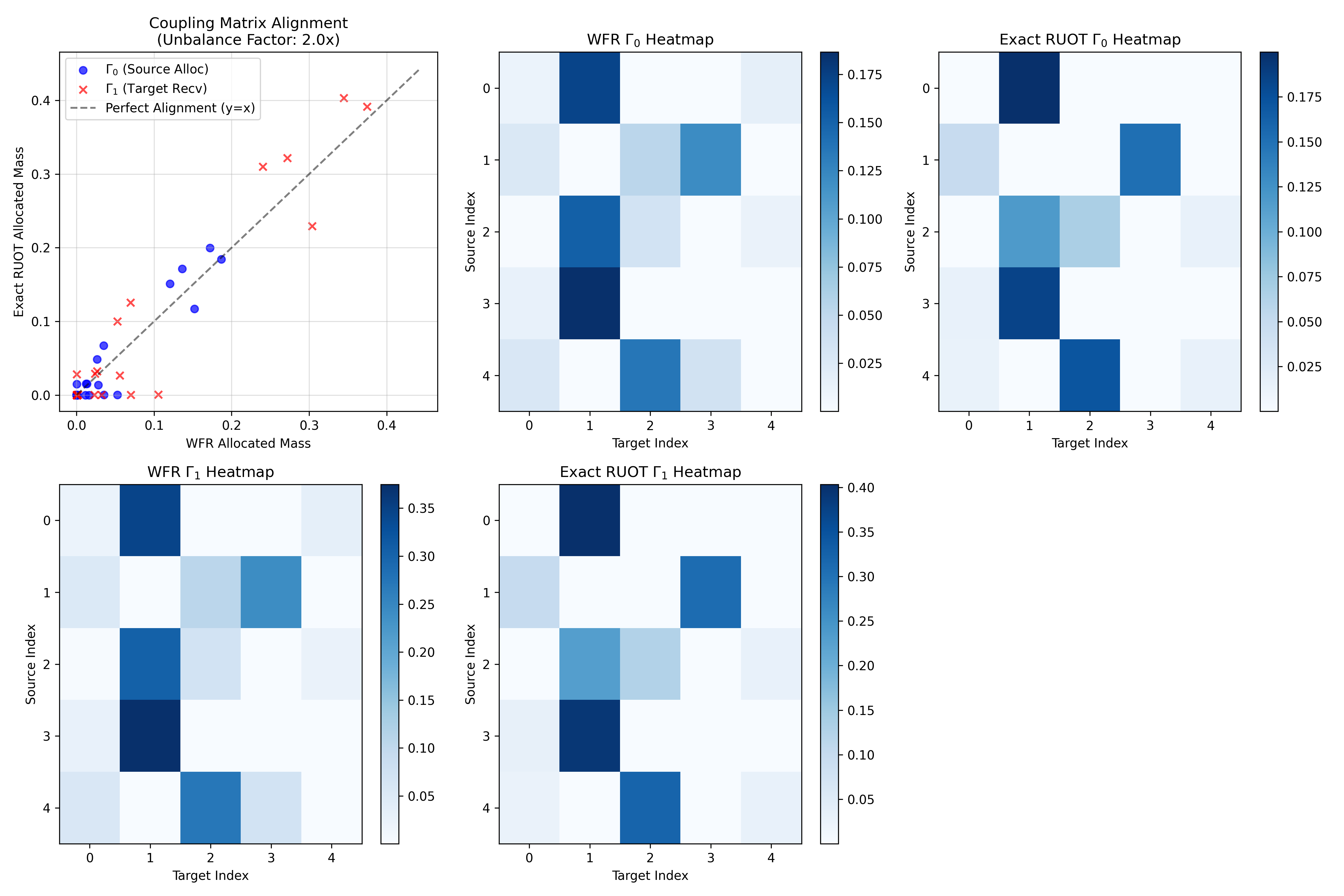}
    \caption{WFR approximation on 2D case (\(\alpha=2\)).}
    \label{fig:2}
\end{figure}

These results demonstrate that WFR is a highly accurate and efficient approximation of RUOT when the unbalance is moderate. Since real biological mass changes are typically not violently drastic between consecutive, reasonably sampled time points, assuming a moderate unbalance effect is biologically sound and justified.


\clearpage
\section{Relation to other algorithms}
\label{Appendix:D}

\subsection{Relation to SF\textsuperscript{2}M}
\label{relation:SF2M}
SF\textsuperscript{2}M \citep{tong2023simulationfree} is the first simulation-free framework for learning balanced Schrödinger Bridge between arbitrary source and target distributions. It utilized the KL-disintegration to decoupled the SB to two parts: the regularized OT (ROT) coupling induced by the static SB problem, and the conditional path bridging two Diracs. Following \citep{follmer1988random,SBsurvey}, the static SB problem (\ref{eq:static SB}) can be written as a regularized OT (ROT) problem
\begin{equation}
\begin{aligned}
&\text{KL}(\mathbb{P}_{01} \| \mathbb{Q}_{01})=\frac{1}{2\nu^2}\int\|\bm{x}-\bm{y}\|^2\mathbb{P}_{01}(\mathrm{d}\bm{x},\mathrm{d}\bm{y})+\text{KL}(\mathbb{P}_{01} \| \mu_0\otimes\mu_1)+C_{\nu}   
\end{aligned}
\end{equation}
Thus, SF\textsuperscript{2}M first calculated the optimal coupling \(\pi^{\star}_{2\nu^2}\) of the ROT problem
\begin{equation}
\begin{aligned}
\min_{\pi}\frac{1}{2\nu^2}\int\|\bm{x}-\bm{y}\|^2\pi(\mathrm{d}\bm{x},\mathrm{d}\bm{y})+\text{KL}(\pi\| \mu_0\otimes\mu_1)   
\end{aligned}
\end{equation}
then obtained the SB \(\mathbb{P}^{\star}\) by integrating the conditional paths, which are Brownian bridges in this case
\begin{equation}
\mathbb{P}^{\star}=\int(\nu\mathbb{W})^{\bm{x}\bm{y}}
\mathrm{d}\pi^{\star}_{2\nu^2}(\bm{x},\bm{y})
\end{equation}

SF\textsuperscript{2}M tried to find the most likely stochastic process referencing on a Brownian motion, while USB tries to find the most likely stochastic process referencing on a branching Brownian motion. In comparison, USB extends the ROT coupling to RUOT semi-coupling, and generalizes the Brownian bridge to Poisson-Brownian bridge to deal with unbalanced source and target. Though USB focuses on integrating the stochastic and unbalanced effect of single-cell dynamics, the balanced cases can be naturally included. Under balanced cases, by setting the reference branching rate \(\lambda \to 0^{+}\), USB reduces to SF\textsuperscript{2}M. To see that, one can calculate the no-growth limit of the growth penalty (\ref{supp eq:dual growth penalty}). Since the penalty is even, we assume \(g\ge0\).
\begin{equation}
\begin{aligned}
\lim_{\lambda\to0^{+}}\Psi_{\nu,\lambda}(g)&=\lim_{\lambda\to0^{+}}\nu^2\lambda(1-\sqrt{1+g^2/\lambda^2}+\frac{g}{\lambda}\operatorname{ln}(\frac{g}{\lambda}+\sqrt{1+g^2/\lambda^2}))\\
&=\lim_{\lambda\to0^{+}}\nu^2\lambda(1-\frac{g}{\lambda}+o(\lambda^{-1})+\frac{g}{\lambda}\operatorname{ln}\frac{2g}{\lambda}+o(\lambda^{-1}))\\
&=\lim_{\lambda\to0^{+}}(-\nu^2 g+o(1)+\nu^2 g \operatorname{ln}(2g)-\nu^2 g \operatorname{ln}(\lambda))\\
&=\left \{
\begin{aligned}
0,\quad&g=0\\
+\infty,\quad&g\neq0
\end{aligned}
\right .
\end{aligned}
\end{equation}
Hence, when \(\lambda\to0\), the growth rate is forced to be 0, and the corresponding RUOT problem (\ref{RUOT}) reduces to  
\begin{equation}
\begin{aligned}
&\text{RUOT}_{\text{no grwoth}}(\mu_0,\mu_1) =\inf_{\rho,g,\bm{u}} \int_0^1\int_{\mathcal{X}}  \, \frac{1}{2}\Vert \bm{u}(\bm{x},t)\Vert_2^2\rho_t(\bm{x})\mathrm{d} \bm{x}\mathrm{d}t \\
&\text{s.t.} \ \ \ \ \ \ \ \ \ \ \partial_t\rho+\nabla_{\bm{x}}\cdot(\rho \bm{u})=\frac{\nu^2}{2}\Delta_{\bm{x}}\rho,\ \rho_0=\mu_0,\ \rho_1=\mu_1
\end{aligned}
\end{equation}
which is equivalent to the balanced SB problem \citep{chen2016relation,Pra1991}. Hence the semi-coupling reduces to the coupling induced by the static SB problem. For conditional path, Poisson-Brownian bridge also reduces to the standard Brownian bridge when there is no growth. Thus, USB can reduce to SF\textsuperscript{2}M in mass conserved cases by referencing on a BBM with zero branching rate, which is a Brownian motion.

\subsection{Relation to WFR-FM}
\label{relation:WFR-FM}
WFR-FM \citep{wfr_fm} is an unbalanced extension of flow matching based on WFR geometry. It can learn both the velocity and the growth rate simultaneously without ODE simulation, and one can prove that the learned measure trajectory is a geodesic under WFR geometry. The algorithm efficiently addresses unbalanced mass in single-cell dynamics in both theory and practice, but it cannot model stochasticity. In contrast, USB extends WFR-FM’s unbalanced flow matching loss to a unbalanced score matching loss, thereby enabling efficient joint modeling of both unbalance and stochasticity in cell dynamics. Notably, USB cannot be simply regarded as a stochastic extension of WFR-FM; their microscopic dynamics are dissimilar. WFR-FM is based on WFR geometry, whereas USB is based on the BSB problem. The mass variation along WFR-FM’s conditional path is quadratic, causing the mass to decrease and then increase even when connecting two equal-mass Dirac measures via a WFR geodesic. By contrast, the mass variation along USB’s conditional path is exponential; when two equal-mass Dirac measures are connected by a Poisson–Brownian bridge, the mass does not change. Hence, USB cannot include WFR-FM as a limit case. One interesting question is that is there a stochastic version of WFR which could naturally reduce to WFR?

\subsection{Relation to DeepRUOT}
\label{relation:DeepRUOT}
DeepRUOT \citep{DeepRUOT} employs dynamic RUOT to simultaneously model the stochasticity and unbalanced mass inherent in single-cell dynamics, solving the corresponding RUOT problem using a NeuralODE approach. In contrast, USB is proposed based on the BSB problem. It is also capable of simultaneously modeling stochasticity and unbalanced mass, and since it employs unbalanced score matching for solving, it is more efficient than DeepRUOT. To approximate the static semi-coupling within the BSB problem, USB also utilizes RUOT. However, it does not utilize DeepRUOT for the dynamic solution, instead, it approximates its static form using the quickly solvable static WFR.

\subsection{Relation to UDSB}
\label{relation:UDSB}
UDSB \citep{UDSB} is also an algorithm for solving stochastic transport between unbalanced measures. It is based on operator theory. UDSB introduces a coffin state outside the ambient space to model changes in total mass, and uses iterative proportional fitting (IPF) to solve the problem. In contrast, USB is designed based on the BSB problem, hence the two in fact tackle different problems. UDSB turns unbalanced stochastic transport into a diffusion Schrödinger bridge problem in an extended state space by introducing the coffin state; USB naturally introduces unbalanced effect by replacing the reference process of Schrödinger bridge from Brownian motion to branching Brownian motion. At the microscopic level, branching Brownian motion better matches the dynamical picture of cell division where new cells born from existing cells instead of a common coffin state, making USB more suitable for single-cell dynamics inference. Moreover, the solution techniques employed by the two methods differ substantially: UDSB is IPF-based and can be viewed as a unbalanced extension of classical Schrödinger bridge numerics, whereas USB is based on unbalanced score matching and should be regarded as a generalization of score matching methods.

\subsection{Relation to BranchSBM}
\label{relation:BranchSBM}
BranchSBM \citep{tang2025branched} is a recently proposed multimodal generalization of the SB problem, aimed at capturing branched or divergent evolution from a common origin to multiple distinct outcomes. It introduces mass weights into the generalized SB \citep{liu2024generalized}, thereby extending it to an unbalanced generalized SB; by training multiple such unbalanced generalized SB with conserved total mass across branches, BranchSBM models how a population splits from a single source into several branches and ultimately reaches multiple targets. The branching emphasized by BranchSBM is at population-level, focusing on the allocation and flow of mass among branches. Since the total mass is conserved, BranchSBM is inherently unable to model the unbalanced effect in single-cell dynamics. In contrast, USB emphasizes particle-level branching, reflecting the dynamical picture of cell division and apoptosis to model unbalanced effect. How to combine BranchSBM’s macroscopic branching structure with USB’s microscopic branching structure to build a framework that simultaneously models unbalance, stochasticity, and multimodality is an interesting direction for future research.

\subsection{Relation to VGFM}
\label{relation:VGFM}
VGFM \citep{wang2025joint} creatively employs semi-constrained OT and two-period transport to decouple the growth and velocity dynamics, subsequently achieving simultaneous modeling by integrating them through defined joint dynamics. In VGFM, the conditional path between Dirac pairs follows linear interpolation, while the mass evolution follows an exponential trajectory. This conditional path can be viewed as the limit of the USB conditional path as stochastic noise approaches zero; specifically, the reference process corresponds to a branching Brownian motion with a diffusion parameter of zero, exhibiting only birth-death without diffusion.

However, VGFM cannot be regarded as a deterministic degeneration of USB due to differences in their adopted couplings. When calculating the coupling, VGFM utilizes a semi-constrained OT plan, which essentially performs OT between two balanced distributions, thereby treating transitions and mass variations separately at this stage. In contrast, USB is a principled algorithm based on the BSB problem. It employs RUOT semi-coupling to approximate the BSB problem, integrating transitions with birth-death dynamics within this very step. Consequently, even if the diffusion parameter of the reference process in USB is set to zero, it does not degenerate into VGFM.

Furthermore, VGFM involves a two-stage training process: the first stage uses flow matching for warm-up, while the second requires optimizing an OT loss via NeuralODE, which is simulation-based, to achieve better performance. By comparison, USB is a fully simulation-free algorithm that requires only a single stage of unbalanced score matching training.

\subsection{Relation to biologically motivated trajectory inference methods}
Several non-OT-based single-cell trajectory inference methods have been established in the past few years. For example, PAGA \citep{PAGA}, Slingshot \citep{slingshot}, Palantir \citep{palantir}, RNA velocity-based methods \citep{LaManno2018,scvelo,dynamo}. They tackled the challenge of inferring single-cell dynamics from single scRNA-seq snapshot without any temporal information. In contrast, OT-based trajectory inference methods deal with another problem, i.e. inferring single-cell dynamics from few temporal snapshots where true time information are available. We point out that ignoring the temporal information may lead to unrealistic trajectories.

For example, we run PAGA and Slingshot on Simulation data (Figure \ref{fig:PAGA and Slingshot}). Two main issues are observed. 1) Relying on spatial proximity, PAGA fail when temporally adjacent cells are discontinuous on the manifold. For instance, PAGA drops connections between temporally valid but spatially distant clusters. 2) In the Simulation dataset, the Time 0 snapshot originates from two distinct spatial locations (bottom-left and bottom-right), which are naturally identified as separate clusters. without using temporal information, Slingshot forces a sequential, chronological relationship between these two concurrent starting populations, resulting in a temporal mismatch. 

\begin{figure}[h!]
    \vskip 0.2in
    \centering
    \includegraphics[width=0.8\textwidth]{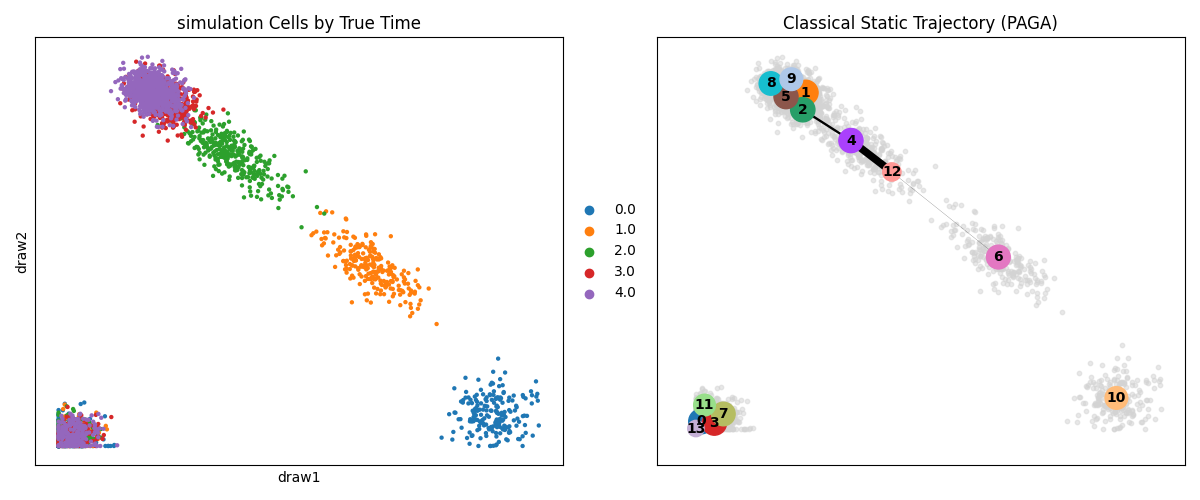} 
    \includegraphics[width=0.8\textwidth]{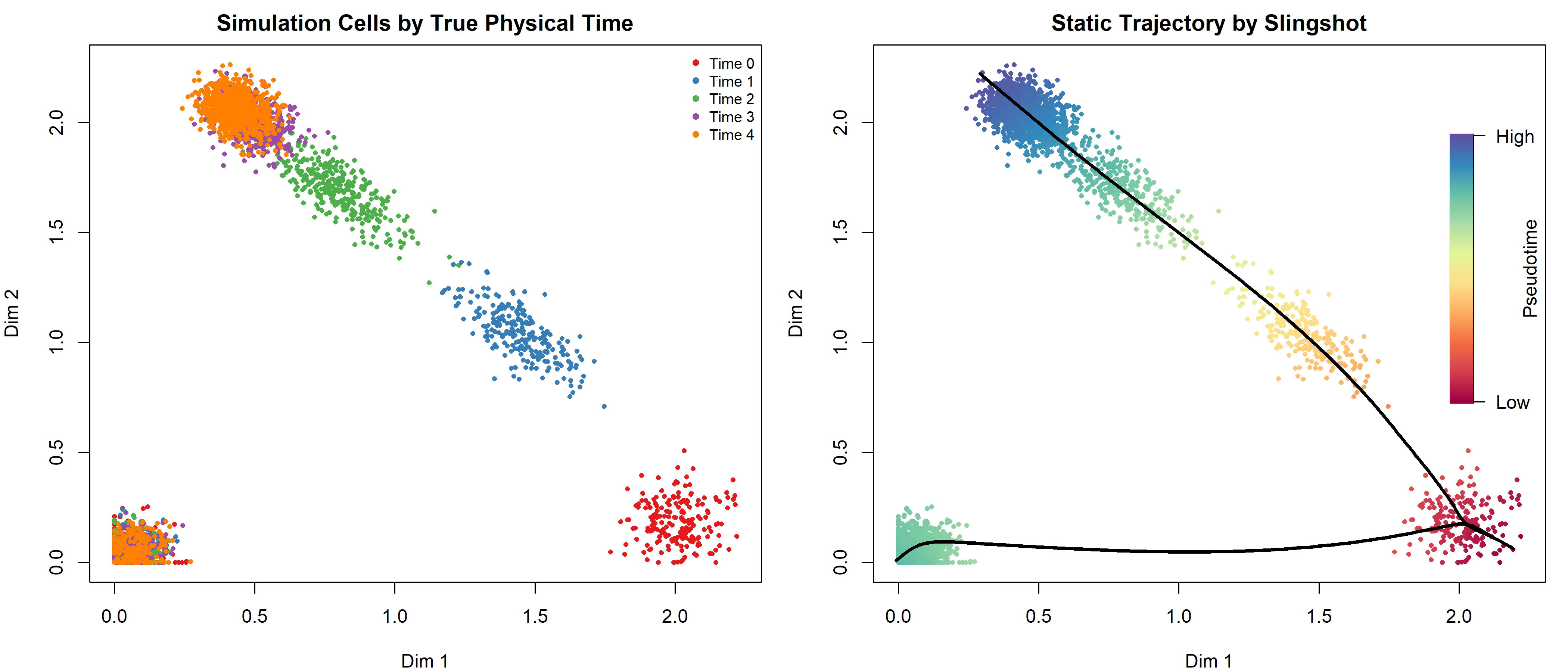} 
    \caption{PAGA and Slingshot on Simulation data. Upper panel: PAGA; Lower panel: Slingshot}
    \label{fig:PAGA and Slingshot}
\end{figure}

These issues fundamentally stem from the inability to leverage the temporal information. In contrast, OT-based methods like USB explicitly enforce temporal progression between snapshots, naturally avoiding these problems.

\section{Discussion on general travelling dirac}
\label{supp:travelling dirac}
In this section, we first review Chizat’s method \citep{chizat2018interpolating} for recasting WFR into the OET formulation—namely, by computing the WFR cost between two Dirac measures (the corresponding trajectory is called the travelling Dirac) and expressing the WFR cost between two distributions as an integral of the Dirac WFR cost, thereby converting dynamic WFR into static WFR and then obtaining OET via duality. We then discuss the difficulties this approach encounters in the general RUOT problem.

\subsection{Travelling Dirac}
To solve a WFR problem i.e. a RUOT problem with quadratic growth penalty, one can reformulate the WFR problem as a integral of WFR costs between pair of Diracs.
\begin{equation}
\label{supp:Dirac integral}
\text{WFR}_{\delta}^2(\mu_0,\mu_1) = \inf_{(\gamma_0,\gamma_1)\in (\mathcal{M}_+(\mathcal{X}^2))^2} \int_{\mathcal{X}^2}  \, \text{WFR-DD}_{\delta}^2(\gamma_0(\bm{x},\bm{y})\delta_{\bm{x}},\gamma_1(\bm{x},\bm{y})\delta_{\bm{y}})\mathrm{d} \bm{x}\mathrm{d} \bm{y},
\end{equation}
where the WFR cost between two Diracs is
\begin{equation}
\label{Dirac WFR}
\begin{aligned}
&\text{WFR-DD}_{\delta}^2(m_0\delta_{\bm{x}_0},m_1\delta_{\bm{x}_1}) = \inf_{m,\bm{x}} \int_0^1\frac{1}{2}(\Vert \dot{\bm{x}}(t)\Vert_2^2+\delta^2 | \frac{\dot{m}(t)}{m(t)}|^2)m(t)\mathrm{d}t \\
&\text{s.t.} \ \ \ \ \ \ \ \ \ \ \ \ \ \ \ \ \ \ \ \ \ m(0)=m_0,m(1)=m_1,\bm{x}(0)=\bm{x}_0,\bm{x}(1)=\bm{x}_1&
\end{aligned}
\end{equation}

Following \citep{chizat2018interpolating}, one can derive the Euler-Lagrange equation of (\ref{Dirac WFR}).
\begin{equation}
\left \{
\begin{aligned}
&\frac{\mathrm{d}}{\mathrm{d}t}(m\dot{\bm{x}})=\bm{0}\quad\quad &(\frac{\partial}{\partial\bm{x}}=\frac{\mathrm{d}}{\mathrm{d}t}\frac{\partial}{\partial\dot{\bm{x}}})\\
&\frac{1}{2}|\dot{\bm{x}}|^2-\frac{\delta^2\dot{m}^2}{2m^2}=\delta^2\frac{\ddot{m}m-\dot{m}^2}{m^2}\quad\quad &(\frac{\partial}{\partial m}=\frac{\mathrm{d}}{\mathrm{d}t}\frac{\partial}{\partial\dot{m}})
\end{aligned}
\right .
\end{equation}
The first equation stands for momentum conservation, which yields
\begin{equation}
m\dot{\bm{x}}=\bm{\omega}
\end{equation}
for some constant vector \(\bm{\omega}\). By plugging in \(\dot{\bm{x}}=\frac{\bm{\omega}}{m}\), the second equation becomes 
\begin{equation}
2\ddot{m}m-\dot{m}^2=|\bm{\omega}|^2/\delta^2
\end{equation}
which describes the evolution of mass. Solutions of the second order ODE are of the form \(m(t)=At^2+Bt+C\). Plugging in this ansatz, one can derive a set of equations of \(A,B,C\). 
\begin{equation}
\left \{
\begin{aligned}
&C=m_0\\
&A+B+C=m_1\\
&4AC-B^2=|\bm{\omega}|^2/\delta^2
\end{aligned}
\right .
\end{equation}
The existence of the solution has also been proved by \citep{chizat2018interpolating}. This optimal transport trajectory between two Diracs is called travelling Dirac. Given the closed form of travelling Dirac, one can directly calculate (\ref{Dirac WFR}) as
\begin{equation}
\text{WFR-DD}_{\delta}^2(m_0\delta_{\bm{x}_0},m_1\delta_{\bm{x}_1}) = 2\delta^2(m_0+m_1-2\sqrt{m_0m_1}\operatorname{\overline{cos}}(\frac{\Vert\bm{x}_0-\bm{x}_1\Vert_2}{2\delta})) 
\end{equation}
Thank to this explicit form of Dirac-Dirac WFR cost, the WFR cost between two measures can be written in a static form 
\begin{equation}
\text{WFR}_{\delta}^2(\mu_0,\mu_1)= 2\delta^2\inf_{(\gamma_0,\gamma_1)\in (\mathcal{M}_+(\mathcal{X}^2))^2} \int_{\mathcal{X}^2} \Big(\gamma_0(\bm{x},\bm{y})+\gamma_1(\bm{x},\bm{y})-2\sqrt{\gamma_0(\bm{x},\bm{y})\gamma_1(\bm{x},\bm{y})}\operatorname{\overline{cos}}(\frac{\Vert\bm{x}-\bm{y}\Vert_2}{2\delta})\Big) \mathrm{d} \bm{x}\mathrm{d} \bm{y}
\end{equation}
which is (\ref{static WFR}). Hence, the WFR problem (\ref{Dynamic WFR}) is decoupled into two parts: the static WFR semi-coupling and travelling Dirac. The former describes the mass transportation plan i.e. how much mass to be sent from source \(\bm{x}\) and how much to be received at target \(\bm{y}\), while the latter describes what happens during the transportation process i.e. the variation of the position and mass of the specific Dirac given the source and target.

The static WFR is further proved to be equivalent to the OET problem (\ref{OET}) which is easy to solve.

\subsection{The travelling Dirac for general RUOT}
For general RUOT problem (\ref{RUOT}), in order to obtain the semi-coupling, one need to find a static form for it. We tried to follow \citep{chizat2018interpolating} to find the travelling Dirac of the general RUOT problem. We first write down the RUOT between two Diracs.
\begin{equation}
\label{Dirac RUOT}
\begin{aligned}
&\text{RUOT-DD}(m_0\delta_{\bm{x}_0},m_1\delta_{\bm{x}_1}) = \inf_{m,\bm{x}} \int_0^1\frac{1}{2}\Big(\Vert \dot{\bm{x}}(t)\Vert_2^2+ \Psi(\frac{\dot{m}(t)}{m(t)})\Big)m(t)\mathrm{d}t \\
&\text{s.t.} \ \ \ \ \ \ \ \ \ \ \ \ \ \ \ \ \ \ \ \ \ m(0)=m_0,m(1)=m_1,\bm{x}(0)=\bm{x}_0,\bm{x}(1)=\bm{x}_1&
\end{aligned}
\end{equation}
Then, we derive the Euler-Lagrange equation of it.
\begin{equation}
\left \{
\begin{aligned}
&\frac{\mathrm{d}}{\mathrm{d}t}(m\dot{\bm{x}})=\bm{0}\quad\quad &(\frac{\partial}{\partial\bm{x}}=\frac{\mathrm{d}}{\mathrm{d}t}\frac{\partial}{\partial\dot{\bm{x}}})\\
&|\dot{\bm{x}}|^2+\Psi-\frac{\dot{m}}{m}\Psi'=\frac{\ddot{m}m-\dot{m}^2}{m^2}\Psi''\quad\quad &(\frac{\partial}{\partial m}=\frac{\mathrm{d}}{\mathrm{d}t}\frac{\partial}{\partial\dot{m}})
\end{aligned}
\right .
\end{equation}
The first equation stands for the conservation of momentum, while the second equation again describes the evolution of mass. Plugging in \(\dot{\bm{x}}=\frac{\bm{\omega}}{m}\), the second equation is of the form
\begin{equation}
|\bm{\omega}|^2+m^2\Psi-m\dot{m}\Psi' = (\ddot{m}m-\dot{m}^2)\Psi''
\end{equation}
The complex second order ODE has no analytic solution for most of the choice of \(\Psi\). If we want to derive an analytic solution, \(\Psi\) should be well chosen to make the above equation simple enough. To get rid of more complex terms, one may want \(\Psi''\) to be a constant instead of a function of \(\frac{\dot{m}}{m}\). Hence, for second order differentiable functions, a proper choice for \(\Psi\) is a polynomial with degree \(\le2\). Taking \(\Psi(z)=az^2+bz+c\), we have
\begin{equation}
\label{supp eq:general E-L}
|\bm{\omega}|^2+cm^2= 2a\ddot{m}m-a\dot{m}^2
\end{equation}
When \(c=0\), it is WFR; when \(ac<0\), the mass evolves in cosine law; when \(ac>0\), the mass evolves in hyperbolic cosine law. Note that (\ref{supp eq:general E-L}) is independent of the first order term coefficient \(b\) since the total cost induced by it is actually a constant \(\frac{b}{2}\int_0^1\dot{m}(t)\mathrm{d}t=\frac{b}{2}(m_1-m_0)\).

For the RUOT problem with growth penalty like (\ref{supp eq:dual growth penalty}) or other general forms, the essential difficulty of  deriving a static form for it is that the  equation (\ref{supp eq:general E-L}) is so hard to solve analytically. Though one can still write the RUOT cost between two measures as the integral of RUOT costs between Diracs  
\begin{equation}
\label{supp:RUOT Dirac integral}
\text{RUOT}(\mu_0,\mu_1) = \inf_{(\gamma_0,\gamma_1)\in (\mathcal{M}_+(\mathcal{X}^2))^2} \int_{\mathcal{X}^2}  \, \text{RUOT-DD}(\gamma_0(\bm{x},\bm{y})\delta_{\bm{x}},\gamma_1(\bm{x},\bm{y})\delta_{\bm{y}})\mathrm{d} \bm{x}\mathrm{d} \bm{y},
\end{equation}
the RUOT cost between two Diracs (RUOT-DD) is intractable, hence it is challenging to solve the semi-coupling through this approach.


\newpage
\section{Inference workflow}
\label{Appendix:algorithm}

\begin{algorithm}[h!]
\caption{Continuous inference workflow}
\label{alg:continuous inference}
\begin{algorithmic}[1]
\Require Data at initial time point $\mathcal{D}_0$, diffusion parameter $\nu$, timestep \(\Delta t\), learned vector net $\bm{v_{\theta}}(\bm{x},t)$, growth rate net $g_{\bm{\theta}}(\bm{x},t)$, score net $\bm{s}_{\bm{\theta}}(\bm{x},t)$.
\For {\(\bm{x}\) in \(\mathcal{D}_0\)}
    \State \(\omega=1\)
    \While {simulation}
        \State \(\bm{\xi}\sim\mathcal{N}(\bm{0},\Delta t\mathrm{I})\)
        \State $\bm{x} \gets \bm{x}+\Big(\bm{v_{\theta}}(\bm{x},t)+\frac{\nu^2}{2}\bm{s}_{\bm{\theta}}(\bm{x},t)\Big)\Delta t+\nu\bm{\xi}$
        \State $\omega \gets \omega \cdot e^{g_{\bm{\theta}}(\bm{x},t)\Delta t}$
    \EndWhile
    \State \(\mathcal{D}_{weight}\) append \((\bm{x},\omega)\)
\EndFor
\State \Return \(\mathcal{D}_{weight}\)
\end{algorithmic}
\end{algorithm}

\begin{algorithm}[h!]
\caption{Branching inference workflow}
\label{alg:branching inference}
\begin{algorithmic}[1]
\Require Data at initial time point $\mathcal{D}_0$, diffusion parameter $\nu$, timestep \(\Delta t\), learned vector net $\bm{v_{\theta}}(\bm{x},t)$, growth rate net $g_{\bm{\theta}}(\bm{x},t)$, score net $\bm{s}_{\bm{\theta}}(\bm{x},t)$.
\State \(\mathcal{D}_{next} \gets \mathcal{D}_0\)
\While {simulation}
    \State \(\mathcal{D} \gets \mathcal{D}_{next}\)
    \For {\(\bm{x}\) in \(\mathcal{D}\)}
        \State \(\bm{\xi}\sim\mathcal{N}(\bm{0},\Delta t\mathrm{I})\)
        \State $\bm{x} \gets \bm{x}+\Big(\bm{v_{\theta}}(\bm{x},t)+\frac{\nu^2}{2}\bm{s}_{\bm{\theta}}(\bm{x},t)\Big)\Delta t+\nu\bm{\xi}$
        \State \(\alpha\sim\mathcal{U}[0,1]\)
        \If{\(\alpha\le|g_{\bm{\theta}}(\bm{x},t)\Delta t|\)}
            \If{\(g_{\bm{\theta}}(\bm{x},t)\ge0\)} 
                \State \(\mathcal{D}_{next}\) append \(\bm{x}\)
                \State \(\mathcal{D}_{next}\) append \(\bm{x}\) \quad (\(\bm{x}\) divides into two)
            \Else
                \State pass \quad (\(\bm{x}\) dies)
            \EndIf
        \Else
            \State \(\mathcal{D}_{next}\) append \(\bm{x}\) \quad (No branching)
        \EndIf
    \EndFor
\EndWhile
\For{\(\bm{x}\) in \(\mathcal{D}_{next}\)}
    \State \(\mathcal{D}_{weight}\) append \((\bm{x},1)\)
\EndFor
\State \Return \(\mathcal{D}_{weight}\)
\end{algorithmic}
\end{algorithm}

\section{Advice for practitioners}
We provide the following practical advice for future users of USB. 

\textbf{Hardware Setup.} Due to our simulation-free continuous measure flow design, practitioners do not need expensive multi-GPU clusters. A standard workstation with a moderate CPU (for saving the global coupling matrix) and a basic entry-level GPU is completely sufficient to train USB on tens of thousands of cells (since VRAM cost is less than 1 GB on 49302 cells data; you can also set larger batch size to use more VRAM). For smaller dataset, you can run USB on your laptop. For detailed computational cost, see \ref{appendix:comupational cost}.

\textbf{Data Suitability.} Practitioners can safely apply USB's WFR approximation to standard single-cell RNA-seq datasets. Since biological mass changes (proliferation/apoptosis) are typically gradual between observed time points, the moderate unbalance assumption always holds (see \ref{appendix:WFR approximation}).

\textbf{Growth penalization.} For users with biological background, if you want to explicitly inject biological prior to growth, please tune \(\delta\) instead of \(\nu, \lambda\). A large \(\delta\) means that a small growth rate is preferred. 

\textbf{Branching inference.} When simulating trajectories, we use one-hot offspring distribution (restricting to binary split or death) as a default (\ref{supp:branching inference}). However, practitioners with specific domain knowledge can flexibly adjust the distribution to simulate different underlying discrete biological events (e.g. when there is a constant apoptosis rate) without retraining the model.

\section{The Use of Large Language Models (LLMs)}
LLMs were used only for grammatical correction to enhance the overall readability of this paper.

\end{document}